\documentclass{article}
\usepackage{iclr2027_conference,times}
\iclrfinalcopy
\usepackage{amsmath,amsfonts,bm}

\def\eqref#1{equation~\ref{#1}}

\def\1{\bm{1}}

\DeclareMathAlphabet{\mathsfit}{\encodingdefault}{\sfdefault}{m}{sl}
\SetMathAlphabet{\mathsfit}{bold}{\encodingdefault}{\sfdefault}{bx}{n}

\renewcommand{\eqref}[1]{\textup{(\ref{#1})}}

\usepackage{amsmath,amssymb,amsthm,mathtools,booktabs}
\usepackage{hyperref}
\hypersetup{hidelinks}
\usepackage{url}
\usepackage{thmtools,thm-restate}
\usepackage{tikz}
\usepackage{float}
\usepackage{subcaption}
\usepackage{graphicx,xcolor}
\usepackage{microtype}

\newcommand{\LN}{\mathrm{LN}}

\newcommand{\one}{\mathbf{1}}

\newcommand{\Tset}{\mathcal{T}_L}
\newcommand{\yhat}{\hat{y}}
\newcommand{\Bmin}{B_{\min}}

\definecolor{fcoral}{HTML}{C24E28}
\definecolor{fteal}{HTML}{15805E}

\theoremstyle{plain}
\newtheorem{theorem}{Theorem}
\newtheorem{lemma}{Lemma}

\newtheorem{proposition}{Proposition}

\title{Counting with Attention and LayerNorm:\\
Norm Scaling and Representation Geometry}
\author{Ayyub Shaffy\quad Manoj Dhakal\quad Safal Shrestha\quad Keith W. Ross \\
New York University Abu Dhabi \\
\texttt{\{am12321,md5121,ss13750,keithwross\}@nyu.edu}
}
\begin{document}

\maketitle
\lhead{Preprint. Under review.}

\begin{abstract}
Counting is a simple task on which transformers can nevertheless fail at long sequence lengths. We study the cost of counting in a single attention layer, up to a maximum length $L$, focusing on how softmax attention and layer normalization shape the representation from which the count is recovered.
We mainly focus on regression read-out but also consider classification read-out.
For regression, we show unnormalized linear attention  counts exactly at parameter norms independent of $L$. For softmax attention, regression cannot count exactly with finite parameters; it recovers the count after rounding, but only when the vector parameter norms grow with $L$. We measure this cost by $\Bmin$, the smallest bound that can be placed on every trainable vector
to ensure exact counting after rounding, and derive tight rates. At fixed model dimension, $\Bmin$ grows as $L$ without normalization, as $L^{3/2}$ under bare post-normalization, and as $L^{3/4}$ with learned post-normalization. The last rate is unexpected: post-normalization discards the magnitude that carries the count and might be expected to make counting harder, yet with a learnable gain it counts at a smaller norm than softmax without normalization. Bare pre-normalization is different: it bounds the attention scores, so beyond a finite length no parameter norm suffices. A classification read-out keeps every norm bounded under every placement, but a dense head stores one output vector per count and costs more in storage. Experiments on the post-normalized model reproduce this split between the two read-outs. Together, our theoretical and empirical findings characterize how self-attention, layer-normalization placement, and read-out choice shape the representation geometry and parameter cost of counting in the minimal single-head architecture studied here.

\end{abstract}

\section{Introduction}
Although counting is conceptually simple, modern transformer models often fail to count reliably over long sequence lengths $L$. Recent empirical and theoretical investigations reach conflicting conclusions regarding the root cause of this failure. Some studies identify Layer Normalization (LN) and the softmax function as important sources of difficulty due to scale erasure \citep{ouellette2023counting}, others focus on embedding crowding when vocabulary size approaches model dimension \citep{yehudai2024count}, while others emphasize positional encoding constraints \citep{cho2024position, mcleish2024abacus}. Existing constructions do not by themselves determine how the placement of layer normalization changes the parameter norms needed as the counting range grows.

In this paper, we isolate the mathematical properties of counting within a single attention sublayer by analyzing a basic benchmark task: counting occurrences of spaces in a sentence, for sentences of maximum length $L$ tokens. We evaluate three normalization placements---no LN, pre-LN, and post-LN---with bare and learnable normalization. We primarily address regression read-out regimes but also consider classification. For regression, with one exception, we establish that model parameters can be configured such that a single-layer attention mechanism recovers the count up to sequence length $L$ once the prediction is rounded. The exception is bare pre-LN, whose attention scores are bounded, so that beyond a finite length no parameter norm suffices. For the other softmax regression models, correct counting requires parameter norms that grow with $L$, at a rate that depends on where the normalization sits. The geometry of the boundary representation explains why: each placement compresses the counts in its own way, and the read-out must undo that compression.

Our main contributions are as follows:

\begin{itemize}
    \item We prove that a single attention sublayer---without feed-forward sublayers or positional encodings---can count up to any fixed maximum length $L$ under regression followed by rounding, except for the finite cap of bare pre-LN softmax regression. Unnormalized linear attention counts exactly at a length-independent norm budget, whereas the softmax regression models require growing norms.

    \item We derive tight bounds for the minimum norm budget $\Bmin(L)$ under regression followed by rounding:
$\Theta(\max\{L,\,d^{1/4}\sqrt{\log L}\})$ for unnormalized softmax, $\Theta(L^{3/2}/\sqrt d)$ for bare post-LN, $\Theta(L^{3/4}/d^{1/4})$ for post-LN with a learned gain and bias, and $\Theta(L/d^{1/4})$ for learnable pre-LN, where $d$ is the model dimension. Surprisingly, at fixed dimension post-LN with a learned gain has the slowest budget growth of all the softmax variants, even though normalization discards the magnitude that carries the count.

    \item We show that the rates, including this surprising one, follow from the geometry of the representation. Softmax attention places the counts on a bounded segment, and post-normalization places them on a circular arc. The arc stays nearly straight for longer, so a read-out does not have to stretch the small gaps between counts as much.

    \item We show that bare pre-LN is the one placement that larger parameters cannot rescue: normalizing the embeddings bounds the attention scores and caps the counting range at $L_{\mathrm{cap}}(d)=\Theta(e^{\sqrt d})$ under regression.

    \item Classification removes the cap and counts at bounded norms under every placement. Its dense head uses $\Theta(L)$ times the storage of a regression read-out at a common precision, while the minimum winning margins in our constructions shrink with $L$.

    \item We use L-BFGS to train both read-outs under post-normalization from random initialization: the regression norm grows with $L$, on or above the predicted band, while the classifier keeps bounded norms and collapsing margins. In the unnormalized model trained with Adam, the saturation scale $\gamma$, not the parameter norm, is the quantity that lags during training.
\end{itemize}
Overall, our analysis uncovers how representation geometry and normalization placement fundamentally shape the parameter efficiency and feasibility of counting with attention.
\section{Related work}
\label{sec:related}

Counting can be implemented sequentially, with one increment per token. In a literal running-counter implementation, a token cannot read a count its predecessor is still computing in the same layer, so each handoff costs a layer; this limits that implementation rather than all transformer counting algorithms (Appendix~\ref{app:necessity}). Other work studies transformer expressivity, parallel computation, and the additional computation available through chain of thought \citep{merrill2023parallelism,merrill2022saturated,merrill2024cot,liu2023shortcuts,hahn2020theoretical}. \citet{hasani2026mechanistic} find that language models accumulate counts across layers, so that precision degrades on large counts, and recover accuracy by splitting the input into smaller parts that are counted separately. A separate line asks which counting-like properties transformers can recognize \citep{chiang2022overcoming,weiss2021thinking,bhattamishra2020ability,saelzer2026counting}. Among these, \citet{chiang2022overcoming} construct transformers that decide \textsc{Parity}, whether a bit string has an odd number of 1s, correctly at every length, but whose output probabilities approach one half as inputs grow longer. Adding layer normalization rescales the representation, so the output stays confident at every length.

\citet{behrens2025counting} study the histogram task, where each token reports how many times it appears in the sequence, and give explicit weight constructions for a family of one-layer models. With a boundary token, the model can compare tokens pairwise and a single hidden neuron suffices; without one, their softmax constructions keep an inventory of the alphabet, with one hidden neuron per symbol. Their analysis does not consider layer normalization. With no feed-forward layer and a linear read-out, we show that softmax attention cannot count without a parameter norm that grows with the length, and we give the rate at each placement of the normalization. \citet{yehudai2024count} bound counting by the embedding dimension $d$ against the vocabulary size $m$: their histogram construction needs enough dimension to represent the vocabulary, while their CountAttend construction needs a feed-forward network whose Lipschitz constant grows with the counting range. Their general lower bound also implies that the model's Lipschitz constants must grow with the context length. We instead fix a two-symbol content alphabet, so that the vocabulary is not the bottleneck, and study the norm cost of a linear read-out, with normalization included explicitly. Empirically, \citet{ouellette2023counting} identify layer normalization and softmax averaging as obstacles to count generalization, and show that a small encoder with neither one counts correctly on input sizes beyond those it was trained on.

Counting is one case of length generalization; arithmetic is another widely studied instance \citep{lee2024teaching, zhou2024length}. \citet{cho2024position} and \citet{mcleish2024abacus} write task structure into the positions to improve length generalization in arithmetic tasks.

Earlier work establishes whether transformers can count, at what depth, and with which constructions. It identifies layer normalization and softmax as empirical obstacles \citep{ouellette2023counting}, but also shows that layer normalization can help \citep{chiang2022overcoming}. \citet{yehudai2024count} show that parameters must grow with the context length, but do not ask how the placement of layer normalization changes that growth. We answer this question for each placement, no LN, pre-LN and post-LN, with bare and learnable normalization, and for both regression and classification read-outs.

\section{Problem setup}
\label{sec:setup}

We study counting on a single-layer attention model, reduced to the smallest architecture on which the phenomena of interest can be established analytically. The task is to predict the number of occurrences of a designated token in an input that also contains distractor tokens. We use space counting as the concrete instance, treating each character as a token and appending a boundary symbol, and represent all letters by a single symbol with a shared embedding.

Formally, fix an alphabet of two content symbols, a space $s$ and a letter $\ell$, together with a distinguished boundary symbol $\#$. An input is a sequence of content symbols terminated by the boundary symbol. We write $n$ for the number of spaces and $m$ for the number of letters, and require $n+m\le L$, where $L$ is a fixed maximum length. Because the model below is permutation-invariant across content positions, its output depends on the input only through the pair $(n,m)$. We therefore identify an input with $n$ spaces and $m$ letters as a \emph{type} $(n,m)$ input, where $(n,m)\in\Tset:=\{(n,m)\in\mathbb N_0^2 : n+m\le L\}$. The goal is to predict the space count $n$.

\paragraph{Model.}
We use a single layer with one single-head attention sublayer, with no feed-forward sublayer, positional encoding, residual connection, or separate read-out bias. We denote by $d$ the model dimension and take $d\ge3$, $L\ge4$ in the main theorems. The space, letter, and boundary symbols receive embeddings $e_s,e_\ell,e_\#\in\mathbb R^d$, so an input of type $(n,m)$ presents $n$ copies of $e_s$, $m$ copies of $e_\ell$, and one copy of $e_\#$ at the end. The head computes scores of the boundary query against each key,
\begin{equation}
c_s = \frac{(W_Qe_\#)^\top(W_Ke_s)}{\sqrt d}, \qquad
c_\ell = \frac{(W_Qe_\#)^\top(W_Ke_\ell)}{\sqrt d}, \qquad
c_\# = \frac{(W_Qe_\#)^\top(W_Ke_\#)}{\sqrt d}.
\label{eq:scores}
\end{equation}
We fix $W_Q=W_K=W_V=I$ in all main results, so that $c_s=e_s^\top e_\#/\sqrt d$, $c_\ell=e_\ell^\top e_\#/\sqrt d$, and $c_\#=\|e_\#\|^2/\sqrt d$. Learned query and key projections appear only in the extension of Appendix~\ref{app:pre-extensions}. We consider two attention rules, treated as separate architectural choices. Under softmax attention, the weight on a token is proportional to the exponential of its score, so on type $(n,m)$ the boundary representation is
\begin{equation}
h_{\mathrm{softmax}}(n,m) = \frac{n\,e^{c_s}e_s + m\,e^{c_\ell}e_\ell + e^{c_\#}e_\#}
{n\,e^{c_s} + m\,e^{c_\ell} + e^{c_\#}}.
\label{eq:hsoftmax}
\end{equation}
Under linear attention, by which we mean unnormalized dot-product attention rather than a kernelized rule, the weight on a token is its score:
\begin{equation}
h_{\mathrm{lin}}(n,m) = n\,c_s e_s + m\,c_\ell e_\ell + c_\# e_\#.
\label{eq:hlin}
\end{equation}

The main text studies the \textbf{regression} read-out, a scalar prediction $\yhat(n,m)=w^\top h(n,m)$ with $w\in\mathbb R^d$, where $h$ is $h_{\mathrm{softmax}}$ or $h_{\mathrm{lin}}$, possibly after normalization. Section~\ref{sec:classification} studies the \textbf{classification} read-out, with class vectors $\{w_c\}_{c=0}^L$, biases $b_c$, and prediction $\arg\max_c(w_c^\top h(n,m)+b_c)$. We collect the trainable quantities into $\theta=(e_s,e_\ell,e_\#,w)$, augmented by the normalization parameters when present.

\paragraph{Error and budget.}
We measure a model by its worst-case absolute error over all admissible inputs, and a model class by the best worst-case error over all parameters:
\begin{equation}
E(\theta;L) = \max_{(n,m)\in\Tset}\bigl|\yhat_\theta(n,m) - n\bigr|, \qquad
E^\star(L) = \inf_\theta E(\theta;L).
\label{eq:Estar}
\end{equation}
A model class \emph{recovers the count exactly} if $E^\star(L)=0$ and the infimum is attained by finite parameters, and \emph{arbitrarily closely} if $E^\star(L)=0$ but the infimum need not be attained. It has a \emph{finite rounding cap} if, beyond some length, error below $1/2$ is impossible at every norm budget.

Parameter norms are constrained in practice, by regularization and by numerical range, so we also constrain them. For $B>0$, let $\Theta_B=\{\theta:\max\{\|e_s\|,\|e_\ell\|,\|e_\#\|,\|w\|\}\le B\}$, with each learned gain and normalization bias also bounded by $B$ when present, and write $E^\star(L;B)=\inf_{\theta\in\Theta_B}E(\theta;L)$. Error strictly below $1/2$ guarantees correct nearest-integer rounding on every input, so we define
\begin{equation}
\Bmin(L,d):=\inf\{B>0:E^\star(L,d;B)<1/2\},
\label{eq:rounding-budget}
\end{equation}
with the infimum of the empty set taken to be $+\infty$. We suppress $d$ when it is fixed. These are expressivity bounds: they say which parameters exist, not that training finds them. Read the other way, $L_{\max}(B)=\sup\{L:E^\star(L;B)<\tfrac12\}$ is how far an architecture can count within budget $B$, and a finite cap means  $\sup\{L : E^\star(L,d) < 1/2\} < \infty$.

\paragraph{Why a peak-norm budget.}
We bound each trainable vector separately, for two reasons. First, this is how norm constraints are commonly imposed in training. Max-norm constraints bound each incoming weight vector independently and were used alongside dropout in early work \citep{hinton2012improving,srivastava2014dropout}. A small $\Bmin$ means that counting remains possible under a tight constraint of this kind. Second, numerical formats limit the largest value a model can store. A single large entry can exceed the range of a low-precision format, and under shared-scale quantization it also reduces the resolution left for the other entries in the same tensor \citep{dettmers2022llmint8}. Since the largest entry and the largest vector norm differ by at most a factor $\sqrt{d}$, a smaller $\Bmin$ means smaller values to represent. Because vectors are bounded separately, $\Bmin$ depends on how the model is parameterized: a coefficient written as a product of two vectors can be supplied by both together. Section~\ref{sec:postln} returns to this, since it explains the effect of a learned gain.

\section{Counting with Regression}
\subsection{Counting without layer normalization}
\label{sec:nonorm}

We first analyze the model without normalization, which isolates the effect of the attention rule. Write $\alpha=w^\top e_s$ for the read-out of the space embedding, and write $\gamma(\theta)=e^{c_\#-c_s}$ for the \emph{saturation scale}, the attention weight on the boundary token relative to one space token. Let $\Theta_\gamma=\{\theta\mid\gamma(\theta)=\gamma\}$ and $E^\star(\gamma,L,d)=\inf_{\theta\in\Theta_\gamma}E(\theta;L)$.
Consecutive counts sit only a small distance apart in the representation, and the read-out must turn that distance into a change of one in the prediction. We call the factor by which it stretches the representation the \emph{magnification}. In our constructions it is $\gamma$ without normalization and $1/\omega$ after post-normalization (below and Section~\ref{sec:postln}).
For all theorems, proofs are provided in the appendix.

\begin{theorem}[Regression without layer normalization]
\label{thm:nonorm}
\leavevmode
\begin{enumerate}
\item[(a)] \textbf{Linear attention.} There exist a read-out $w$ and linearly independent embeddings $e_s,e_\ell,e_\#$, independent of $L$, with $\yhat(n,m)=n$ for all $(n,m)\in\Tset$. Dimension $d=3$ suffices, and $\Bmin^{\mathrm{lin}}(L,d)=\Theta(d^{1/8})$, which is $\Theta(1)$ in $L$ at fixed dimension.

\item[(b)] \textbf{Softmax attention.} For $\gamma\ge1$, the optimal worst-case error is $E^\star(\gamma,L,d)=\Theta(L^2/(L+\gamma))$. Exact predictions are impossible at finite parameters, but the error tends to zero as $\gamma\to\infty$, so $E^\star(L,d)=0$. With rounding, exact predictions are achievable, with $\Bmin^{\mathrm{soft}}(L,d)=\Theta\bigl(\max\{L,\,d^{1/4}\sqrt{\log L}\}\bigr)$, which is linear in $L$ at fixed dimension.
\end{enumerate}
The implicit constants are independent of $L$ and $d$, and in the error bound also of $\gamma$.
\end{theorem}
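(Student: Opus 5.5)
For part (a) I will give an explicit construction and then a matching lower bound on the norm. With linear attention, $\yhat(n,m)=n\,c_s\,w^\top e_s+m\,c_\ell\,w^\top e_\ell+c_\#\,w^\top e_\#$. So $\yhat(n,m)=n$ on all of $\Tset$ exactly when three conditions hold: $c_s\,w^\top e_s=1$, $c_\ell\,w^\top e_\ell=0$, and $c_\#\,w^\top e_\#=0$. Recall $c_s=e_s^\top e_\#/\sqrt d$, $c_\ell=e_\ell^\top e_\#/\sqrt d$, and $c_\#=\|e_\#\|^2/\sqrt d$.

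The construction works inside a fixed 3-dimensional coordinate subspace. Take $e_\#=a u_1$ and $e_s=a(u_1+u_2)/\sqrt2$, so that $c_s=a^2/\sqrt{2d}$. Take $e_\ell=a u_3$, which gives $c_\ell=0$. Take $w=b\,u_2$, which gives $w^\top e_\#=0$ and $w^\top e_s=ab/\sqrt2$. The three embeddings are linearly independent, and the only remaining condition is $a^3b/(2\sqrt d)=1$. Setting $a=b$ gives peak norm $\Theta(d^{1/8})$.

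For the matching lower bound, the condition $c_s\,w^\top e_s=1$ together with Cauchy--Schwarz gives $1\le\|e_s\|^2\|e_\#\|\,\|w\|/\sqrt d\le B^4/\sqrt d$. Hence $B\ge d^{1/8}$, and this is independent of $L$.

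For part (b), I will first reduce to a one-variable problem. Write $p_t=e^{c_t-c_s}$ for $t\in\{\ell,\#\}$, so that $\gamma=p_\#$. Also write $\alpha=w^\top e_s$, $\beta=w^\top e_\ell$, $\delta=w^\top e_\#$. Then
\[
\yhat(n,m)=\frac{n\alpha+m p_\ell\beta+\gamma\delta}{n+mp_\ell+\gamma}.
\]
Requiring $\yhat=n$ everywhere is a polynomial identity in $(n,m)$. Comparing the $n^2$ coefficients forces $0=1$, so exact prediction is impossible at finite parameters. For the error rate, restrict first to $m=0$, where $\yhat(n,0)=(n\alpha+\gamma\delta)/(n+\gamma)$. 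The lower bound $E^\star\gtrsim L^2/(L+\gamma)$ will come from a Chebyshev-type argument. The error $g(n)=\yhat(n,0)-n$ satisfies $(n+\gamma)g(n)=n\alpha+\gamma\delta-n^2-\gamma n$, a quadratic in $n$ with leading coefficient $-1$. Take its second difference over the nodes $0$, $L/2$, $L$ (rounded to integers). That second difference equals $-L^2/2$, while it is bounded in absolute value by $O\bigl((L+\gamma)\max_n|g(n)|\bigr)$.

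For the matching upper bound, set $p_\ell\beta=0$, which is achievable with $c_\ell$ chosen freely. Then choose $\alpha\approx\gamma+L$ and $\delta$ so that the error $-n(n-L)/(n+\gamma)$ is balanced, giving $O(L^2/(L+\gamma))$ uniformly in $m$. Letting $\gamma\to\infty$ gives $E^\star(L)=0$.

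For the rounding budget, error below $1/2$ needs $\gamma\gtrsim L^2$ and hence $\alpha\gtrsim L^2$. The key step is the lower bound on $B$. First, $\alpha=w^\top e_s\le B^2$ gives $B\gtrsim L$. Second, $\log\gamma=c_\#-c_s\le 2B^2/\sqrt d$ combined with $\gamma\gtrsim L^2$ gives $B\gtrsim d^{1/4}\sqrt{\log L}$. For the upper bound I will realize both simultaneously with vectors of norm $\Theta(\max\{L,d^{1/4}\sqrt{\log L}\})$. Concretely, take $e_\#$ and $e_s$ in near-opposite directions so that $c_\#-c_s\approx 2\log L+O(1)$ with $\|e_\#\|,\|e_s\|\asymp d^{1/4}\sqrt{\log L}$. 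Use a separate coordinate for $w^\top e_s$ and $w^\top e_\#$ to set $\alpha$ and $\delta$. Make $e_\ell$ orthogonal to $w$, so that $\beta=0$; the letter terms then only add mass to the denominator. I need to check that this extra mass does not break the error bound. This is the main obstacle: the letter count enlarges the denominator, so the construction must either keep $p_\ell$ negligible, for instance via $c_\ell\ll c_s$ by taking $e_\ell$ anti-aligned with $e_\#$, or track the error uniformly in $m$. I would first try making $p_\ell\le 1/L^3$ at no extra norm cost, then verify $|\yhat-n|<1/2$ by direct computation. The remaining care is that $\alpha\approx L^2$ must be split as $\|w\|\,\|e_s\|$ with both factors at most $B$, which holds since $B\ge L$.
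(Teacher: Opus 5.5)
Your decomposition matches the paper's. In (a) you use the affine coefficients. In (b) you use the three-node obstruction on the letter-free slice: your second difference of $(n+\gamma)g(n)$ over $0,\lfloor L/2\rfloor,L$ is equivalent to the paper's interpolation identity $f(j)=(1-t)f(0)+tf(L)$ and gives the same $L^2/(4L+8\gamma)$. The budget lower bounds are the same Cauchy--Schwarz bounds $\alpha\le B^2$ and $\log\gamma\le 2B^2/\sqrt d$.

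\textbf{Gap in (a).} Your lower bound is proved for the wrong quantity. You use $c_s\,w^\top e_s=1$, which holds only for exact counting. But $\Bmin$ is an infimum over parameters with worst error below $1/2$, whose slope need not equal $1$ and whose letter and boundary coefficients need not vanish. Replace this step with the endpoint test. The conditions $|\yhat(0,0)|<1/2$ and $|\yhat(L,0)-L|<1/2$ give $L\,c_s w^\top e_s>L-1$, so the slope exceeds $1-1/L\ge 1/2$. Its absolute value is at most $B^4/\sqrt d$, so $B>2^{-1/4}d^{1/8}$.

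\textbf{The letters in (b).} Your error upper bound ``uniformly in $m$'' holds only if $p_\ell\lesssim 1$. With $\beta=0$ and $p_\ell$ large, an input such as $(L-1,1)$ has $\yhat$ near $0$ and error near $L$, far above $L^2/(L+\gamma)$ when $\gamma\gg L$. Once $p_\ell\le 1$, nothing more is needed, because $n+mp_\ell\le L$:
\begin{itemize}
\item your choice $\alpha=\gamma+L$, $w^\top e_\#=0$ gives $0\le\yhat-n\le n(L-n)/(n+\gamma)$;
\item the paper's simpler choice $c_\ell=c_s=0$, $\alpha=\gamma$ gives $n-\yhat=n(n+m)/(n+m+\gamma)\le L^2/(L+\gamma)$, worst at $(L,0)$.
\end{itemize}

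The ``main obstacle'' you flag is created by your near-antipodal choice of $e_s$. That choice makes $c_s<0$, so $p_\ell=e^{c_\ell-c_s}>1$ for any $e_\ell\perp e_\#$, which then forces the extra anti-alignment of $e_\ell$. It saves only a constant factor in $\|e_\#\|$. The orthogonal choice $e_s=w=\sqrt\gamma f_1$, $e_\#=d^{1/4}\sqrt{\log\gamma}\,f_2$, $e_\ell=f_3$ with $\gamma=2L^2$ has error $L^2/(L+2L^2)<1/2$. Its budget is $\max\{\sqrt2 L,\,d^{1/4}\sqrt{\log(2L^2)},\,1\}=O(\max\{L,d^{1/4}\sqrt{\log L}\})$, and $p_\ell$ never needs to be negligible.

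\textbf{Smaller points in (b).}
\begin{itemize}
\item Your ``hence $\alpha\gtrsim L^2$'' does not follow from $\gamma\gtrsim L^2$ alone; it also needs the endpoint tests. With worst error $E<1/2$, the counts $0$ and $L$ give $\alpha\ge L-E+\gamma(1-2E/L)$.
\item Your exactness argument should be phrased on the slice $m=0$. There a monic quadratic would have to vanish at $L+1\ge 3$ integers, which is impossible. Coefficients cannot be compared directly on a finite grid.
\end{itemize}
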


\begin{figure}[t]
\centering
\resizebox{\linewidth}{!}{%
\begin{tikzpicture}[scale=1.0,>=stealth,font=\small]

\begin{scope}[xshift=0cm]
  \node[font=\scriptsize\bfseries,anchor=west] at (-0.05,5.02) {(a) Linear attention, no LN};

  \draw[gray!40,dashed,line width=0.3pt] (0.30,1.05) -- (1.15,1.377);
  \fill[gray!60] (0.30,1.05) circle (1.0pt);
  \node[gray!70,font=\scriptsize,anchor=north east,inner sep=1.5pt] at (0.26,1.00) {$0$};

  \foreach \n in {0,...,24}{
    \fill[fteal] ({0.95+0.037*\n},{1.30+0.111*\n}) circle (1.15pt);}
  \draw[fteal,opacity=0.40,line width=0.7pt,->] (1.88,4.03) -- (1.96,4.27);
  \fill[fteal] (0.95,1.30) circle (2.2pt);

  \node[font=\scriptsize,anchor=north,inner sep=3.5pt] at (1.14,1.20) {$n{=}0$: $c_\#e_\#$};
  \node[font=\scriptsize,anchor=east,inner sep=3.5pt] at (1.76,3.97) {$n{=}24$};
\end{scope}

\begin{scope}[xshift=4.15cm]
  \node[font=\scriptsize\bfseries,anchor=west] at (-0.05,5.02) {(b) Softmax attention, no LN};

  \draw[gray!40,dashed,line width=0.3pt] (0.30,1.05) -- (2.75,1.377);
  \draw[gray!40,dashed,line width=0.3pt] (0.30,1.05) -- (1.16,4.02);
  \fill[gray!60] (0.30,1.05) circle (1.0pt);
  \node[gray!70,font=\scriptsize,anchor=north east,inner sep=1.5pt] at (0.26,1.00) {$0$};

  \draw[fcoral,opacity=0.35,line width=0.7pt] (2.55,1.35) -- (1.10,3.90);
  \foreach \n in {0,...,24}{
    \pgfmathsetmacro{\lam}{\n/(\n+4)}
    \fill[fcoral] ({2.55-1.45*\lam},{1.35+2.55*\lam}) circle (1.15pt);}
  \fill[fcoral] (2.55,1.35) circle (2.2pt);
  \draw[fcoral,line width=0.7pt,fill=white] (1.10,3.90) circle (2.2pt);

  \draw[gray!65,line width=0.5pt,->] (2.639,1.362) -- (2.946,1.403);
  \draw[gray!65,line width=0.5pt,->] (1.124,3.987) -- (1.176,4.170);

  \node[font=\scriptsize,anchor=north,inner sep=3.5pt] at (2.55,1.26) {$n{=}0$: $e_\#$};
  \node[font=\scriptsize,anchor=east,inner sep=3.5pt] at (1.02,3.90) {$e_s$};
  \draw[gray!55,line width=0.3pt] (1.36,3.51) -- (1.73,3.41);
  \node[font=\scriptsize,anchor=west,inner sep=2pt] at (1.73,3.4) {$n{=}24$};
\end{scope}

\begin{scope}[xshift=8.30cm]
  \node[font=\scriptsize\bfseries,anchor=west] at (-0.05,5.02) {(c) Post-normalization};

  \draw[gray!55,->,line width=0.3pt] (0.40,1.15) -- (3.60,1.15)
    node[font=\scriptsize,right,inner sep=1.5pt] {$u$};
  \draw[gray!55,->,line width=0.3pt] (0.40,1.15) -- (0.40,3.90)
    node[font=\scriptsize,anchor=south east,inner sep=1.5pt] {$v$};

  \foreach \n in {0,1,2,3,4}{
    \draw[gray!35,line width=0.25pt] (0.40,1.15) -- ({3.20},{1.15+0.70*\n});}

  \draw[fcoral,opacity=0.35,line width=0.7pt] ([shift={(0.40,1.15)}]0:2.35) arc (0:81:2.35);
  \foreach \n in {0,...,24}{
    \pgfmathsetmacro{\ang}{atan(\n/4)}
    \fill[fcoral] ({0.40+2.35*cos(\ang)},{1.15+2.35*sin(\ang)}) circle (1.15pt);}

  \draw[fteal,opacity=0.35,line width=0.7pt] (3.20,1.15) -- (3.20,3.95);
  \foreach \n in {0,...,4}{ \fill[fteal] (3.20,{1.15+0.70*\n}) circle (1.15pt);}
  \draw[fteal,opacity=0.40,line width=0.7pt,->] (3.20,3.99) -- (3.20,4.24);

  \node[font=\scriptsize,fteal,anchor=east,inner sep=2pt] at (4.35,3.06) {$A+nC$};
  \node[font=\scriptsize,fcoral,anchor=west,inner sep=2pt] at (0.92,3.86) {$\mathrm{LN}\bigl(h(n)\bigr)$};
  \node[font=\scriptsize,anchor=west,inner sep=2pt] at (0.45,3.2) {$n{=}24$};
  \node[font=\scriptsize,anchor=north,inner sep=3pt] at (2.75,1.2) {$n{=}0$};
\end{scope}

\begin{scope}[xshift=12.45cm]
  \node[font=\scriptsize\bfseries,anchor=west] at (-0.05,5.02) {(d) Bare pre-LN, softmax};

  \draw[gray!50,dashed,line width=0.35pt] (1.80,2.45) circle (1.42);
  \fill[gray!60] (1.80,2.45) circle (1.0pt);
  \node[gray!70,font=\scriptsize,anchor=north east,inner sep=1.5pt] at (1.75,2.40) {$0$};
  \draw[gray!60,line width=0.35pt] (1.80,2.45) -- (3.087,1.850);
  \draw[gray!60,line width=0.35pt] (1.80,2.45) -- (0.887,3.538);
  \node[gray!75,font=\scriptsize,anchor=north,inner sep=2.5pt] at (2.47,2.11) {$\sqrt d$};
  \node[gray!75,font=\scriptsize,anchor=east,inner sep=2.5pt]  at (1.29,2.96) {$\sqrt d$};

  \pgfmathsetmacro{\gammapre}{0.5*exp(sqrt(3)*(1-cos(155)))}

  \draw[fcoral,opacity=0.35,line width=0.7pt] (3.087,1.850) -- (0.887,3.538);
  \foreach \n in {0,...,24}{
    \pgfmathsetmacro{\lam}{\n/(\n+\gammapre)}
    \fill[fcoral] ({3.087-2.200*\lam},{1.850+1.688*\lam}) circle (1.15pt);}
  \fill[fcoral] (3.087,1.850) circle (2.2pt);
  \draw[fcoral,line width=0.7pt,fill=white] (0.887,3.538) circle (2.2pt);

  \node[font=\scriptsize,anchor=north,inner sep=3.5pt] at (3.2,1.80) {$n{=}0$: $z_\#$};
  \node[font=\scriptsize,anchor=east,inner sep=3.5pt] at (0.82,3.50) {$z_s$};

  \pgfmathsetmacro{\lamlast}{24/(24+\gammapre)}
  \draw[gray!55,line width=0.3pt]
    ({3.087-2.200*\lamlast},{1.850+1.688*\lamlast}) -- ++(0.57,0.25)
    node[text=black, font=\scriptsize,anchor=west,inner sep=2pt] {$n{=}24$};
\end{scope}

\node[font=\scriptsize,anchor=north west,text width=3.1cm,align=left] at (0,0.65)
  {Every space adds the same vector $c_se_s$};
\node[font=\scriptsize,anchor=north west,text width=3.0cm,align=left] at (4.15,0.65)
  {Steps shrink as the points approach $e_s$};
\node[font=\scriptsize,anchor=north west,text width=3.75cm,align=left] at (8.30,0.70)
  {Equal steps in $A + nC$ become unequal angular steps};
\node[font=\scriptsize,anchor=north west,text width=3.75cm,align=left] at (12.45,0.70)
  {Fixed endpoint norms bound the saturation scale};

\end{tikzpicture}%
}
\caption{Boundary representations $h(n,0)$ on space-only inputs. (a) Linear attention gives equally spaced points on a ray. (b) Unnormalized softmax compresses them along a segment. (c) Post-normalization maps the centered numerator onto an arc. (d) Bare pre-normalization constrains the segment endpoints to a sphere. Open circles mark limiting endpoints.}
\label{fig:flagship}
\end{figure}

As shown in the proof, linear attention counts because $h_{\mathrm{lin}}$ is affine in $(n,m)$: every space adds the same vector $c_se_s$, the representation moves along a ray with equal steps (Figure~\ref{fig:flagship}(a)), and a read-out that isolates that direction reads $n$ off exactly, with parameters that do not depend on $L$.

Softmax attention divides by the total attention weight with $n e^{c_s}+m e^{c_\ell}+e^{c_\#}$, which grows with every token. The representation $h(n,m)$ is a convex combination of the three embeddings $e_s$, $e_\ell$, $e_\#$, and stays inside the triangle they span no matter how long the input. On space-only inputs it moves along the segment from $e_\#$ toward $e_s$, at coordinate $\lambda_n=n/(n+\gamma)$ (Figure~\ref{fig:flagship}(b)). The steps shrink as $n$ grows, and an exact linear read-out would need them equal, so exact counting fails at every finite $\gamma$. Rounding only needs error below $1/2$, which the construction achieves by keeping all $L$ counts near the start of the segment, where $\lambda_n\approx n/\gamma$ and the steps are nearly equal; this requires $\gamma$ of order $L^2$. The read-out must then turn a displacement of order $1/\gamma$ per count into a change of one, so the magnification $\alpha=w^\top e_s$ must be of order $\gamma\approx L^2$. Since $\alpha\le\|w\|\|e_s\|$, the two vectors can supply it together with each only needing a norm of order $L$, which is the $\Theta(L)$ in part (b). The second term in the budget is the cost of the score gap $c_\#-c_s=\log\gamma$, which enters through an exponential and needs norm only of order $d^{1/4}\sqrt{\log L}$. Appendices~\ref{app:softmax-lemma} and~\ref{app:precision} give the lower bound and the exact spacing identity without any orthogonality choice.

\subsection{Post-normalization: the count becomes an angle}
\label{sec:postln}

We now add layer normalization after attention, so the boundary representation is formed by attention and then normalized before the read-out. Bare normalization, denoted $\LN_0$, subtracts the coordinate mean and rescales to a fixed length,
\begin{equation}
\LN_0(v):=\sqrt d\,\frac{Pv}{\|Pv\|_2},\qquad Pv:=v-\frac{\one^\top v}{d}\one,
\label{eq:s5-ln}
\end{equation}
so it keeps the direction of the centered vector and discards its magnitude. Learnable normalization is $\LN_{g,\beta}(v)=g\odot\LN_0(v)+\beta$ with $g,\beta\in\mathbb R^d$. We use stability constant $\varepsilon=0$, require every normalized vector to be well defined, and work in exact arithmetic.

\begin{theorem}[Regression with post-normalization]
\label{thm:post}
For either softmax or linear attention, bare post-normalization and post-normalization with learned parameters satisfy $E^\star(L)=0$, but no finite parameters achieve zero error.
\begin{enumerate}
\item[(a)] \textbf{Bare post-normalization.} Bounding each of $e_s,e_\ell,e_\#,w$ by $B$,
\begin{equation}
E^\star(L,d;B)=\Theta\!\left(\min\left\{L,\frac{L^3}{dB^2}\right\}\right),\qquad
\Bmin^{\mathrm{post}}(L)=\Theta\!\left(\frac{L^{3/2}}{\sqrt d}\right).
\label{eq:s5-post-budget}
\end{equation}
\item[(b)] \textbf{Learned gain and bias.}
Bounding each of $e_s,e_\ell,e_\#,g,\beta,w$ by $B$,
\begin{equation}
E^\star(L,d;B)=\Theta\!\left(
\min\left\{L,\frac{L^3}{dB^4}\right\}\right),\qquad
\Bmin^{\mathrm{post}}(L)=
\Theta\!\left(\frac{L^{3/4}}{d^{1/4}}\right).
\label{eq:s5-post-learned-budget}
\end{equation}
\end{enumerate}
The constants are independent of $L$, $d$, and $B$.
\end{theorem}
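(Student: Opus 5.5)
The plan is to reduce everything to space-only inputs, where the normalized representation lies on a planar arc, and then to read both the upper and the lower bound off a one-parameter family of functions.

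\emph{Reduction to space-only inputs.} For softmax attention the denominator of \eqref{eq:hsoftmax} is a positive scalar, which $\LN_0$ removes. So in both attention rules $\LN_0(h(n,m))=\sqrt d\,(A+nC+mD)/\|A+nC+mD\|$, with $A=P(e^{c_\#}e_\#)$, $C=P(e^{c_s}e_s)$ and $D=P(e^{c_\ell}e_\ell)$; for linear attention the exponentials are replaced by the scores $c_\#,c_s,c_\ell$. For the constructions I take $e_\ell\propto\one$, so that $D=0$ and letters are invisible. I also take $A\perp C$ and set $\omega=\|C\|/\|A\|$. With $w$ aligned to $C$, the prediction on type $(n,m)$ is
\[
\yhat(n)=\sqrt d\,\|w\|\,\frac{\omega n}{\sqrt{1+\omega^2n^2}}.
\]

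\emph{Upper bound for bare post-normalization.} Expanding gives $\yhat(n)=\sqrt d\|w\|\omega n\,(1-\tfrac12\omega^2n^2+O(\omega^4n^4))$. Choosing the slope $\sqrt d\|w\|\omega$ equal to $1$, or slightly above $1$ to balance the error, gives worst-case error $\Theta(L^3\omega^2)$ when $\omega L\lesssim1$.
\begin{itemize}
\item Under $\|w\|\le B$ the slope condition forces $\omega\gtrsim 1/(\sqrt d B)$, so the error is $O(L^3/(dB^2))$.
\item Making $\omega$ small is free: shrink $\|Pe_s\|$, or increase the gap $c_\#-c_s$, keeping all vectors bounded by $B$.
\item The trivial competitor, the constant prediction $L/2$ realized through a direction with $Pw\perp$ the arc, gives the $\min\{L,\cdot\}$.
\end{itemize}
Solving $L^3/(dB^2)\asymp 1/2$ gives $\Bmin=\Theta(L^{3/2}/\sqrt d)$. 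Letting $\omega\to0$ and $\|w\|\to\infty$ with the slope fixed shows $E^\star(L)=0$.

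\emph{Non-attainment.} For any finite parameters with $m=0$, zero error would give
\[
n^2\|A+nC\|^2=d\,(w^\top A+n\,w^\top C)^2\quad\text{for } n=0,\dots,L.
\]
The left side is a quartic in $n$ with leading coefficient $\|C\|^2$, and the right side is quadratic. Since $L+1\ge5$ points determine a quartic, the two sides must be identical polynomials, so $C=0$. Then $\yhat$ is constant in $n$, a contradiction. With the gain $g$ and bias $\beta$ added, the same argument applies with the quadratic $d(w^\top(g\odot\cdot))^2$ shifted by the constant $w^\top\beta$.

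\emph{Learned gain and bias.} The read-out is $w^\top(g\odot\LN_0(v))+w^\top\beta=(w\odot g)^\top\LN_0(v)+\text{const}$. The effective read-out vector $w\odot g$ has norm up to $\|w\|\|g\|_\infty\le B^2$, attained by putting $w$ and $g$ on a common coordinate direction of $C$, with $C$ chosen nearly coordinate-aligned inside the centered subspace. The bias contributes only an offset and no magnification. Replacing $B$ by $B^2$ in part (a) gives error $\Theta(L^3/(dB^4))$ and $\Bmin=\Theta(L^{3/4}/d^{1/4})$. For the lower bound I would show that the effective read-out satisfies $\|P(w\odot g)\|\le B^2$, so part (b) reduces to part (a) with amplitude $B^2$ plus a free offset.

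\emph{Main obstacle: the general lower bound.} I need $E\gtrsim\min\{L,L^3/(dR^2)\}$ for arbitrary $A,C$, possibly non-orthogonal, arbitrary read-out with amplitude $R=\sqrt d\,\|Pw\|$, and arbitrary offset. Restricting to $m=0$ is free. The points $\LN_0(h(n,0))$ lie on a circle of radius $\sqrt d$, at angles $\theta_n=\theta_0+\arctan\bigl(\omega' n/(1+\kappa n)\bigr)$ for some rescaled $\omega',\kappa$. The prediction is $R\cos(\theta_n-\phi)+b$. I would argue in three steps:
\begin{enumerate}
\item Fitting $n$ with error $E<L/8$ forces the total variation of $\yhat$ to be $\gtrsim L$. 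Since $\yhat$ varies by at most $R\cdot(\text{angle swept})$, the angular speed obeys $\max_n\theta_n'\gtrsim 1/R$, or the angle swept is $\gtrsim L/R$.
\item Take a third divided difference of $\yhat$ at nodes $0,L/3,2L/3,L$. It must be $O(E/L^3)$, since the target $n$ has zero third difference.
\item Show that any function $R\cos(\theta(n)-\phi)$, with $\theta$ of the above arctan form and average slope $\gtrsim 1$, has third divided difference $\gtrsim 1/(dR^2)$ in magnitude, uniformly over $\kappa$ and $\phi$.
\end{enumerate}
Step 3 is delicate because the curvature of the arctan and of the cosine could partially cancel. I would handle it by a case split on whether $\omega'L$ is small, where Taylor expansion applies, or bounded below, where the arc is strongly curved and the error is already $\gtrsim L$.
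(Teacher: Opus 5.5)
Several parts of your plan are fine:
\begin{itemize}
\item the reduction to $\sqrt d\,(A+nC)/\|A+nC\|$;
\item the arc constructions and $E^\star(L)=0$;
\item the reduction of (b) to a normalized-line fit with effective read-out $v=\sqrt d\,(g\odot w)$, $\|v\|\le\sqrt d\,B^2$, and a free offset $t=w^\top\beta$.
\end{itemize}
Your non-attainment argument, matching the quartic $(n-t)^2\|A+nC\|^2$ against the quadratic $(v^\top A+n\,v^\top C)^2$ at $L+1\ge5$ counts, is a valid and tidy alternative to the paper's, and it handles the bias case directly.

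\textbf{The gap is Step~3 of the lower bound, which is false as stated.} For linearly independent $A,C$, take orthonormal coordinates on $\operatorname{span}\{A,C\}$ and let $s$ be the affine coordinate along the line, measured from the foot of the perpendicular from the origin. Every such predictor is then $f(x)=t+g(s)$ with $g(s)=(a+bs)/\sqrt{1+s^2}$, $s=\kappa x+\lambda$, and $a^2+b^2\le M^2$.

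Here is a counterexample.
\begin{itemize}
\item Take $a=0$, a read-out parallel to the line. Then $g'''(s)=3b(4s^2-1)(1+s^2)^{-7/2}$, which changes sign at $s=1/2$.
\item Map the nodes $0,k,2k,3k$ onto a short $s$-interval straddling $1/2$.
\item Slide the interval until the B-spline average of $g'''$ vanishes. This is possible by the intermediate value theorem, since $g'''<0$ just left of $1/2$ and $g'''>0$ just right of it.
\item Scale $b$ so that the average slope is exactly $1$.
\end{itemize}
The third divided difference is then exactly $0$, for every large amplitude and with a small swept angle.

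So the cancellation you anticipate does not sit in the regime where $\omega'L$ is bounded below. It happens inside the small-angle regime where you planned to Taylor-expand, because there the cubic coefficient of $R\cos(\theta(n)-\phi)$ can vanish. Such a configuration pays instead through the quadratic term $g''(1/2)\ne0$. That term costs error of order $L^2/M$, which is at least $L^3/M^2$ once $M\gtrsim L$, but third differences annihilate it.

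\textbf{What a correct argument needs.} It must control second- and third-order behaviour jointly. The invariant that does this is a negative-Schwarzian bound: $\zeta=(g')^{-1/2}$ satisfies $\zeta''\zeta^3\ge 3/(2M^2)$ wherever $g'>0$, and this is unchanged by the affine substitution $s=\kappa x+\lambda$. Where $f'\approx1$ it reads $\tfrac34 (f'')^2-\tfrac12 f'''\gtrsim 1/M^2$, so either the curvature or a negative third derivative must be large.

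The paper's Lemma~\ref{lem:affine-four-count} turns this into a bound on integer samples without pointwise control of derivatives:
\begin{itemize}
\item The mean value theorem on $[0,k]$ and $[3k,4k]$ gives points where $f'\in[1-\epsilon,1+\epsilon]$, with $\epsilon=2\delta/k$.
\item The numerator of $f'$ is affine, so $f'>0$ in between.
\item Convexity of $Z=(f')^{-1/2}$ then forces $f'\ge(1-\epsilon)+c\,(x-k)(3k-x)/M^2$ on $[k,3k]$.
\item Integrating gives $f(3k)-f(k)\ge 2k-4\delta+\tfrac{2k^3}{M^2}(1-\epsilon)^3$, against the upper bound $2k+2\delta$.
\end{itemize}
The offset never enters this argument.

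For bare post-LN, the paper's Lemma~\ref{lem:four-count} is cheaper. With $t=0$, the condition $|f(0)|\le\delta$ pins $v^\top A$; this excludes the configuration above, where $f(0)\approx 0.45\,b$. After normalizing $\|A\|=1$, one gets $\|A+nC\|\approx b$ at $n=k,2k,3k$. The second difference of the quadratic $\|A+nC\|^2$ then gives $\|C\|^2\lesssim b^2\delta/k^3$, and $b^2\le M^2\|C\|^2$ finishes.

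\textbf{Two smaller slips.}
\begin{itemize}
\item Since $R=\sqrt d\,\|Pw\|$ already contains $\sqrt d$, your target should read $\min\{L,L^3/R^2\}$, not $L^3/(dR^2)$.
\item A bias-free bare head cannot output the constant $L/2$: a read-out with $Pw$ orthogonal to the arc outputs $0$. The error $L$ of that zero output is what supplies the $\min\{L,\cdot\}$ term.
\end{itemize}
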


\paragraph{The arc.}
Set the letter embedding so that $Pe_\ell=0$.
Under softmax attention the denominator of \eqref{eq:hsoftmax} divides both the centered vector and its norm, so it cancels inside $\LN_0$. Both attention rules then give, on every input,
\begin{equation}
\LN_0(h(n,m))=\sqrt d\,\frac{A+nC}{\|A+nC\|_2},\qquad A=Pe_\#,\quad C=\rho Pe_s,
\label{eq:s5-post-ray}
\end{equation}
where $\rho=e^{c_s-c_\#}$ for softmax and $\rho=c_s/c_\#$ for linear attention. The vector $A+nC$ advances by the same increment for every space, as in linear attention. Normalization then projects these evenly spaced points radially onto a sphere of radius $\sqrt d$. If $A\perp C$, with $u=A/\|A\|$, $v=C/\|C\|$ and $\omega=\|C\|/\|A\|$,
\begin{equation}
\LN_0(h(n,m))=\sqrt d\bigl(\cos\theta_n\,u+\sin\theta_n\,v\bigr),\qquad\theta_n=\arctan(\omega n).
\label{eq:s5-angle}
\end{equation}
The count is now an angle on a circular arc (Figure~\ref{fig:flagship}(c)), and the angular steps shrink as $n$ grows. A linear read-out measures a projection of this arc. Since $v^\top\LN_0(h)=\sqrt d\sin\theta_n\approx\sqrt d\,\omega n$ for small $\omega n$, the choice $w=v/(\omega\sqrt d)$ gives $\yhat(n,m)=\sin\theta_n/\omega=n/\sqrt{1+\omega^2n^2}$, with $0\le n-\yhat(n,m)\le\omega^2n^3/2$. A small $\omega$ keeps all counts near the start of the arc, where the projection is nearly linear in $n$, but it also shrinks the angular steps the read-out must amplify, since $\|w\|=1/(\omega\sqrt d)$. Setting $\omega=L^{-3/2}$ makes the error strictly below $1/2$ at norm $L^{3/2}/\sqrt d$, and part (a) states this rate is optimal, with no orthogonality assumption.

\paragraph{Why the arc needs less magnification than the segment.}
Both constructions start from evenly spaced points and divide them by a quantity that grows with the count. On space-only inputs, softmax divides by the total attention weight, which is proportional to $\gamma+n=\gamma(1+n/\gamma)$. Post-normalization divides by the length of $A+nC$, which for $A\perp C$ is $\|A\|\sqrt{1+(\omega n)^2}$ with $\omega=\|C\|/\|A\|$. In both constructions the ratio of the space contribution to the boundary contribution, $n/\gamma$ or $\omega n$, stays small for every count up to $L$. What differs is how it enters the divisor: linearly for softmax, squared for post-normalization. A ratio of $0.1$, for example, enlarges the softmax divisor by $10\%$ but the post-normalization divisor by only $0.5\%$. The arc therefore stays nearly straight for longer, and the read-out can place the largest count further along it before its error reaches $1/2$, at $\theta_L$ of order $L^{-1/2}$ rather than $\lambda_L$ of order $L^{-1}$. Turning that count into the output $L$ then needs magnification $1/\omega=L^{3/2}$ on the arc but $\gamma=L^2$ on the segment. In summary, post-normalization needs less magnification because it replaces the softmax's compression with a gentler one, rather than adding a second independent constraint on top of it.
\paragraph{Why the learned gain lowers the budget.}
Without normalization the magnification $L^2$ is a product $w^\top e_s$, so two vectors share it and each needs norm of order $L$. Under bare post-normalization the representation has fixed length $\sqrt d$, so $w$ must supply all of $L^{3/2}$ alone; this is why bare post-LN, despite needing less magnification, has the larger budget. A learned gain restores the sharing. With $\beta=0$ the prediction is $(g\odot w)^\top\LN_0(h)$, so $g$ and $w$ act through their product, and bounds $\|g\|,\|w\|\le B$ allow an effective read-out of norm order $B^2$. The same arc construction then rounds correctly at $B$ of order $L^{3/4}/d^{1/4}$, which is part (b). At fixed dimension this is smaller than the $\Theta(L)$ of unnormalized softmax: the sharing is the same, but there is less magnification to share.

\subsection{Pre-normalization: a finite cap}
\label{sec:ln:pre}

Under pre-normalization, each embedding is normalized before attention, so the boundary representation is built from the normalized values $z_s,z_\ell,z_\#$, each of length $\sqrt d$ under bare normalization. The geometry of Section~\ref{sec:nonorm} is unchanged: on space-only inputs $h(n,0)=z_\#+\lambda_n(z_s-z_\#)$ with $\lambda_n=n/(n+\gamma)$, the same segment (Figure~\ref{fig:flagship}(d)). What changes is what the parameters can reach.

\begin{theorem}[Bare pre-normalization]
\label{thm:bare}
\label{thm:pre}
For softmax attention with bare pre-normalization, the saturation scale satisfies
$1 \le \gamma \le e^{2\sqrt{d}}$, and the optimal error, with
unrestricted parameter norms, is
\[
  E^\star(L,d) = \Theta\!\left(\frac{L^2}{L + e^{2\sqrt{d}}}\right).
\]
Hence error below $\tfrac12$ is achievable only for
$L \le L_{\mathrm{cap}}(d) := \sup\{L : E^\star(L,d) < \tfrac12\} = \Theta\big(e^{\sqrt{d}}\big)$.
\end{theorem}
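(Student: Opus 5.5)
The proof has three parts: bound the saturation scale $\gamma$ from the geometry of bare normalization, reduce the error to the error function of Theorem~\ref{thm:nonorm}(b) evaluated at that $\gamma$, and then read off the length cap.

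\emph{Bounding $\gamma$.} Under bare pre-normalization the keys and values are $z_s=\LN_0(e_s)$, $z_\ell=\LN_0(e_\ell)$, $z_\#=\LN_0(e_\#)$. Each has norm exactly $\sqrt d$, so the scores are $c_x=z_x^\top z_\#/\sqrt d$. This gives $c_\#=\sqrt d$, and by Cauchy--Schwarz $c_s\in[-\sqrt d,\sqrt d]$. Hence $\log\gamma=c_\#-c_s\in[0,2\sqrt d]$, which is the stated range $1\le\gamma\le e^{2\sqrt d}$. The endpoint $e^{2\sqrt d}$ is attained by taking $z_s=-z_\#$, i.e.\ $Pe_s=-cPe_\#$ with $c>0$. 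Such a choice exists because the centered subspace has dimension $d-1\ge2$, which also leaves room for a letter direction.

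\emph{Error at a given $\gamma$.} The representation $h(n,m)$ has the same convex-combination form as \eqref{eq:hsoftmax}, with $z$'s in place of $e$'s, and the read-out $w$ is unrestricted. I will therefore reuse the error analysis behind Theorem~\ref{thm:nonorm}(b), namely $E^\star(\gamma,L)=\Theta(L^2/(L+\gamma))$, and check that the constraint $\|z_x\|=\sqrt d$ does not break it.

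For the lower bound, on space-only inputs $\yhat(n,0)=a+b\,n/(n+\gamma)$ for scalars $a,b$, whatever the vectors are. The lower bound from Appendix~\ref{app:softmax-lemma} is a Chebyshev-type statement that $n\mapsto n$ cannot be fitted by a function of the form $a+b\lambda_n$ on $\{0,\dots,L\}$ to better than $cL^2/(L+\gamma)$. It uses only this one-dimensional family, so it transfers verbatim.

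For the upper bound, the no-LN construction needs the point $z_\#$, the point $z_s$, and a letter value whose contribution the read-out cancels. I would choose $z_s$ at the angle that gives the desired $\gamma$. I would choose $z_\ell$ so that $w^\top z_\ell$ matches the value making letters harmless, as in the no-LN construction. Then I pick $w$ in the span to realize any prescribed $a,b$ and the letter read-out; this needs $z_\#,z_s,z_\ell$ affinely independent, which is possible for $d\ge3$. I also have to verify that letters can be neutralized. If $c_\ell$ must also be free, the proof would use $z_\ell$ to adjust the letter attention weight, which lies in $[e^{-\sqrt d},e^{\sqrt d}]$ relative to the boundary.

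Monotonicity of $L^2/(L+\gamma)$ in $\gamma$ then gives $E^\star(L,d)=\min_{\gamma\le e^{2\sqrt d}}\Theta(L^2/(L+\gamma))=\Theta(L^2/(L+e^{2\sqrt d}))$.

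\emph{The cap.} I expect this to be the main obstacle, because the constants inside the $\Theta$ must be tracked carefully. The condition $L^2/(L+G)<c$ with $G=e^{2\sqrt d}$ holds for $L\lesssim\sqrt G=e^{\sqrt d}$ and fails for $L\gtrsim\sqrt G$. Since both the upper and lower constants are absolute, the threshold $L_{\mathrm{cap}}$ is pinned between $c_1e^{\sqrt d}$ and $c_2e^{\sqrt d}$. At the upper end I must confirm that the attaining configuration gives error below $1/2$ up to $c_1e^{\sqrt d}$, reusing the rounding construction of Theorem~\ref{thm:nonorm}(b) with $\gamma\approx L^2$. The only remaining check is that letter inputs cannot worsen the bound.
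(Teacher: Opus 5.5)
Your proof of $1\le\gamma\le e^{2\sqrt d}$ matches the paper's, and so does your lower bound: you restrict to the letter-free slice, where $\yhat(n,0)=a+b\,n/(n+\gamma)$ for any parameters, and apply the three-count lemma. Your inversion for $L_{\mathrm{cap}}$ is also the paper's. The gap is in the upper-bound construction. This is the one step where bare pre-LN differs from the unnormalized case, and you leave it open (``verify that letters can be neutralized'').

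\emph{The mechanisms you sketch do not work as stated.}
\begin{itemize}
\item \emph{Independence condition.} A read-out without bias can prescribe $w^\top z_\#$, $w^\top z_s$ and $w^\top z_\ell$ independently only if the three vectors are \emph{linearly} independent. Affine independence is not enough. At $d=3$ all normalized values lie in the two-dimensional space $\mathbf{1}^\perp$, so linear independence is impossible. With $w^\top z_\#=0$ and $w^\top z_s=\gamma$, the letter read-out is then determined, and it vanishes only if $z_\ell=\pm z_\#$.
\item \emph{Letter attention weight.} Making the letter read-out vanish is not enough. With $w^\top z_\#=w^\top z_\ell=0$ and $w^\top z_s=\gamma$, the prediction is $\gamma n/(n+qm+\gamma)$ with $q=e^{c_\ell-c_s}$. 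Its error stays of order $L^2/(L+\gamma)$ only if $q=O(1)$, that is, $c_\ell\le c_s+O(1)$. Near the cap $c_s\approx-\sqrt d$, so $z_\ell$ must also be nearly antipodal to $z_\#$. At the same time the read-out, whose norm is $\gamma/(\sqrt d\sin\phi)$, must vanish on $z_\ell$.
\item \emph{Copying the no-LN choice fails at $d=3$.} Taking $c_\ell=c_s$ there puts $z_\ell$ at the reflection of $z_s$ across $u$. There $w$ reads $-\gamma$, and the error at $(0,L)$ is $\gamma L/(L+\gamma)$.
\item \emph{Letter weight range.} Relative to the boundary, the letter weight is $e^{c_\ell-c_\#}\in[e^{-2\sqrt d},1]$, not $[e^{-\sqrt d},e^{\sqrt d}]$.
\end{itemize}

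\emph{The missing idea} is the paper's configuration. Take orthonormal $u,v\in\mathbf{1}^\perp$ and set $z_\#=\sqrt d\,u$, $z_\ell=-\sqrt d\,u$, $z_s=\sqrt d(-\cos\phi\,u+\sin\phi\,v)$ and $w=\gamma v/(\sqrt d\sin\phi)$. Placing the letter antipodal to the boundary gives it the minimum score $c_\ell=-\sqrt d\le c_s$, so $q\le1$. Because $w\perp u$, the read-out vanishes on both the boundary and the letters, and everything lives in one plane, so the construction works at $d=3$. This gives $\yhat=\gamma n/(n+qm+\gamma)$ with $\gamma=e^{\sqrt d(1+\cos\phi)}$ and error at most $L^2/(L+\gamma)$.

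\emph{The endpoint is not attained with small error.} The value $\gamma=e^{2\sqrt d}$ occurs only at $z_s=-z_\#$. There $w^\top z_s=-w^\top z_\#$, and the letter-free prediction $\alpha(n-\gamma)/(n+\gamma)$ has error at least $L/2$. So the upper bound is an infimum as $\phi\downarrow0$, not a minimum at the endpoint as your monotonicity step suggests. This costs nothing at the $\Theta$ level. For the cap, the paper takes $\gamma>e^{2\sqrt d}/2$, which gives error below $1/2$ for every $L\le e^{\sqrt d}/2$.
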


Keeping the counts near the start of the segment requires $\gamma$ of order $L^2$, as in Section~\ref{sec:nonorm}. But $\gamma=e^{c_\#-c_s}$, and the scores are bounded by the fixed token lengths, $c_\#=\sqrt d$ and $|c_s|\le\sqrt d$. So $\gamma$ cannot grow with $L$, the counts beyond some length spread onto the range where the segment coordinate is strongly nonlinear in the count. The resulting error cannot be changed by any readout since a larger read-out amplifies the gaps but cannot make them equal.

This is the only configuration we study in which more norm does not help. A learned gain and bias remove the cap, since the gain can lengthen the normalized values, so the scores and $\gamma$ are no longer bounded; the segment geometry of unnormalized softmax returns and $\Bmin^{\mathrm{pre}}(L,d)=\Theta(L/d^{1/4})$, linear in $L$ at fixed dimension (Appendix~\ref{app:learned-pre}). Learned query and key projections also remove it, and linear attention with bare pre-normalization still counts exactly at every length, because pre-normalization leaves its affine structure intact (Appendix~\ref{app:pre-extensions}).

\subsection{The cost of counting, compared}
\label{sec:compare}

\begin{table}[H]
\centering
\caption{Counting costs under a separate norm bound on every trainable vector. Rates are asymptotic in $L$ or $B$ at fixed $d\ge3$; the dependence on $d$ is given with each theorem and in Appendix~\ref{app:consequences}. Softmax rows are ordered by how far they count at a given budget.}
\label{tab:regression}
\footnotesize
\setlength{\tabcolsep}{4pt}
\begin{tabular}{lll}
\toprule
Model & Budget $\Bmin(L)$ & Supported length $L_{\max}(B)$\\
\midrule
Linear, no LN / bare pre-LN & $\Theta(1)$ & $\infty$ once $B$ suffices\\

Softmax, learned post-LN & $\Theta(L^{3/4})$ & $\Theta(B^{4/3})$\\
Softmax, no LN & $\Theta(L)$ & $\Theta(B)$\\
Softmax, learned pre-LN & $\Theta(L)$ & $\Theta(B)$\\
Softmax, bare post-LN & $\Theta(L^{3/2})$ & $\Theta(B^{2/3})$\\
Softmax, bare pre-LN & $\infty$ beyond $L_{\mathrm{cap}}(d)$ & $\le L_{\mathrm{cap}}(d)$\\
\bottomrule
\end{tabular}

\end{table}

Table~\ref{tab:regression} shows the scaling rates for each configuration. The ``Budget'' column gives the norm needed to count to length $L$, and the last column gives how far each architecture can count within a budget $B$. Our surprising main result can be restated using the last column: at the same budget, learned post-LN counts to a length of order $B^{4/3}$, further than any other softmax variant.

The table also shows that layer normalization can either help or hurt counting. Depending on where it is placed and whether its gain is learned, it gives either the best or the worst softmax configuration. Learning the gain helps in both placements. For pre-LN, it removes the finite cap and gives the same rate as unnormalized softmax. For post-LN, it lowers the budget from $L^{3/2}$ to $L^{3/4}$. The placement of bare normalization also matters. Before attention, it caps the counting range. After attention, it allows counting at every length, but at a higher cost than no normalization. Finally, dimension can also substitute for norm. At a fixed budget, the dimension needed to count to length $L$ grows as $L^4$ for learned pre-LN and as $L^3$ for either post-LN variant (Appendix~\ref{app:consequences}).

\section{Counting with a classification head}
\label{sec:classification}

Instead of rounding a scalar, a classifier can keep one score per count and predict $\hat c(n,m)=\arg\max_{0\le c\le L}(w_c^\top h(n,m)+b_c)$, where $h$ is the representation entering the read-out, including post-normalization when present, and $w_c\in\mathbb R^d$, $b_c\in\mathbb R$ are the head parameters.

\begin{theorem}[Classification]
\label{thm:classification}
Softmax attention with no normalization, pre-normalization, and post-normalization can classify every input in $\Tset$ exactly in exact arithmetic, with all parameters bounded independently of $L$ at fixed $d$.
\end{theorem}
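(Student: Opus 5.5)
The plan is to reduce every placement to one picture. On the plane that matters, the representation entering the head is a positive multiple of a point on a ray $x_n = a + n\,b$, where $a$ and $b$ are linearly independent and fixed independently of $L$. Distinct counts then give distinct \emph{directions} $\phi_n=\arctan(n\|b\|/\|a\|)$ when $a\perp b$, and these directions are strictly increasing in $n$. I would use a bias-free head with unit class vectors $w_c=\cos\phi_c\,\hat a+\sin\phi_c\,\hat b$ and $b_c=0$. If $h(n,m)=\kappa(n,m)\,x_n$ with $\kappa>0$, then
\[
w_c^\top h(n,m)=\kappa\,\|x_n\|\cos(\phi_n-\phi_c),
\]
which is uniquely maximized at $c=n$, because all angles lie in $[0,\pi/2)$ and are distinct. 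The argmax is invariant under positive rescaling of $h$. This is the key point: whatever the letters or the softmax denominator do to the magnitude is irrelevant, and $\|w_c\|=1$ for every $c$. The margins shrink like $L^{-2}$, which matches the paper's remark, but the statement needs only exactness in exact arithmetic.

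\textbf{Post-normalization.} I would invoke the arc identity \eqref{eq:s5-post-ray} with $Pe_\ell=0$. It gives $\LN_0(h(n,m))=\sqrt d\,(A+nC)/\|A+nC\|$, independent of $m$, for both attention rules. I choose $Pe_\#\perp Pe_s$ with bounded norms, so that $\omega$ is a fixed constant, say $\omega=1$ after choosing $c_s,c_\#$. The head above then classifies exactly, since the representation is literally the point at angle $\theta_n=\arctan(\omega n)$ on the circle of radius $\sqrt d$. A learned gain and bias can be taken trivial, $g=\one$ and $\beta=0$.

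\textbf{No normalization.} I take $e_s,e_\#,e_\ell$ mutually orthogonal with bounded norms, for example scaled standard basis vectors in $d\ge3$. The numerator of \eqref{eq:hsoftmax} is $n e^{c_s}e_s+m e^{c_\ell}e_\ell+e^{c_\#}e_\#$. Restricting the class vectors to $\mathrm{span}(e_s,e_\#)$ kills the letter term. What remains is the positive factor $1/Z$ times $a+nb$, with $a=e^{c_\#}e_\#$ and $b=e^{c_s}e_s$, so the rescaling argument applies directly. Every parameter, including $\gamma$, is a fixed constant.

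\textbf{Pre-normalization.} Here the analogous step needs normalized values $z_s,z_\#,z_\ell$, each of length $\sqrt d$ in the centered subspace $\one^\perp$, with $z_\ell$ orthogonal to $\mathrm{span}(z_s,z_\#)$ and $z_s,z_\#$ independent. Since $\dim\one^\perp=d-1$, this works directly for $d\ge4$. The scores are then bounded by $\sqrt d$, but this does not matter: $\gamma$ need not grow, because classification does not require linearity in $n$. The extra requirement is only to keep the scores finite, which is automatic. For a learned gain and bias, taking them trivial reduces to the bare case.

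\textbf{Main obstacle.} I expect the hard step to be the edge case $d=3$ under pre-normalization. There $\one^\perp$ is two-dimensional, so the letter direction cannot be made orthogonal to the plane. I would first try to handle it with the learned gain, which can move the normalized vectors out of $\one^\perp$ into $\mathbb R^3$. Failing that, I would exploit the third coordinate: project along $z_\ell$ onto a line, which gives a one-parameter family, and check whether $(n,m)\mapsto n$ remains separable by bounded linear functionals with biases. If neither works, I would state the bare pre-LN case for $d\ge4$.
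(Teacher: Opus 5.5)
\textbf{Gap: bare pre-LN at $d=3$.} Your no-normalization head and your post-normalization head are fine. The first uses unit vectors in $\mathrm{span}(e_s,e_\#)$ and is a valid alternative to the paper's quadratic head $w_c=N^{-2}(\gamma c\,e_s-c^2e_\#/2)$; the second is the paper's sector classifier. The pre-normalization case, however, has a genuine gap at $d=3$, and the statement covers that case because the paper works with $d\ge3$ throughout.

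Your mechanism is bias-free and removes the letters by choosing class vectors that annihilate $z_\ell$. At $d=3$ all bare normalized values lie in the two-dimensional plane $\mathbf{1}^\perp$. The functionals on that plane that vanish at $z_\ell$ form a one-dimensional family, so every logit has the form $s_c\,\phi(h)$ for one fixed functional $\phi$. The argmax then depends only on the sign of $\phi(h)$, and at most two classes are distinguished.

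Neither fallback you mention rescues this. Bare pre-LN has no gain. Projecting along $z_\ell$ onto a line leaves a scalar $(n\sigma+\tau)/(ne^{c_s}+me^{c_\ell}+e^{c_\#})$ with fixed $\sigma,\tau$. Affine scores on a line give each class an interval, but for every fixed $n$ this scalar stretches from an $L$-independent value toward $0$ as $m$ grows. So the intervals for different counts must overlap once $L$ is large. Restricting to $d\ge4$ proves a weaker statement.

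\textbf{The missing idea.} Use the class biases to move the origin to $z_\ell$ instead of annihilating it. Since $h(n,m)$ is a convex combination of $z_s,z_\ell,z_\#$,
\[
h(n,m)-z_\ell=\lambda_\#\Bigl(\frac{n}{\gamma}(z_s-z_\ell)+(z_\#-z_\ell)\Bigr),\qquad
\lambda_\#=\frac{e^{c_\#}}{ne^{c_s}+me^{c_\ell}+e^{c_\#}}>0,
\]
so the letters enter only through the positive factor $\lambda_\#$. This is exactly the rescaling invariance your argument already exploits. Any three distinct points on the sphere of radius $\sqrt d$ are non-collinear. Hence $a=z_s-z_\ell$ and $b=z_\#-z_\ell$ are independent even inside $\mathbf{1}^\perp$ at $d=3$, and the unit directions $u_n$ of $(n/\gamma)a+b$ are distinct.

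Take $w_c=u_c$ and $b_c=-u_c^\top z_\ell$. The logit is $\lambda_\#\|(n/\gamma)a+b\|\,u_c^\top u_n$, uniquely maximal at $c=n$ by strict Cauchy--Schwarz, with $\|w_c\|=1$ and $|b_c|\le\sqrt d$. This is the paper's construction. It needs no orthogonality, works for every $d\ge3$, and gives learned pre-LN with $g=\mathbf{1}$, $\beta=0$.

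\textbf{A smaller correction.} For your angular heads the adjacent angular gap is $\Theta(L^{-2})$, but the logit margin is $1-\cos$ of that gap, hence $\Theta(L^{-4})$. This matches what the paper computes for the post-LN sector classifier, and it does not affect exactness.
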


\paragraph{Storage.} At first reading of Theorem \ref{thm:classification}, one might think classification is strictly superior to regression for counting with attention. But classification is asymptotically inferior to regression when it comes to storage.
Classification keeps parameter norms bounded, but replaces one regression read-out with one output vector per count. At a common precision, its dense head therefore uses $\Theta(L)$ times as much memory. This separation persists when the models use different precisions: at fixed dimension, dense classification requires $\Omega(L\log L)$ bits, while an unnormalized-softmax regressor can store all its parameters in $O(\log L)$ bits and still count correctly after rounding. Appendix~\ref{app:precision} gives the proof and the precise storage and arithmetic conventions.

Bounded weights suffice for classification because saturation pushes the counts closer together but never out of order. A regression read-out has to recover the distances between counts, and the shrinking distances are what cost it norm; a classifier only has to tell neighbours apart, and for that the order is enough. On space-only inputs without normalization and with orthonormal embeddings, $h(n,0)=(1-t_n)e_\#+t_ne_s$ with $t_n=n/(n+\gamma)$, so $e_s^\top h(n,0)=t_n$ reads the position along the segment directly. Choosing $w_c=2t_ce_s$ and $b_c=-t_c^2$ gives class $c$ the score $t_n^2-(t_n-t_c)^2$, which is largest at the closest position $c=n$, with $\|w_c\|<2$ and $|b_c|<1$ at every $L$. What grows with $L$ is what the head must resolve: the minimum winning margin over the counting range shrinks, and in finite precision a small enough margin is hard to distinguish. Bare pre-LN classification also escapes the cap of Theorem~\ref{thm:bare}, since the cap limits how evenly the counts are spaced, not whether they are ordered.

\section{Empirical results}
\label{sec:empirical}

\begin{figure}[H]
  \centering
  \includegraphics[width=\linewidth]{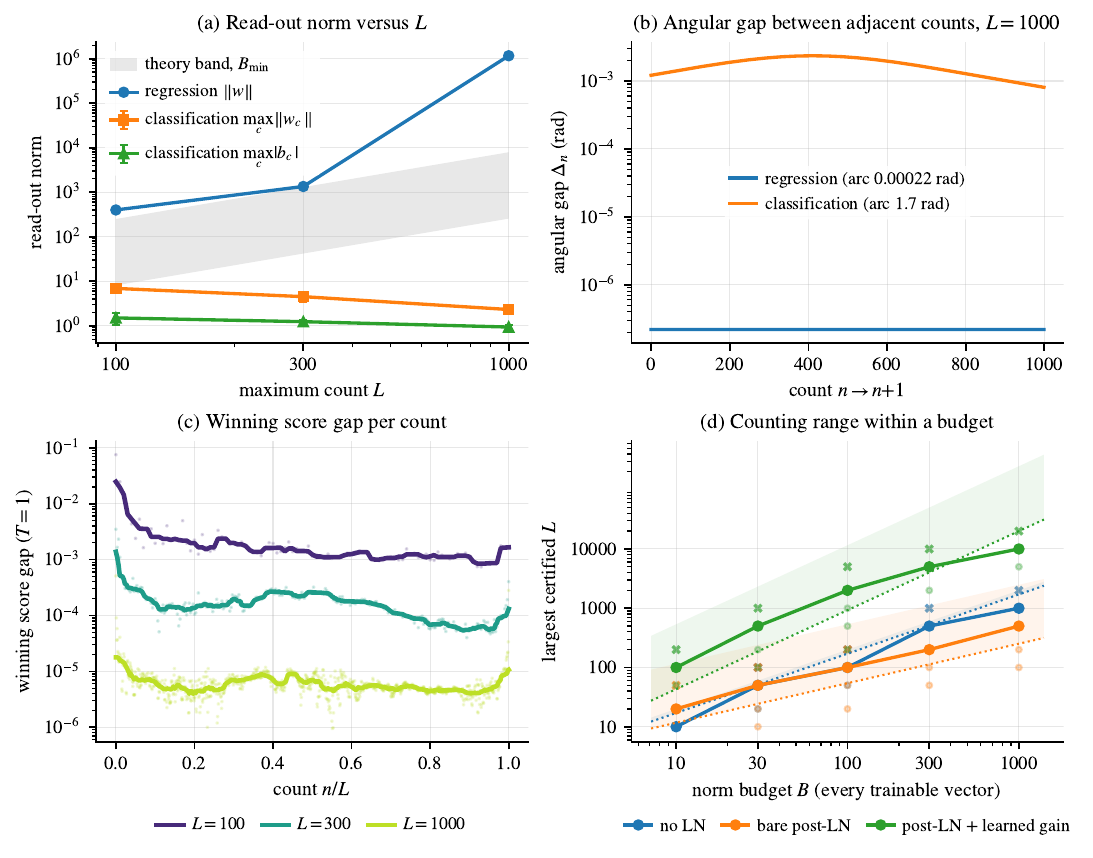}
  \caption{\textbf{What each read-out costs, and how far each variant counts within a budget.}
    Softmax attention, $d=16$, trained from random initialization by L-BFGS (Appendix~\ref{app:postln-protocol}). Panels (a--c) show runs that pass the full-domain check (all seeds pass at $L\le300$, one of three classifier seeds at $L=1000$).
    (a) Regression needs a read-out norm that grows with $L$ (gray: the range Theorem~\ref{thm:post}(a) allows for $B_{\min}$); the classifier's class vectors and biases stay $O(1)$ under our parameterization.
    (b) Where the two heads put the count representations before the read-out, on the letter-free slice: regression on a $2\times10^{-4}$ rad arc, classification over $1.7$ rad.
    (c) The classifier's cost is elsewhere: the smallest winning score gap over letter counts, per count, shrinks with $L$.
    (d) Largest $L$ that rounds correctly on the full domain when every trainable vector is capped at $B$. Dotted lines are the constructions; shaded regions the ranges the theorems allow.}
  \label{fig:post-ln-heads}
\end{figure}

\paragraph{Regression versus classification.}
Trained from the same random initialization on the same bare post-LN model, the two read-outs land on the solutions the theory describes (Figure~\ref{fig:post-ln-heads}a--c). Regression's read-out norm grows with $L$, on or above the band of Theorem~\ref{thm:post}(a), and it gets there by packing all $1001$ counts into $2\times10^{-4}$ rad of arc, where the prediction is nearly linear in the count. The classifier keeps every class vector below norm $7$ and every bias below $2$, and spreads the counts over $1.7$ rad: it only has to give the correct class the largest score, not reproduce the count through one linear read-out. What shrinks instead is the winning score gap, from about $10^{-3}$ at $L=100$ to $10^{-6}$ at $L=1000$. Gaps this small give cross-entropy almost no probability separation, and training did not succeed without a learnable positive logit multiplier. That scalar does not change the prediction, and every norm and gap we report has it divided out (Appendix~\ref{app:postln-protocol}).

\paragraph{Counting range under a budget.}
Two of the three exponents in Table~1 are recovered by training (panel~d). We cap every trainable vector at $B$ and find the largest $L$ that still rounds correctly. The fitted slopes are $1.00$ without normalization and $0.68$ for bare post-LN, against the predicted $1$ and $2/3$. The learned gain reaches at least ten times the length of either alternative at every budget from $100$ on, but its fitted slope is $1.00$ rather than the predicted $4/3$, so this exponent is not resolved. At large budgets, the measurement falls short of what the model can do. At $B = 1000$, the construction guarantees exactly $20{,}000$, but that grid cell failed. At $B = 3000$, the runs reach only $10{,}000$ against a guaranteed $86{,}535$ (Table~3). At these budgets, the optimizer, not the model, sets the limit.

\paragraph{Training the unnormalized model.}
We also train the unnormalized softmax model by gradient descent (Appendix~\ref{empirical-appendix}). The parameter norms reach the lower bound of Theorem~\ref{thm:nonorm} early in training, but the saturation scale $\gamma$ grows far more slowly, and most models come close to correct rounding without reaching it. This suggests that the score gap, rather than the norm, is the harder quantity for gradient descent to learn. Appendix~\ref{representation-crowding-LLM} reports a qualitative parallel in Qwen3-4B, where adjacent high-count representations are more crowded than low-count ones.

\section{Conclusion}
\label{sec:discussion}

We studied the cost of counting in a single attention layer, measured by the smallest norm budget $\Bmin$ at which rounding is correct up to length $L$. This cost depends on where layer normalization is placed. Softmax places the counts on a segment and post-normalization places them on an arc, and the arc needs less magnification because it stays nearly straight for longer. As a result, learned post-LN needs the smallest budget of all the softmax configurations, $\Theta(L^{3/4})$ at fixed dimension, even though normalization discards the magnitude that carries the count. Bare pre-LN is the one configuration that no budget can rescue, and a classification head keeps every norm bounded but stores one vector per count.

\paragraph{Implications for length generalization.}
Our results suggest one reason why a model trained on inputs up to length $L$ can fail on longer inputs. During training, the parameters can grow to the norm that counting to $L$ requires, and once training stops, the norms are fixed. In the configurations where larger norms help, $\Bmin$ grows polynomially with length, so a budget close to $\Bmin(L)$ is smaller than $\Bmin(L')$ once $L'$ is a large enough multiple of $L$, and then no setting of the parameters within that budget counts correctly to $L'$, whatever the training data. This differs from the usual explanation of out-of-distribution failure, in which a model learns a heuristic that only works on the lengths it was trained on. That explanation concerns what training finds, while ours concerns what exists within the budget. A model that counts correctly to $L'$ does exist, as our constructions show, but only at a larger norm. Since our analysis covers a single attention layer, this is a hypothesis about larger models rather than a result about them.

\paragraph{Limitations.}
Our results concern a single attention head with identity projections and layer normalization with zero stability constant. Our constructions show what this single-layer, single-head model can represent. How training finds these representations is a separate question.

\clearpage
\subsection*{AI use statement}
Generative AI was used to revise and expand on proofs, including working out details and edge cases. It was also used to review mathematical claims, to help write the code for the experiments, to propose and check revisions to the text, and to assist with LaTeX editing and compilation. All proofs were checked, and all code tested, by the authors. The problem formulation, the statements of Theorems 1--4, the choice of experiments, and the interpretation of their results are the authors' own. We take full responsibility for the content of this paper.

\subsection*{Reproducibility statement}
The theoretical model and budget conventions are specified in Section~\ref{sec:setup}. Proofs and constructions appear in Appendices~\ref{app:scope}--\ref{app:consequences}. Experimental details appear in Appendix~\ref{empirical-appendix}.

\bibliographystyle{iclr2027_conference}
\bibliography{iclr2026_conference}
\clearpage
\appendix

\section{Conventions and scope}
\label{app:scope}

All vector norms are Euclidean. Unless explicitly stated otherwise,
$W_Q=W_K=W_V=I$, there is no residual connection, and regression has no
separate read-out bias. The input domain is
$\mathcal T_L=\{(n,m)\in\mathbb N_0^2:n+m\le L\}$, including $(0,0)$.
The boundary token is additional to these $L$ content positions.
We use $d\ge3$ and, for the joint norm theorems, $L\ge4$.
Lemmas with weaker length assumptions state them explicitly.

Write $P=I-\mathbf1\mathbf1^\top/d$ and
$\operatorname{LN}_0(x)=\sqrt d\,Px/\|Px\|$.
The normalization uses $\varepsilon=0$ and is required to be defined on
every input where it is applied. Learned normalization is
$g\odot\operatorname{LN}_0(x)+\beta$.
Each trainable embedding, read-out, gain, and normalization bias is
individually bounded by $B$ whenever it is present.
Fixed identity matrices are not trainable resources.

To avoid rounding ties and unattained boundary infima, define
\[
 B_{\min}(L,d)=\inf\{B>0:\text{there exists an admissible }\theta
 \text{ of budget }B\text{ with }E(\theta;L)<1/2\}.
\]
The infimum of the empty set is $+\infty$.
This is equivalent to using $E^\star(L,d;B)<1/2$ inside the set.
All asymptotic constants below are absolute unless the statement
explicitly holds additional parameters fixed.

\section{A scalar softmax lemma}
\label{app:softmax-lemma}

\begin{lemma}[Three-count obstruction]
\label{app:scalar}
Let $L\ge2$, $\gamma>0$, and
$f(n)=(n\alpha+\gamma r)/(n+\gamma)$ for $0\le n\le L$.
If $|f(n)-n|\le\delta$ at these integer counts, then, for every
$1\le j<L$,
\begin{align}
 \delta&\ge\frac{j(L-j)}{2(j+\gamma)}
 \ge0,\label{app:three-count}\\
 |r|&\le\delta,\qquad
 \alpha\ge L-\delta+\gamma(1-2\delta/L).
 \label{app:alpha-bound}
\end{align}
In particular,
\[
 \delta\ge\frac{L^2-1}{4L+8\gamma},
 \qquad
 \delta<\frac12\ \Longrightarrow\quad
 \begin{cases}
 \gamma>\lfloor (L-1)^2/4\rfloor,\\
 \alpha>(L^2-L)/4.
 \end{cases}
\]
\end{lemma}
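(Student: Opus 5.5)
The plan is to clear the denominator and reduce everything to a statement about a quadratic polynomial sampled at three integer counts. Set $q(n)=(n+\gamma)(f(n)-n)=-n^2+(\alpha-\gamma)n+\gamma r$. This is a quadratic in $n$ with leading coefficient $-1$, and the hypothesis $|f(n)-n|\le\delta$ becomes $|q(n)|\le\delta(n+\gamma)$ at every integer $0\le n\le L$. A linear read-out of the softmax segment can absorb any affine error, but not the curvature. So I would take the unique three-point combination of the values at $0$, $j$ and $L$ that annihilates affine functions, namely
\[
(L-j)\,q(0)-L\,q(j)+j\,q(L).
\]

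\textbf{Step 1: the three-count inequality.} The constant terms contribute $\gamma r\bigl((L-j)-L+j\bigr)=0$. The linear terms contribute $(\alpha-\gamma)(-Lj+jL)=0$. The quadratic part then gives $-(-Lj^2+jL^2)$, so the combination equals $-jL(L-j)$. Bounding each of the three values by $\delta(n+\gamma)$ and using the triangle inequality gives
\[
jL(L-j)\le\delta\bigl[(L-j)\gamma+L(j+\gamma)+j(L+\gamma)\bigr]=2L\delta(j+\gamma).
\]
Dividing by $2L(j+\gamma)$ yields \eqref{app:three-count}; the lower bound $\ge0$ is immediate since $1\le j<L$. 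The computation is routine. The only care needed is choosing the weights $(L-j,-L,j)$, i.e.\ the divided-difference weights up to scaling, so that the unknown $\alpha$ and $r$ drop out completely.

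\textbf{Step 2: the bounds on $r$ and $\alpha$.} Evaluating at $n=0$ gives $f(0)=r$, hence $|r|\le\delta$. At $n=L$, the condition $f(L)\ge L-\delta$ gives $L\alpha+\gamma r\ge(L-\delta)(L+\gamma)$. Using $r\le\delta$ then gives
\[
\alpha\ge\frac{(L-\delta)(L+\gamma)-\gamma\delta}{L}=L-\delta+\gamma\Bigl(1-\frac{2\delta}{L}\Bigr),
\]
which is \eqref{app:alpha-bound}.

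\textbf{Step 3: the consequences.} For the first consequence, take $j=\lfloor L/2\rfloor$. Then $j(L-j)\ge(L^2-1)/4$ and $2(j+\gamma)\le L+2\gamma$, which gives $\delta\ge(L^2-1)/(4L+8\gamma)$.

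If $\delta<\tfrac12$, then \eqref{app:three-count} gives $j(L-j)<j+\gamma$, i.e.\ $\gamma>j(L-1-j)$ for every $j$. Maximizing over integers $j$ gives $\gamma>\lfloor(L-1)^2/4\rfloor$.

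For $\alpha$, note that $1-2\delta/L>1-1/L>0$, so \eqref{app:alpha-bound} gives $\alpha>L-\tfrac12+\gamma(L-1)/L$. I also use $\gamma\ge((L-1)^2-1)/4=L(L-2)/4$, valid because $\gamma$ exceeds the floor. This gives
\[
\alpha>L-\tfrac12+\tfrac{(L-1)(L-2)}4=\tfrac{L^2+L}{4}\ge\tfrac{L^2-L}{4}.
\]

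The main obstacle is bookkeeping rather than conception: keeping the strict inequalities strict, and handling the fact that $\gamma$ is real while the bound is stated through a floor. The step I would double-check is the weight sum in Step 1, since the factor $2$ in $2(j+\gamma)$ is what produces the constants $4L+8\gamma$.
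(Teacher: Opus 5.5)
Your proof is correct and is essentially the paper's argument. Your combination $(L-j)q(0)-Lq(j)+jq(L)=-jL(L-j)$ is exactly the paper's identity $f(j)=(1-t)f(0)+tf(L)$, with $t=j(L+\gamma)/(L(j+\gamma))$, multiplied through by $L(j+\gamma)$; the endpoint tests and the maximization over $j$ are also the same. The one small deviation is in the $\alpha$ bound: you substitute the sharper $\gamma>\lfloor(L-1)^2/4\rfloor$ where the paper uses $\gamma>(L^2-1)/4-L$, so you obtain the slightly stronger conclusion $\alpha>(L^2+L)/4$.
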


\begin{proof}
Put $t=j(L+\gamma)/(L(j+\gamma))\in(0,1)$.
Direct substitution gives $f(j)=(1-t)f(0)+tf(L)$.
The three error bounds imply $tL-\delta\le f(j)\le j+\delta$,
and therefore $2\delta\ge tL-j=j(L-j)/(j+\gamma)$.
Taking $j=\lfloor L/2\rfloor$ proves the displayed uniform lower bound.
The tests at $0,L$ give \eqref{app:alpha-bound}.
If $\delta<1/2$, \eqref{app:three-count} gives
$\gamma>j(L-j-1)$; maximize this over integers $j$.
Also the weaker bound
$\delta>(L^2-1)/(8(L+\gamma))$ gives
$\gamma>(L^2-1)/4-L$. Substituting into
\eqref{app:alpha-bound} yields
$\alpha>(L^2-L+1+1/L)/4>(L^2-L)/4$.
\end{proof}

This lemma holds for every positive $\gamma$, including $\gamma<1$.
It applies to both unnormalized and pre-normalized softmax regression by
restricting to $m=0$ and taking $\alpha=w^\top z_s$, $r=w^\top z_\#$,
where $z$ denotes the values entering attention.

\section{Proof of Theorem~\ref{thm:nonorm}}
\label{app:nonorm}

\subsection{Linear attention}
\label{app:linear}

\begin{proposition}
Unnormalized dot-product attention counts exactly at finite parameters
independent of $L$. For $L\ge2,d\ge3$,
$B_{\min}^{\rm lin}(L,d)=\Theta(d^{1/8})$.
\end{proposition}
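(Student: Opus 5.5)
The plan is to expand the linear-attention prediction into a polynomial in $(n,m)$ and read off both the construction and the lower bound from its coefficients. By \eqref{eq:hlin},
\[
\yhat(n,m)=n\,c_s\,(w^\top e_s)+m\,c_\ell\,(w^\top e_\ell)+c_\#\,(w^\top e_\#),
\]
which is affine in $(n,m)$. Exact counting on $\Tset$ therefore amounts to three coefficient conditions:
\[
c_s\,w^\top e_s=1,\qquad c_\ell\,w^\top e_\ell=0,\qquad c_\#\,w^\top e_\#=0.
\]
Since $c_s=e_s^\top e_\#/\sqrt d$, the first condition reads $(e_s^\top e_\#)(w^\top e_s)=\sqrt d$. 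None of the three conditions involves $L$. So any solution counts exactly at every length with $L$-independent parameters.

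\textbf{Upper bound.} Let $u_1,u_2,u_3$ be orthonormal, which needs $d\ge3$, and take a scale $t>0$. Set
\[
e_\#=t\,u_1,\qquad e_s=t\,(u_1+u_2)/\sqrt2,\qquad e_\ell=t\,u_3,\qquad w=t\,u_2.
\]
Then $w^\top e_\#=0$ and $w^\top e_\ell=0$, so the boundary and letter terms vanish. For the space term, $e_s^\top e_\#=t^2/\sqrt2$ and $w^\top e_s=t^2/\sqrt2$, so the product is $t^4/2$. Choosing $t=(2\sqrt d)^{1/4}$ makes the product equal $\sqrt d$, which gives $\yhat(n,m)=n$ exactly. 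The three embeddings are linearly independent, and every trainable vector has norm $t=\Theta(d^{1/8})$. Hence $\Bmin^{\mathrm{lin}}(L,d)=O(d^{1/8})$, and with $d=3$ this also gives part (a)'s exact-counting claim.

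\textbf{Lower bound.} This is the step needing care, because $\Bmin$ only asks for error below $1/2$, not exact counting. Restrict to space-only inputs and write $k=c_s\,w^\top e_s$ and $r=c_\#\,w^\top e_\#$, so that $\yhat(n,0)=nk+r$.
\begin{itemize}
\item Since $L\ge2$, the inputs $(0,0)$ and $(2,0)$ are admissible. Error below $1/2$ at $n=0$ gives $|r|<1/2$.
\item At $n=2$ it gives $|2(k-1)+r|<1/2$. Together these force $|k-1|<1/2$, so $k>1/2$.
\end{itemize}
By Cauchy--Schwarz,
\[
k\le \frac{\|e_s\|\,\|e_\#\|\,\|w\|\,\|e_s\|}{\sqrt d}\le \frac{B^4}{\sqrt d}.
\]
Combining with $k>1/2$ gives $B>(\sqrt d/2)^{1/4}=\Omega(d^{1/8})$. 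This matches the construction up to an absolute constant, so $\Bmin^{\mathrm{lin}}(L,d)=\Theta(d^{1/8})$.

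I expect the only real subtlety to be this lower-bound step. A single input does not force the coefficient $k$ to be large, since a nonzero offset $r$ could absorb error at one count. Using the pair of counts $0$ and $2$ pins down both $r$ and $k$, and the quartic dependence on $B$ then follows from Cauchy--Schwarz as above.
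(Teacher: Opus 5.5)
Your proof is correct and matches the paper's argument. It uses the same reduction to the three affine coefficients, an orthogonal construction at scale $\Theta(d^{1/8})$, and the same slope bound combined with Cauchy--Schwarz to get $B>2^{-1/4}d^{1/8}$. The differences are cosmetic: the paper takes $e_s=t(f_1+f_2)$, $e_\ell=t(f_1+f_3)$, $t=d^{1/8}$ where you normalize $e_s$ and set $t=(2\sqrt d)^{1/4}$, and it tests counts $0$ and $L$ where you test $0$ and $2$, with the same lower-bound constant.
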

\begin{proof}
The prediction is affine:
$\hat y(n,m)=c_s(w^\top e_s)n+c_\ell(w^\top e_\ell)m+
c_\#(w^\top e_\#)$.
Exact counting is equivalent to its three coefficients being $1,0,0$;
necessity follows by evaluating $(0,0),(1,0),(0,1)$.
Let $f_1,f_2,f_3$ be coordinate unit vectors and $t=d^{1/8}$.
Choose
\[
 e_\#=tf_1,\quad e_s=t(f_1+f_2),\quad
 e_\ell=t(f_1+f_3),\quad w=tf_2.
\]
These embeddings are linearly independent and
$c_s(w^\top e_s)=t^4/\sqrt d=1$; the other two coefficients vanish.
All vector norms are at most $\sqrt2\,d^{1/8}$.
Conversely, error below $1/2$ at $(0,0),(L,0)$ forces the slope to
exceed $1-1/L\ge1/2$. Its absolute value is at most
$B^4/\sqrt d$, giving $B\ge2^{-1/4}d^{1/8}$ after taking infima.
\end{proof}

\subsection{Softmax: approximation and joint norm cost}
\label{app:softmax}

\begin{proposition}
For $L\ge2,d\ge3,\gamma\ge1$,
\[
 \frac{3}{32}\frac{L^2}{L+\gamma}
 \le E^\star(\gamma,L,d)\le\frac{L^2}{L+\gamma}.
\]
No finite parameters give exact regression when $L\ge2$, but
$E^\star(L,d)=0$. For $L\ge4$,
\[
 B_{\min}^{\rm soft}(L,d)
 =\Theta\!\left(\max\{L,d^{1/4}\sqrt{\log L}\}\right).
\]
\end{proposition}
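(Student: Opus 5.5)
The plan is to reduce everything to the space-only slice $m=0$ so that Lemma~\ref{app:scalar} applies, and to handle letters in the upper bounds by an explicit construction. On $m=0$ the prediction is $f(n)=(n\alpha+\gamma r)/(n+\gamma)$ with $\alpha=w^\top e_s$ and $r=w^\top e_\#$. This is exactly the form treated in Lemma~\ref{app:scalar}, which holds for every $\gamma>0$.

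\emph{Error at fixed $\gamma$.} For the lower bound, Lemma~\ref{app:scalar} gives $\delta\ge (L^2-1)/(4L+8\gamma)$. For $L\ge2$ we have $L^2-1\ge \tfrac34L^2$ and $4L+8\gamma\le 8(L+\gamma)$, which yields $\tfrac{3}{32}L^2/(L+\gamma)$.

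For the upper bound I would take $e_s\perp e_\#$, so that $c_s=0$, and $\|e_\#\|^2=\sqrt d\log\gamma$, so that $e^{c_\#}=\gamma$; this is possible because $\gamma\ge1$. I take $w\perp e_\#$, so $r=0$, and $w^\top e_s=\alpha:=L+\gamma$. For letters I take $e_\ell=-k\,e_\#/\|e_\#\|$. Then $w^\top e_\ell=0$ and the letter weight is $\varepsilon=e^{c_\ell}=e^{-k\|e_\#\|/\sqrt d}$. (When $\gamma=1$, so that $e_\#=0$, I would instead use any direction orthogonal to $w$ and $e_s$; this costs a slightly different but harmless bookkeeping.) With these choices
\[
\hat y(n,m)=\frac{n\alpha}{n+\gamma+m\varepsilon}.
\]
At $\varepsilon=0$ the error is $n(L-n)/(n+\gamma)$. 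The bound $n(L-n)\le L^2/4$ gives $n(L-n)(L+\gamma)\le L^2(n+\gamma)$, hence error at most $L^2/(L+\gamma)$ on every input.
\begin{itemize}
\item At $\varepsilon=0$, $\hat y(n,m)=n(L+\gamma)/(n+\gamma)$ does not depend on $m$, and it satisfies $\hat y\ge n$ since $n\le L$.
\item For $\varepsilon>0$ the letters only lower $\hat y(n,m)$, moving it toward $n$. Because $\hat y$ is continuous in $\varepsilon$, taking $\varepsilon$ small enough (large $k$) keeps $\hat y\ge n$, so the error stays at most $L^2/(L+\gamma)$.
\item Since $E^\star$ is an infimum, taking $k\to\infty$ is also acceptable.
\end{itemize}
Non-exactness at finite parameters follows from the strictly positive lemma bound, for any $\gamma>0$. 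Letting $\gamma\to\infty$ in the upper bound gives $E^\star(L,d)=0$.

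\emph{Budget lower bound.} Suppose the error is below $1/2$. Lemma~\ref{app:scalar} gives $\alpha>(L^2-L)/4$, and $\alpha\le\|w\|\|e_s\|\le B^2$, so $B\gtrsim L$. The lemma also gives $\gamma>\lfloor(L-1)^2/4\rfloor$, while
\[
\log\gamma=c_\#-c_s\le(\|e_\#\|^2+\|e_s\|\|e_\#\|)/\sqrt d\le 2B^2/\sqrt d .
\]
Hence $B\gtrsim d^{1/4}\sqrt{\log L}$ for $L\ge4$; the case $L=4$ needs $\gamma>2$, which already forces a positive $\log\gamma$. Together these give the $\max$.

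\emph{Budget upper bound.} I would reuse the construction with $\gamma=4L^2$. This leaves slack, since $L^2/(L+\gamma)\le 1/4$. I split $\alpha=L+\gamma=\Theta(L^2)$ evenly between $w$ and $e_s$, giving each norm $\Theta(L)$, and set $\|e_\#\|=(\sqrt d\log\gamma)^{1/2}=\Theta(d^{1/4}\sqrt{\log L})$.

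The delicate step, which I expect to be the main obstacle, is the letters within the budget. Letters can lower $\hat y$ by a factor of at most $1+L\varepsilon/\gamma$, which changes the prediction by at most $L\cdot L\varepsilon/\gamma\le\varepsilon/4$. So any constant $\varepsilon$, e.g.\ $\varepsilon=1$, would nearly suffice. But an undershoot below $n$ must be combined with the existing overshoot, so I would simply take $\|e_\ell\|=\|e_\#\|$. Then $\varepsilon=e^{-\log\gamma}=1/\gamma$, and the letter effect is $O(1/L^2)$, well inside the remaining $1/4$ slack. All norms are then $O(\max\{L,d^{1/4}\sqrt{\log L}\})$.
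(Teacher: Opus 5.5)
Your proposal is correct and follows essentially the paper's route. You use the same three-count lemma on the $m=0$ slice for both the error and the budget lower bounds, and an orthogonal construction with $c_s=0$, $\|e_\#\|^2=\sqrt d\log\gamma$, and the magnification split between $w$ and $e_s$. Your overshooting choice $w^\top e_s=L+\gamma$ is the paper's calibrated variant with $u=L$, whereas the main proof uses $w=e_s=\sqrt\gamma f_1$ with an orthogonal unit $e_\ell$.

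Two small repairs are needed. First, $n(L-n)\le L^2/4$ does not by itself give $n(L-n)/(n+\gamma)\le L^2/(L+\gamma)$; use $n/(n+\gamma)\le L/(L+\gamma)$ and $L-n\le L$ instead. Second, the letter step is not delicate. For every $\varepsilon\le1$,
\[
 \hat y(n,m)-n=\frac{n(L-n-m\varepsilon)}{n+m\varepsilon+\gamma}\in\Bigl[0,\ \frac{n(L-n)}{n+\gamma}\Bigr],
\]
so you need no continuity argument, no separate treatment of $\gamma=1$, and no anti-aligned $e_\ell$.
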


\begin{proof}
The lower error bound follows from Lemma~\ref{app:scalar}, since
$(L^2-1)/L^2\ge3/4$ and $4L+8\gamma\le8(L+\gamma)$.
For the upper bound take
\[
 e_s=w=\sqrt\gamma f_1,\qquad
 e_\#=d^{1/4}\sqrt{\log\gamma}\,f_2,\qquad e_\ell=f_3.
\]
At $\gamma=1$ this simply uses $e_\#=0$, which is allowed without LN.
The scores are $c_s=c_\ell=0$, $c_\#=\log\gamma$, and
\begin{equation}
 \hat y(n,m)=\frac{\gamma n}{n+m+\gamma},\qquad
 n-\hat y(n,m)=\frac{n(n+m)}{n+m+\gamma}.
 \label{app:basic-soft-family}
\end{equation}
Its worst error is $L^2/(L+\gamma)$, attained at $(L,0)$.
Letting $\gamma\to\infty$ proves the zero infimum.
The strictly positive lower bound in Lemma~\ref{app:scalar}, which
also covers $\gamma<1$, rules out exact finite-parameter regression.

For rounding, $|\alpha|\le B^2$ and
Lemma~\ref{app:scalar} give $B=\Omega(L)$.
The identity projections also give
$\log\gamma=(\|e_\#\|^2-e_\#^\top e_s)/\sqrt d\le2B^2/\sqrt d$.
For $L\ge4$, rounding requires
$\gamma>\lfloor(L-1)^2/4\rfloor\ge L^2/8\ge\sqrt L$.
Thus $B\ge\tfrac12d^{1/4}\sqrt{\log L}$.
Finally, the construction above with $\gamma=2L^2$ has error strictly
below $1/2$ and budget
$\max\{\sqrt2 L,d^{1/4}\sqrt{\log(2L^2)},1\}$.
\end{proof}

\subsection{The optional calibrated construction}
\label{app:one-knob}

\begin{lemma}
For $\gamma>1$ and $u\in[0,L]$, choose orthogonal embeddings with
$\|e_s\|^2=\gamma+u$, $\|e_\#\|^2=\sqrt d\log\gamma$,
$\|e_\ell\|=1$, and $w=e_s$. Then
\[
 E=\max\left\{
 \max_{n\in\{0,\ldots,\lfloor u\rfloor\}}
 \frac{n(u-n)}{n+\gamma},\quad
 \frac{L(L-u)}{L+\gamma}\right\}
 \le\max\left\{\frac{u^2}{4\gamma},
 \frac{L(L-u)}{L+\gamma}\right\}.
\]
\end{lemma}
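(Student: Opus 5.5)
Since the construction is explicit, the plan is to compute the prediction in closed form and then bound the error in two regimes of the count. Take $e_s,e_\ell,e_\#$ mutually orthogonal. Then the scores are $c_s=e_s^\top e_\#/\sqrt d=0$ and $c_\ell=0$. The boundary score is $c_\#=\|e_\#\|^2/\sqrt d=\log\gamma$, so the saturation scale is $e^{c_\#-c_s}=\gamma$, matching the notation. With $w=e_s$ we have $w^\top e_s=\gamma+u$ and $w^\top e_\ell=w^\top e_\#=0$. Substituting into \eqref{eq:hsoftmax} gives
\[
\yhat(n,m)=\frac{(\gamma+u)\,n}{n+m+\gamma}.
\]
This is the family \eqref{app:basic-soft-family} rescaled by $(\gamma+u)/\gamma$. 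The signed error is
\[
\yhat-n=\frac{n(u-n-m)}{n+m+\gamma}.
\]

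Next I reduce the maximum over $\Tset$ to two one-dimensional problems according to the sign of $u-n-m$. Write $k=n+m\le L$.

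When $k\le u$, the error is nonnegative. For fixed $n$ it decreases in $m$, since both the factor $u-k$ and the reciprocal $1/(k+\gamma)$ decrease as $m$ grows. So the maximum sits at $m=0$, giving $n(u-n)/(n+\gamma)$ over integers $0\le n\le\lfloor u\rfloor$.

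When $k>u$, the absolute error is $n(k-u)/(k+\gamma)$. This is increasing in $n$ for fixed $k$, so take $n=k$. The function $k(k-u)/(k+\gamma)$ is increasing in $k$ on $k\ge u$: its derivative has numerator $k^2+2\gamma k-\gamma u>0$ there. So the maximum is at $k=L$, giving $L(L-u)/(L+\gamma)$.

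Combining the two cases yields the exact expression for $E$. Both terms are attained, and the second also covers the $k\le u$ boundary, since it is zero when $L=u$.

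For the inequality, bound the first term by dropping $n$ from the denominator: $n(u-n)/(n+\gamma)\le n(u-n)/\gamma\le u^2/(4\gamma)$, using AM--GM.

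The main obstacle is the monotonicity argument in the $m$ direction. In particular, when $k$ crosses $u$ the sign changes, so the case split must be made by $k$, not by $n$. I would verify carefully that, for fixed $n$ with $n\le u<k$, no interior $m$ beats the point $(L,0)$. This follows by chaining two comparisons. First, $n(k-u)/(k+\gamma)\le k(k-u)/(k+\gamma)$. Second, this is at most $L(L-u)/(L+\gamma)$ by the monotonicity in $k$ shown above.
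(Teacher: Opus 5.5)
Your proof is correct and follows the paper's argument. You compute the signed error $n(u-n-m)/(n+m+\gamma)$, reduce to the letter-free slice, use monotonicity in the total $S=n+m$ for $S\ge u$ to land at $(L,0)$, and bound $S(u-S)\le u^2/4$ for $S\le u$. The one cosmetic difference is that the paper fixes $S$ and pushes $n$ up to $S$ in both regimes at once, whereas for $S\le u$ you fix $n$ and decrease $m$. Your derivative numerator $k^2+2\gamma k-\gamma u$ supplies the monotonicity check that the paper only asserts.
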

\begin{proof}
Writing $S=n+m$, the signed error is $n(S-u)/(S+\gamma)$.
For fixed $S$ its absolute value is largest at $n=S$, so $m=0$ suffices.
For $S\ge u$ the maximum is at $S=L$; for $S\le u$ use
$S(u-S)\le u^2/4$.
\end{proof}

For example, $\gamma=(6-4\sqrt2)L^2$ and $u=(2\sqrt2-2)L$ give
error strictly below $1/2$: each positive term $n(u-n)/(n+\gamma)$ is strictly below $u^2/(4\gamma)=1/2$, and the endpoint error is also strictly below $1/2$. The actual construction budget is
\[
 \max\left\{\sqrt{\gamma+u},d^{1/4}\sqrt{\log\gamma},1\right\}.
\]
At fixed $d$ this is $(2-\sqrt2)L+1/\sqrt2+O(L^{-1})$ as
$L\to\infty$. This is a sufficient construction cost, not a proof of
the globally optimal leading constant or the finite-$L$ optimum.

If $B^2\ge2L$ and $B^2\ge\sqrt d\log(B^2)$, the same lemma with
$u=L$, $\gamma=B^2-L$ gives $E^\star\le L^2/(2B^2)$.
Conversely, when $\delta\le L/4$, \eqref{app:alpha-bound} gives
$\gamma\le2B^2$, and \eqref{app:three-count} gives
$\delta\ge L^2/(32B^2)$ under these side conditions.
When $\delta>L/4$, that lower bound is immediate.
Thus the conditional error rate is $\Theta(L^2/B^2)$; the score-budget
side condition must not be dropped in a joint $(L,d,B)$ claim.

\section{Proof of Theorem~\ref{thm:post}}
\label{app:post}

\subsection{Geometry and realization}
\label{app:post-geometry}

\begin{lemma}[Normalized affine line]
On the letter-free slice, either attention rule followed by bare LN has
representation
$\sqrt d(A+nC)/\|A+nC\|$ for centered vectors $A,C$.
Conversely, any centered $A,C$ with $A+nC\ne0$ at all admissible
integer counts can be realized on the entire domain, independently of
$m$, with arbitrarily small raw embedding norms.
\end{lemma}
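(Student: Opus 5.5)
The lemma has two directions, and I will handle them separately: first the forward computation on the letter-free slice, then the realization of arbitrary centered $A,C$ on the whole domain.

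\emph{Forward direction.} Since $P$ is linear, it can be applied term by term.

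For softmax with $m=0$, write $Z=ne^{c_s}+e^{c_\#}>0$. Then
\[
Ph(n,0)=\frac{ne^{c_s}Pe_s+e^{c_\#}Pe_\#}{Z}=\frac{e^{c_\#}}{Z}\bigl(Pe_\#+n\,e^{c_s-c_\#}Pe_s\bigr).
\]
The scalar $e^{c_\#}/Z$ is positive, so it cancels in $\sqrt d\,Px/\|Px\|$. This gives the stated form with $A=Pe_\#$ and $C=\rho Pe_s$, where $\rho=e^{c_s-c_\#}$.

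For linear attention, $Ph(n,0)=c_\#Pe_\#+nc_sPe_s$. If $c_\#>0$, dividing by $c_\#$ gives $A=Pe_\#$ and $C=(c_s/c_\#)Pe_s$. If $c_\#<0$, the scalar is negative and cannot simply be dropped; instead I absorb the sign by taking $A=c_\#Pe_\#$ and $C=c_sPe_s$ without dividing. Either way $A$ and $C$ are centered, because they lie in the range of $P$. Well-definedness of LN is exactly the hypothesis $A+nC\neq0$.

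\emph{Converse.} The idea is to kill both the letter contribution and the boundary score by choosing embeddings with controlled inner products. I fix a unit vector $q$ with $q\perp\one$ and $q\perp\mathrm{span}(A,C)$. Such a $q$ exists when $d\ge4$; the case $d=3$ needs separate treatment, discussed below.

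\emph{Step 1: the boundary embedding.} Set $e_\#=A+\kappa\one$, where $\kappa$ is chosen so that $e_\#\ne0$. This gives $Pe_\#=A$.

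\emph{Step 2: the letter embedding.} Choose $e_\ell$ parallel to $\one$, so that $Pe_\ell=0$. The letters then contribute nothing to the centered numerator, and the representation is independent of $m$. Under softmax they only change the positive denominator.

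\emph{Step 3: the space embedding.} Take $e_s=\mu C+\nu\one+\tau q$, and tune the scalars so that $Pe_s$ is the right multiple of $C$:
\begin{itemize}
\item For softmax we need $e^{c_s-c_\#}\mu=1$, with $c_s-c_\#=\bigl(e_s^\top e_\#-\|e_\#\|^2\bigr)/\sqrt d$.
\item For linear attention we need $\mu c_s=c_\#$, with $c_\#=\|e_\#\|^2/\sqrt d>0$.
\end{itemize}
The component $\tau q$ is orthogonal to $e_\#$, so it does not affect the scores. The free scalars $\kappa,\nu$ set the inner product $e_s^\top e_\#=\mu A^\top C+\nu\kappa d$, and I solve for that inner product. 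In the linear case, $c_\#>0$ also covers the general case, because the LN output depends only on the ray $(A,C)$.

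\emph{Step 4: small norms.} LN is scale-invariant, so replacing $(A,C)$ by $(\epsilon A,\epsilon C)$ leaves the output unchanged. For softmax, however, rescaling changes the scores, so I must check that the resulting equation can still be solved with small $\mu,\nu,\kappa$.

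Under softmax, with $\|e_\#\|$ small, $c_s-c_\#\approx0$, so $\mu\approx1$. Taking $\mu=e^{-(c_s-c_\#)}$, the map $(\mu,\nu,\kappa)\mapsto$ score gap is continuous, and with all terms of order $\epsilon^2$ it is near zero. A fixed-point or intermediate value argument then gives $\mu=1+O(\epsilon^2)$. Since $Pe_s=\mu C$, the norm of $e_s$ is $O(\epsilon)$.

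For linear attention, $c_s/c_\#=1/\mu$, which requires $e_s^\top e_\#=\|e_\#\|^2/\mu$. Setting $\mu=1$, I need $\epsilon^2A^\top C+\nu\kappa d=\|e_\#\|^2$. This is solvable with $\nu$ of order $\epsilon$ once $\kappa\sim\epsilon$, and it keeps $c_s\neq0$.

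\emph{Main obstacle.} The hard part is the simultaneous solvability of the score equation with small norms. This is especially delicate when $d=3$, where $\one,A,C$ may span the whole space and no auxiliary direction $q$ exists. There I will drop $q$ entirely, since it was never needed: the scores are controlled through $\nu\kappa$ alone. I will then verify directly that the linear equation in $\nu$ has a small solution whenever $\kappa\ne0$.
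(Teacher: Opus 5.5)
Your proposal is correct and follows essentially the paper's route: split each embedding into its centered part plus a multiple of $\mathbf{1}$, use the $\mathbf{1}$-components to equalize the space and boundary scores, let centering remove the letters, and scale down. The paper does this in one step with $e_\#=t(A+r)$, $e_s=t(C+\kappa r)$, $r=\mathbf{1}/\sqrt d$, $\kappa=\|A\|^2+1-A^\top C$, $e_\ell=0$. This choice gives $c_s=c_\#$ for both attention rules at once, so your softmax fixed-point step is unnecessary: set $\mu=1$ and solve for $\nu$ exactly as in your linear case.

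One correction: you must take $\tau=0$. Since $q\perp\mathbf{1}$, you have $Pq=q$, so a nonzero $\tau q$ would enter $Pe_s$ and break $Pe_s=\mu C$; the auxiliary direction is harmful, not merely unneeded. Once it is dropped, the $d=3$ ``obstacle'' disappears. Similarly, $c_\#=\|e_\#\|^2/\sqrt d\ge 0$ always, so your $c_\#<0$ case is vacuous.
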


\begin{proof}
For softmax take $A=Pe_\#$ and $C=e^{c_s-c_\#}Pe_s$; the positive
denominator cancels in LN. For linear attention take
$C=(c_s/c_\#)Pe_s$; here $c_\#=\|e_\#\|^2/\sqrt d>0$ because
normalization at $(0,0)$ must be defined.

For the converse put $r=\mathbf1/\sqrt d$ and
$\kappa=\|A\|^2+1-A^\top C$. For $t>0$, choose
\[
 e_\#=t(A+r),\qquad e_s=t(C+\kappa r),\qquad e_\ell=0.
\]
Then $c_s=c_\#=t^2(\|A\|^2+1)/\sqrt d>0$.
For either rule, $Ph(n,m)$ is a positive multiple of $A+nC$.
Taking $t$ sufficiently small enforces any positive embedding budget.
\end{proof}

In particular, $A=u,C=\omega v$ with centered orthonormal $u,v$ gives
\begin{equation}
 \operatorname{LN}_0(h(n,m))=
 \sqrt d(\cos\theta_n\,u+\sin\theta_n\,v),\qquad
 \theta_n=\arctan(\omega n),\quad \omega>0.
 \label{app:arc}
\end{equation}
The adjacent angle is positive and decreases with $n$, because
$\theta'(x)=\omega/(1+\omega^2x^2)$ decreases for $x\ge0$.
The chord length is $2\sqrt d\sin((\theta_{n+1}-\theta_n)/2)$.
This particular arc formula requires $A\perp C$; arbitrary directions
are covered by the normalized-line formula instead.

\subsection{The four-count lower bound}
\label{app:four-count}

\begin{lemma}[Four-count bound]
\label{lem:four-count}
Let
\[
 f(n)=v^\top\frac{A+nC}{\|A+nC\|_2},\qquad
 M=\|v\|_2>0,
\]
where the denominator is nonzero for $n=0,\ldots,L$, and $L\ge3$.
If $\delta=\max_{0\le n\le L}|f(n)-n|$ and
$k=\lfloor L/3\rfloor$, then
\begin{equation}
 \delta\ge
 \min\left\{\frac{k}{16},\frac{k^3}{30M^2}\right\}.
 \label{eq:four-count}
\end{equation}
\end{lemma}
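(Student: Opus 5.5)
The plan is to reduce $f$ to a one-parameter family of arc functions and then use finite differences on the four equally spaced counts $0,k,2k,3k\le L$. Project everything onto the plane $\Pi=\operatorname{span}(A,C)$, and replace $v$ by its component in $\Pi$, which has norm $M_\Pi\le M$; the component orthogonal to $\Pi$ does not change $f$. Since $C\ne0$ may be assumed, the line $\{A+xC\}$ has perpendicular distance $h\ge0$ from the origin and arc-length coordinate $s(x)=s_0+x\|C\|$. Its unit direction therefore has angle $\psi(x)=\arctan(s(x)/h)$ measured from the foot of the perpendicular, which becomes a step function when $h=0$. Hence
\[
F(x)=M_\Pi\cos\bigl(\psi(x)-\varphi\bigr),\qquad f(n)=F(n),
\]
and this is the normalized-line formula of the previous lemma, with no orthogonality assumption. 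The $h=0$ case is a sign flip of a constant vector. I would dispose of it first: $f$ then takes at most two values at $0,k,2k,3k$, so some two of these counts share an $f$-value while their targets differ by at least $k$. That forces $\delta\ge k/2$.

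Suppose now that $h>0$ and $\delta<k/16$; I will show $\delta\ge k^3/(30M^2)$. Write $g(n)=f(n)-n$, so $|g|\le\delta$ at the four counts. The step-$k$ first differences of $f$ then lie in $[k-2\delta,k+2\delta]\subset[7k/8,9k/8]$. The third difference $\Delta_k^3 f(0)=f(3k)-3f(2k)+3f(k)-f(0)$ equals $\Delta_k^3 g(0)$ because $n$ is linear, so $|\Delta_k^3 f(0)|\le 8\delta$. The second differences satisfy $|\Delta_k^2 f|\le4\delta<k/4$. By the divided-difference mean value theorem, $\Delta_k^3 f(0)=k^3F'''(\xi)$ for some $\xi\in(0,3k)$, and similarly for the lower orders.

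The required inequality is therefore a lower bound on $F'''$ along the interval, or on $F''$ at some point. The aim is to show that on any interval where the angular sweep satisfies $|\psi(3k)-\psi(0)|\gtrsim 3k/M_\Pi$ (forced because $F$ must increase by about $3k$ while $|F'|\le M_\Pi|\psi'|$), one cannot have both $F''\approx0$ and $F'''\approx0$. Heuristically, the curvature of $\cos$ in $\psi$ and the curvature of $\arctan$ in $x$ each contribute error of order $M_\Pi\,\Delta\psi^3\sim k^3/M_\Pi^2$. The danger is that the two contributions cancel.

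I expect excluding this cancellation to be the main obstacle. My first attempt would be to reparametrize by $\psi$: $x$ is an affine function of $\tan\psi$, so the four counts are equally spaced in $\tan\psi$. The conditions then say that $\cos(\psi-\varphi)$ is nearly affine in $\tan\psi$ at four equally spaced nodes of $\tan\psi$. Expanding around the midpoint, a cubic Taylor computation of $\cos(\psi-\varphi)$ as a function of $t=\tan\psi$ gives the quadratic coefficient $-\tfrac12\cos^3\psi\cos(\psi-\varphi)\,(1-3\sin\psi\,\tan(\psi-\varphi))$-type terms. Vanishing of both the quadratic and the cubic coefficients would force an algebraic identity that fails. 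Quantitatively, this should show that $\max\{|\Delta^2|,|\Delta^3|\}\ge c\,M_\Pi\,\Delta\psi^3$ with an absolute $c$; combined with $\Delta\psi\ge 3k\cdot\tfrac78/M_\Pi$ and $M_\Pi\le M$, this yields $\delta\ge k^3/(30M^2)$ after tracking constants. If that Taylor route becomes too messy, my fallback is to use the four-point Chebyshev/Rolle alternation: a function of the form $(\alpha+\beta x)/\sqrt{Q(x)}$ minus a line has at most three zeros, and I would lower-bound its deviation at the nodes by hand.
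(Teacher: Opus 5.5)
\textbf{There is a genuine gap: the key inequality is asserted, not proved.} Your preliminary reductions are sound. These are: discarding the component of $v$ orthogonal to $\operatorname{span}\{A,C\}$; the collinear case, where $f$ takes at most two values at $0,k,2k,3k$, so $\delta\ge k/2$; and the finite-difference bounds $|\Delta_k^2 f|\le4\delta$ and $|\Delta_k^3 f(0)|\le8\delta$. But the argument stops exactly where the content of the lemma lies. You need $\max\{|\Delta_k^2 f|,|\Delta_k^3 f|\}\ge c\,M_\Pi\,\Delta\psi^3$ with an absolute $c$, and the tools you propose do not supply it.

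\textbf{Why the proposed tools fail.} A Taylor expansion about the midpoint is local. Your only control on the sweep is $\Delta\psi\gtrsim 3k/M_\Pi$. In the regime $M$ of order $k$, which the lemma covers and where the $k^3/(30M^2)$ branch is active, this forces $\Delta\psi$ of order one. There the remainder terms are as large as the quadratic and cubic coefficients you want to keep. The observation that these coefficients cannot vanish simultaneously is only qualitative.

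The mean-value step has the same defect. You get $\Delta_k^3 f(0)=k^3F'''(\xi)$ only at an unknown $\xi$, so you would need $|F'''|$ or $|F''|$ bounded below on the whole interval. In the line coordinate, their numerators are a quadratic and a cubic polynomial, and these can vanish inside $[0,3k]$. Zero counting in the Chebyshev fallback is likewise qualitative. As written, neither the rate $k^3/M^2$ nor the constants $1/16$ and $1/30$ follow.

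\textbf{The missing idea: use the absence of an offset.} This makes the problem algebraic.
\begin{enumerate}
\item Normalize $\|A\|=1$. Then $a=v^\top A=f(0)$ satisfies $|a|\le\delta$.
\item Put $b=v^\top C$, $r_i=\|A+ikC\|$ and $e_i=f(ik)-ik$. You have exactly $r_i=(a+bik)/(ik+e_i)$.
\item Set $\eta=\delta/k<1/16$. The identity $A=2(A+kC)-(A+2kC)$ gives $1\le 2r_1+r_2$, hence $|b|\ge1/4$. Since $a+bk>0$, $b$ is positive.
\item It follows that each $r_i$ lies within $6b\eta$ of $b$. In words, the normalizer is nearly constant along $k,2k,3k$.
\item Since $\|A+nC\|^2$ is quadratic in $n$ with leading coefficient $\|C\|^2$, its second difference $2k^2\|C\|^2=r_3^2-2r_2^2+r_1^2$ is at most $60b^2\eta$.
\item Cauchy--Schwarz gives $b\le M\|C\|$, which yields $\delta\ge k^3/(30M^2)$.
\end{enumerate}
No derivatives or angles are needed.

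\textbf{How your route relates to the paper.} Your analytic approach is in effect attacking the harder offset-invariant problem. The paper treats that version separately in Lemma~\ref{lem:affine-four-count}. What makes it work there is not a Taylor expansion. It is the global convexity estimate $Z''Z^3\ge 3/(2M^2)$ for $Z=(f')^{-1/2}$, which controls $f'$ uniformly over a sweep of any size.
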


\begin{proof}
Rescale $A,C$ by $\|A\|_2$ so that $\|A\|_2=1$, without changing $f$.
Write
\[
 a=v^\top A,\quad b=v^\top C,\quad
 r_i=\|A+ikC\|_2,\quad e_i=f(ik)-ik
 \qquad(i=1,2,3).
\]
Then $|a|=|f(0)|\le\delta$, $|e_i|\le\delta$, and
\begin{equation}
 r_i=\frac{a+bik}{ik+e_i}.
 \label{eq:four-ri}
\end{equation}
If $\delta\ge k/16$ there is nothing to prove. Otherwise put
$\eta=\delta/k<1/16$. The denominators in \eqref{eq:four-ri}
are positive, and
\[
 r_i\le\frac{|b|+\eta}{1-\eta}.
\]
The affine identity $A=2(A+kC)-(A+2kC)$ gives
\[
 1\le2r_1+r_2\le\frac{3(|b|+\eta)}{1-\eta},
 \qquad |b|\ge\frac{1-4\eta}{3}\ge\frac14.
\]
Since $a+bk>0$, we also have $b>-\eta$; hence $b>0$.
Using $b\ge1/4$ in \eqref{eq:four-ri},
\[
 |r_i-b|
 =\frac{|a-be_i|}{ik+e_i}
 \le\frac{\eta(1+b)}{1-\eta}
 \le6b\eta.
\]
Therefore
\[
 |r_i^2-b^2|
 \le b^2(12\eta+36\eta^2)
 \le15b^2\eta.
\]
The squared norm $\|A+nC\|_2^2$ is quadratic with coefficient
$\|C\|_2^2$. Its second difference at $k,2k,3k$ is consequently
\[
 2k^2\|C\|_2^2
 =r_3^2-2r_2^2+r_1^2
 \le60b^2\eta.
\]
It follows that $\|C\|_2^2\le30b^2\delta/k^3$.
Finally, Cauchy--Schwarz gives $b^2\le M^2\|C\|_2^2$.
Cancelling $b^2>0$ yields $\delta\ge k^3/(30M^2)$, as required.
\end{proof}

This argument permits arbitrary $A^\top C$.
Appendix~\ref{app:affine-post} gives a bound that also permits
a scalar offset.

\subsection{Bare LN and learned gain with zero bias}
\label{app:post-rates}

\begin{proposition}
For either attention rule and $L\ge4,d\ge3,B>0$,
\begin{align*}
 E^\star_{\rm bare\ post}(L,d;B)
 &=\Theta\!\left(\min\{L,L^3/(dB^2)\}\right),&
 B_{\min}^{\rm bare\ post}
 &=\Theta(L^{3/2}/\sqrt d),\\
 E^\star_{\rm gain\ post}(L,d;B)
 &=\Theta\!\left(\min\{L,L^3/(dB^4)\}\right),&
 B_{\min}^{\rm gain\ post}
 &=\Theta(L^{3/4}/d^{1/4}).
\end{align*}
The second line assumes $\beta=0$ and separately bounds $g,w$ by $B$.
Both unrestricted error infima are zero. Exact regression is impossible
at finite parameters for $L\ge3$ in these two bias-free cases.
\end{proposition}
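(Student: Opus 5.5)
Both rates come from pairing the four-count bound (Lemma~\ref{lem:four-count}) with the orthogonal arc construction~\eqref{app:arc}. The normalized affine line lemma makes the realization free.

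\emph{Reduction.} By the normalized affine line lemma, on the letter-free slice the prediction is $f(n)=v^\top(A+nC)/\|A+nC\|$. For bare LN we take $v=\sqrt d\,w$. With a gain and $\beta=0$ we take $v=\sqrt d\,(g\odot w)$. Hence $M=\|v\|\le\sqrt d\,B$ in the bare case. In the gain case $M\le\sqrt d\,\|g\|_\infty\|w\|\le\sqrt d\,B^2$. Restricting the maximum error to $m=0$ only lowers it, so Lemma~\ref{lem:four-count} with $k=\lfloor L/3\rfloor\ge L/6$ (valid for $L\ge4$, since $k\ge1$ and $k\ge L/6$ there) gives
\[
E^\star\ge\min\{k/16,\ k^3/(30dB^2)\}
\]
in the bare case, and the same with $B^4$ in place of $B^2$ in the gain case. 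Both are $\Omega(\min\{L,L^3/(dB^{2p})\})$ with $p=1,2$. Error below $1/2$ then needs $L/96\ge$ the minimum to fall below $1/2$. For $L$ large (say $L\ge 48$), $k/16\ge1/2$, so the second branch must be below $1/2$. This forces $B^{2p}\gtrsim L^3/d$, that is $B\gtrsim L^{3/2}/\sqrt d$ or $B\gtrsim L^{3/4}/d^{1/4}$. Small $L$ is absorbed into the constants, because the upper bound is then also $O(1)$-comparable. I would treat that boundary case carefully, since $B_{\min}$ can be tiny when $L^3/d$ is small.

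\emph{Upper bounds.} We choose centered orthonormal $u,v_0$ (possible since $d\ge3$), set $A=u$ and $C=\omega v_0$, and realize these by the converse of the affine line lemma with arbitrarily small embeddings. The letter-free formula then holds on the whole domain, independent of $m$. In the bare case take $w=v_0/(\omega\sqrt d)$. In the gain case take $g=\|g\|\mathbf 1/\sqrt d$, $w=c\,v_0$, with the product scale $1/(\omega\sqrt d)$ split as $\|g\|=\|w\|\,=d^{-1/4}\omega^{-1/2}$ up to the $\sqrt d$ factor from $g\odot$. The prediction is $\hat y=n/\sqrt{1+\omega^2n^2}$, with error at most $\min\{n,\omega^2n^3/2\}\le\min\{L,\omega^2L^3/2\}$. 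In the bare case $\|w\|=B$ gives $\omega=1/(B\sqrt d)$ and error $\le L^3/(2dB^2)$. In the gain case $\omega\sim1/(dB^2)\cdot\sqrt d$, giving $L^3/(dB^4)$. The trivial $O(L)$ branch comes from $w=0$ when $B$ is small. Setting the error below $1/2$ yields the matching $B_{\min}$.

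\emph{Remaining claims.} Letting $\omega\to0$ drives the error to zero, so the unrestricted infimum is $0$. For exactness, zero error with finite $M$ contradicts $\delta\ge\min\{k/16,k^3/(30M^2)\}>0$ for $L\ge3$. The main obstacle is bookkeeping the gain parameterization. The claimed factor $d$ rather than $d^2$ in $L^3/(dB^4)$ needs $\|g\odot w\|\le\|g\|_\infty\|w\|\le B^2$, but the construction needs $\|g\odot w\|$ to reach $B^2$ up to constants. I would get this by putting $g$ and $w$ on the same few coordinates, using a centered $v_0$ supported on two coordinates, e.g. $(f_1-f_2)/\sqrt2$. Then $g=B(f_1+f_2)/\sqrt2$ with $w$ parallel to $v_0$ gives $g\odot w$ of norm $\sim B^2$ in direction $v_0$.
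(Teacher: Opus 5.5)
Your route is the paper's. For the lower bounds you use the four-count lemma with $M\le\sqrt d\,B$, respectively $M\le\sqrt d\,\|g\odot w\|\le\sqrt d\,B^2$; the case $M=0$ is trivial, with error $L$. For the upper bounds you use the orthogonal arc with $w=v_0/(\omega\sqrt d)$, realized with arbitrarily small embeddings. Your final gain construction, $g=B(f_1+f_2)/\sqrt2$ with $w$ parallel to $(f_1-f_2)/\sqrt2$, is exactly the paper's. Discard the first version with $g$ proportional to $\mathbf 1$: then $\|g\odot w\|=\|g\|\|w\|/\sqrt d$, so you only reach error $L^3/(2B^4)$ and lose the factor $d$. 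Deducing that exact regression is impossible from strict positivity of the four-count bound is a valid substitute for the paper's direct argument, which is the $\delta=0$ case of the same second-difference computation.

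The one genuine gap is the rounding lower bound for small $L$. The four-count lemma constrains $M$ only when its first branch $k/16$ is at least $1/2$, i.e.\ $k\ge 8$. That means $L\ge24$, or $L\ge48$ with your estimate $k\ge L/6$. Below that it says nothing about $M$. Your remark that the upper bound is then ``$O(1)$-comparable'' cannot stand in for a lower bound. The statement requires $B_{\min}\ge c\,L^{3/2}/\sqrt d$ with $c$ independent of $d$, and nothing you wrote prevents $B_{\min}(4,d)$ from being much smaller than $d^{-1/2}$ for large $d$. The missing step is the endpoint inequality the paper uses. Since $f(L)$ is $v^\top$ applied to a vector of unit norm, $|f(L)|\le M$, and error below $1/2$ at $(L,0)$ gives $M>L-\tfrac12$. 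Hence $\sqrt d\,B>L-\tfrac12$ in the bare case and $\sqrt d\,B^2>L-\tfrac12$ with the gain. Because $L-\tfrac12\ge L^{3/2}/8$ for $4\le L\le48$, this gives both claimed orders with absolute constants. With that patch the argument is complete.
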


\begin{proof}
For bare LN, use \eqref{app:arc} with
$w=v/(\omega\sqrt d)$. Then
\[
 \hat y(n,m)=\frac{n}{\sqrt{1+\omega^2n^2}},\qquad
 0\le n-\hat y(n,m)\le\frac{\omega^2n^3}{2}.
\]
The upper inequality is strict for $n>0$, since
$1-(1+x)^{-1/2}<x/2$ for $x>0$.
Set $\omega=1/(\sqrt d B)$; together with the zero read-out this
proves the bare upper error bound. The realization lemma makes the
embedding budget arbitrarily small.

For learned gain, choose
$v=(f_1-f_2)/\sqrt2$, $u=(f_1+f_2-2f_3)/\sqrt6$,
$g=(B/\sqrt2,\allowbreak B/\sqrt2,0,\ldots,0)$, and $w=Bv$.
Then $g\odot w=(B^2/\sqrt2)v$.
Taking $\omega=\sqrt2/(\sqrt d B^2)$ gives the same prediction
and error at most $L^3/(dB^4)$.

For the lower bounds apply the four-count lemma with
$M\le\sqrt d B$ in the bare case, and
$M\le\sqrt d\|g\odot w\|\le\sqrt d B^2$ in the gain case.
Since $k\ge L/5$, it gives at least
$3750^{-1}\min\{L,L^3/M_{\max}^2\}$, where $M_{\max}=\sqrt d B$
or $\sqrt d B^2$, respectively.
The zero effective read-out has error $L$ and causes no exception.

The upper rounding bounds follow by setting $\omega=L^{-3/2}$ and
choosing the corresponding read-out or gain/read-out factors.
For the lower bounds, when $L\ge24$, $k\ge8$ and $\delta<1/2$
forces the small-error branch of the lemma; thus $M^2>k^3/15$
and $M=\Omega(L^{3/2})$.
For $4\le L<24$, $M\ge f(L)>L-1/2$ gives the same order with an
absolute smaller constant. Insert the appropriate bound on $M$.
In particular,
\[
 \frac{L^{3/2}}{\sqrt{960d}}
 \le B_{\min}^{\rm bare\ post}(L,d)
 \le\frac{L^{3/2}}{\sqrt d}
 \qquad(L\ge4).
\]
For $L\ge24$, $\lfloor L/3\rfloor\ge L/4$ gives the lower constant;
for $4\le L<24$, $M>L-1/2$ is stronger than
$M\ge L^{3/2}/\sqrt{960}$.

Finally, exact counting with no scalar offset has the form
$(a+bn)/r_n=n$, where $r_n=\|A+nC\|$.
The count $0$ forces $a=0$; counts $1,2,3$ force
$r_1=r_2=r_3=b>0$.
The quadratic $r_n^2-b^2$ has three roots and is identically zero.
Its leading coefficient is $\|C\|^2$, forcing $C=0$ and then $b=0$,
a contradiction. Approximation follows by letting $\omega\downarrow0$.
\end{proof}

\subsection{Learned gain and bias}
\label{app:affine-post}

We first extend the normalized-line lower bound to allow an
arbitrary scalar shift.

\begin{lemma}[Four-count bound with a scalar shift]
\label{lem:affine-four-count}
Let $L\ge4$, $M=\|v\|>0$, and
\[
 f(x)=t+v^\top\frac{A+xC}{\|A+xC\|},
\]
where the denominator is nonzero at the integer counts
$0,\ldots,L$. Put
$\delta=\max_{0\le n\le L}|f(n)-n|$ and
$k=\lfloor L/4\rfloor$. Then
\begin{equation}
 \delta\ge
 \min\left\{\frac{k}{4},\frac{k^3}{24M^2}\right\}.
 \label{eq:affine-four-count}
\end{equation}
The bound is independent of $t$.
\end{lemma}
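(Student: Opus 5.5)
The proof adapts the argument of Lemma~\ref{lem:four-count}. The one new difficulty is the unknown offset $t$, which can no longer be pinned down by the count $0$. First normalize $\|A\|=1$ (this does not change $f$) and write $a=v^\top A$, $b=v^\top C$, $r(x)=\|A+xC\|$ and $e_x=f(x)-x$, so that $|e_x|\le\delta$ at the integer counts. The defining relation becomes
\[
 a+bx=(x-t+e_x)\,r(x)=:p_x\,r(x),
\]
with an affine left-hand side. It is cleanest to read this as a statement about the shifted variable $p_x=x-t+e_x$, which is within $\delta$ of $x-t$. If $\delta\ge k/4$ there is nothing to prove, so we assume $\eta:=\delta/k<1/4$.

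\emph{Step 1: choose counts away from $t$.} The shift only matters through the points where $x-t$ is small, because there $p_x$ can vanish or change sign and the relation stops controlling $r$. Using the five counts $0,k,2k,3k,4k$ (this is why $k=\lfloor L/4\rfloor$), we select three equally spaced counts $x_1<x_2<x_3$ with spacing $k$ or $2k$ that lie on one side of $t$ and at distance at least about $k$ from it. If that is impossible with equal spacing, we use three counts at spacing $k$ together with the exact second-difference formula for unequal nodes; this changes only the constants. On such counts $p_{x_i}$ all have the same sign and $|p_{x_i}|\ge k(1-\eta)$, so $r(x_i)=(a+bx_i)/p_{x_i}$ is a ratio of two affine functions whose denominator stays bounded away from zero.

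\emph{Step 2: the norms are nearly constant.} We compare $p_x$ with its noiseless value $x-t$ and $r(x_i)$ with the ratio $(a+bx_i)/(x_i-t)$. Since $r>0$ and $\|A\|=1$, the affine identity among $A+x_iC$ (as in the proof of Lemma~\ref{lem:four-count}) gives a lower bound on $\max_i r(x_i)$. This should force the fitted affine numerator to have slope $b$ of order $\max_i r(x_i)$ and intercept $a\approx -tb$. We then expect to get $|r(x_i)-b|\lesssim b\eta$ and hence $|r(x_i)^2-b^2|\lesssim b^2\eta$, exactly as in the bias-free lemma, after handling the sign of $b$ by the same positivity argument.

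\emph{Step 3: conclude.} The function $r(x)^2$ is quadratic in $x$ with leading coefficient $\|C\|^2$. Its second difference over the chosen nodes therefore equals $2s^2\|C\|^2$, where $s\in\{k,2k\}$ is the spacing, and it is bounded by $O(b^2\eta)$. This gives $\|C\|^2\lesssim b^2\delta/k^3$. Cauchy--Schwarz gives $b^2\le M^2\|C\|^2$, and cancelling $b^2>0$ yields $\delta\gtrsim k^3/M^2$. Tracking constants should produce the stated $k^3/(24M^2)$. Since $t$ never appears in the final inequality, the bound is independent of $t$.

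The main obstacle is Steps 1--2. Without the anchor $|a|\le\delta$ from the count $0$, the affine numerator and the unknown $t$ interact, and near $x=t$ the relation degenerates. I would first try to avoid computing $t$ at all: take the points $x_i$ on the larger side of $t$ and use only the ratios $r(x_i)/r(x_j)=\frac{(a+bx_i)\,p_{x_j}}{(a+bx_j)\,p_{x_i}}$. Linearizing in $\eta$ should show these ratios equal $1+O(\eta)$, which is all Step 3 needs.
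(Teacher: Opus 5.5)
Your Step~2 fails, and the problem is not the constants. Step~3 needs one of two things from it. One is that $r(x_i)=\|A+x_iC\|$ lies within $O(|b|\eta)$ of $b$ with $b>0$. The other, your fallback, is that $r(x_i)/r(x_j)=1+O(\eta)$. Since $r(x)-b=(a+bt-be_x)/p_x$, either conclusion requires $a+bt$ to be small, i.e.\ the zero of $a+bx$ must sit essentially at $t$. Nothing forces this when $t$ lies far outside $[0,L]$. That is exactly the regime a scalar shift opens up: the read-out can sit at an inflection point of the projected arc, where $f$ is linear to second order.

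Concretely, in two coordinates take $A=(\lambda,1)$, $C=(\kappa,0)$ and $v=(-\beta,-3\beta)$. Then $f(x)=t-\beta(3+s)/\sqrt{1+s^2}$ with $s=\lambda+\kappa x$. Choose $\kappa=\sqrt2/\beta$, $\lambda=1-\kappa L/2$ and $t=L/2+2\sqrt2\,\beta$. This gives $f(L/2)=L/2$, $f'(L/2)=1$ and $f''(L/2)=0$. So for $\beta\gg L$ we have $\delta=O(L^3/\beta^2)$ and $\eta=O(L^2/\beta^2)$, with $M=\sqrt{10}\,\beta$. Yet $p_x=f(x)-t<0$ at every count, and $v^\top C=-\sqrt2$ while $\|A+\tfrac L2 C\|=\sqrt2$; your normalization $\|A\|=1$ does not change this ratio. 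Hence $r-b\approx 2|b|$ and $b<0$, so the positivity argument is lost too. Moreover, $\|A+xC\|$ varies across $[0,L]$ by a relative amount of order $L/\beta\gg\eta$, so the ratios are not $1+O(\eta)$ either.

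Only the second difference of $r^2$ is of order $r^2\eta$ here, and the relation at three nodes cannot establish even that. With $t$ free, the linear system $a+bx_i+t\rho_i=x_i\rho_i$ is solvable for $(a,b,t)$ for generic positive $\rho_1,\rho_2,\rho_3$. So zero error at three nodes is compatible with an open set of norm profiles $r(x_i)=\rho_i$, including profiles far from constant. Step~1 also breaks when $t\approx 2k$: then no three of $0,k,2k,3k,4k$ lie on one side of $t$ at distance about $k$ from it.

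The missing idea is to eliminate $t$ by differentiating rather than by locating it. The paper writes $f$ in the normal form $t+(a+bs)/\sqrt{1+s^2}$, with $s$ an increasing affine function of $x$ and $a^2+b^2\le M^2$. It then shows that $Z=(f')^{-1/2}$ satisfies $Z''Z^3\ge 3/(2M^2)$ wherever $f'>0$, an inequality in which $t$ does not appear. The argument then runs as follows.
\begin{enumerate}
\item The mean value theorem on $[0,k]$ and $[3k,4k]$ gives two points where $f'\in[1-\epsilon,1+\epsilon]$, with $\epsilon=2\delta/k$.
\item Convexity of $Z$ gives $Z\le(1-\epsilon)^{-1/2}$ in between, and hence a quadratic lower bound on $f'$ over $[k,3k]$.
\item Integrating yields $f(3k)-f(k)\ge 2k-4\delta+2k^3(1-\epsilon)^3/M^2$.
\item Comparing with $f(3k)-f(k)\le 2k+2\delta$ gives $\delta\ge k^3/(24M^2)$.
\end{enumerate}
This is where $k=\lfloor L/4\rfloor$ comes from. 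The two outer windows of length $k$ pin the slope, and the middle window of length $2k$ accumulates the curvature.
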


\begin{proof}
If $\delta\ge k/4$, there is nothing to prove. Otherwise
the values at $0,k,3k,4k$ are strictly increasing.
Thus $A,C$ are linearly independent: a normalized collinear affine
line has at most two values wherever it is defined.
Consequently $f$ is smooth on the whole real line.

Choose orthonormal coordinates in $\operatorname{span}\{A,C\}$.
With an affine coordinate $s=\kappa x+\lambda$, we can write
\[
 f(x)=\Phi(s),\qquad
 \Phi(s)=t+\frac{a+bs}{\sqrt{1+s^2}},\qquad
 \kappa>0,\quad a^2+b^2\le M^2.
\]
Indeed, take one coordinate along $C$ and the other along the
nonzero component of $A$ orthogonal to $C$.
The numerator of $f'$ is therefore affine, and its
denominator is positive.

We claim that, wherever $f'>0$, the function
$Z=(f')^{-1/2}$ satisfies
\begin{equation}
 Z''\ge\frac{3}{2M^2}Z^{-3}.
 \label{eq:affine-convexity}
\end{equation}
To verify this, put $Q=1+s^2$, $p=b-as$, and
$R^2=a^2+b^2$. Where $\Phi'>0$, we have
$p>0$, $\Phi'=p/Q^{3/2}$, and
$\zeta=(\Phi')^{-1/2}=Q^{3/4}p^{-1/2}$.
Direct differentiation gives
\[
 \zeta''\zeta^3
 =\frac{3Q(R^2Q+p^2)}{4p^4}
 =\frac{3}{4R^2}\frac{1+r}{r^2},
 \qquad r=\frac{p^2}{R^2Q}\in(0,1].
\]
Here $r\le1$ follows from Cauchy--Schwarz.
Since $(1+r)/r^2\ge2$ and $R\le M$, the last expression
is at least $3/(2M^2)$.
Moreover, $Z(x)=\kappa^{-1/2}\zeta(\kappa x+\lambda)$,
so $Z''Z^3=\zeta''\zeta^3$. This proves
\eqref{eq:affine-convexity}.

Set $\epsilon=2\delta/k<1/2$.
The mean value theorem on $[0,k]$ and $[3k,4k]$
gives points $\xi\in(0,k)$ and $\eta\in(3k,4k)$ with
\[
 1-\epsilon\le f'(\xi),f'(\eta)\le1+\epsilon.
\]
Because the numerator of $f'$ is affine, $f'>0$ throughout
$[\xi,\eta]$. By \eqref{eq:affine-convexity}, $Z$ is convex
there. Its endpoint values are at most
$U=(1-\epsilon)^{-1/2}$, so $Z\le U$ throughout and
\[
 Z''\ge K,\qquad K=\frac{3}{2M^2}U^{-3}.
\]
Applying convexity to $Z(x)-Kx^2/2$ gives, for
$x\in[k,3k]$,
\[
 Z(x)\le U-\frac K2(x-\xi)(\eta-x)
       \le U-\frac K2(x-k)(3k-x).
\]
The final upper bound is positive because $Z(x)>0$.
Using $(U-q)^{-2}\ge U^{-2}+2qU^{-3}$ for
$0\le q<U$, we obtain
\[
 f'(x)=Z(x)^{-2}
 \ge U^{-2}+KU^{-3}(x-k)(3k-x).
\]
Integrating over $[k,3k]$ yields
\[
 \begin{split}
 f(3k)-f(k)
 &\ge 2kU^{-2}+\frac43 k^3KU^{-3}\\
 &=2k-4\delta+
   \frac{2k^3}{M^2}(1-\epsilon)^3.
 \end{split}
\]
The error bounds also give $f(3k)-f(k)\le2k+2\delta$.
Therefore
\[
 \delta\ge\frac{k^3}{3M^2}(1-\epsilon)^3
 \ge\frac{k^3}{24M^2},
\]
which proves the lemma.
\end{proof}

\begin{proposition}[Learned affine post-normalization]
\label{prop:affine-post}
For either attention rule, $L\ge4$, $d\ge3$, and $B>0$,
post-normalization with learned gain and bias, with each of
$e_s,e_\ell,e_\#,g,\beta,w$ bounded by $B$, satisfies
\[
 E^\star_{\rm affine\ post}(L,d;B)
 =\Theta\!\left(\min\left\{L,\frac{L^3}{dB^4}\right\}\right),
 \qquad
 B_{\min}^{\rm affine\ post}(L,d)
 =\Theta\!\left(\frac{L^{3/4}}{d^{1/4}}\right).
\]
The constants are absolute. The unrestricted error infimum is
zero, but no finite parameters achieve exact regression.
\end{proposition}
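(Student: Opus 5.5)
Since $\beta=0$ is an admissible choice, the upper bounds come directly from the gain construction in the preceding proposition. Take the arc \eqref{app:arc} realized by the normalized-line lemma with arbitrarily small embeddings. Put $v=(f_1-f_2)/\sqrt2$ and $u=(f_1+f_2-2f_3)/\sqrt6$, set $g=(B/\sqrt2,B/\sqrt2,0,\ldots,0)$, $w=Bv$, $\beta=0$, and $\omega=\sqrt2/(\sqrt d B^2)$. This gives error at most $\min\{L,L^3/(dB^4)\}$; the bound $L$ comes from the zero read-out. Setting $\omega=L^{-3/2}$ gives rounding at budget $O(L^{3/4}/d^{1/4})$. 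Letting $\omega\downarrow0$ shows the unrestricted infimum is zero.

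For the lower bound, write the prediction on the letter-free slice ($m=0$) as
\[
 \hat y(n,0)=w^\top\bigl(g\odot\operatorname{LN}_0(h)+\beta\bigr)=w^\top\beta+\sqrt d\,(g\odot w)^\top\frac{A+nC}{\|A+nC\|}.
\]
Here $A,C$ are the centered vectors given by the normalized-line lemma, which covers either attention rule. This expression has exactly the form of Lemma~\ref{lem:affine-four-count}, with offset $t=w^\top\beta$ and read-out vector $\sqrt d\,(g\odot w)$. The lemma's bound does not depend on $t$, so the bias contributes nothing beyond a shift. It remains to bound the read-out norm: $\|g\odot w\|\le\|g\|_\infty\|w\|\le B^2$, so $M\le\sqrt d B^2$.

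If $M=0$ the prediction is constant, and its error is at least $L/2$. Otherwise, with $k=\lfloor L/4\rfloor\ge L/7$ for $L\ge4$, the lemma gives
\[
 \delta\ge\min\Bigl\{\frac{L}{28},\ \frac{L^3}{24\cdot343\,dB^4}\Bigr\},
\]
which is the matching $\Omega(\min\{L,L^3/(dB^4)\})$ with absolute constants.

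For the budget lower bound, suppose $\delta<1/2$. When $L$ is large enough that $k/4\ge1/2$ (say $L\ge8$), the lemma forces the cubic branch, so $M^2>k^3/12$ and hence $B^2\ge M/\sqrt d=\Omega(L^{3/2}/\sqrt d)$. For $4\le L<8$ I would instead use the tests at $n=0$ and $n=L$. The offset cancels in the difference, and the normalized vector has length at most one, so $2M\ge f(L)-f(0)>L-1$; this gives the same order.

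Nonexistence of exact regression needs the most care, because the offset $t$ is now free and the argument used for the zero-bias case does not apply. I would get it from the lemma's proof itself. With $\delta=0$, the small-error branch applies, and the proof yields $\delta\ge k^3/(24M^2)>0$ for every finite $M$, which is a contradiction. The degenerate cases are handled directly: $M=0$ gives a constant prediction, and collinear $A,C$ give at most two values, so neither can equal $n$ at five counts. I expect this step to be the main subtlety, together with checking that $\sqrt d\,(g\odot w)$ is the correct effective read-out for the $\sqrt d$ scaling in $\operatorname{LN}_0$.
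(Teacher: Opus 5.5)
Your proposal is correct and matches the paper's proof essentially step for step: the $\beta=0$ gain construction gives the upper bounds; the scalar-shift four-count lemma, with $t=w^\top\beta$ and $M\le\sqrt d\,B^2$, gives the error and budget lower bounds (the cubic branch for $L\ge8$ and the endpoint test for $4\le L\le7$); and the strictly positive lower bound excludes exact regression. One small point: the last step needs only the lemma's statement, not its proof, since the lemma already disposes of the collinear case internally.
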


\begin{proof}
The gain-only constructions in Appendix~\ref{app:post-rates}
remain admissible with $\beta=0$, proving the upper error and
rounding bounds.

For the lower bounds, restrict to $m=0$ and write
\[
 \hat y(n,0)=t+v^\top\frac{A+nC}{\|A+nC\|},
 \qquad
 t=w^\top\beta,\quad v=\sqrt d\,(g\odot w).
\]
Then $M=\|v\|\le\sqrt d\,B^2$.
If $M=0$, the prediction is constant on this slice and its
worst error is at least $L/2$.
Otherwise Lemma~\ref{lem:affine-four-count} applies.
Since $\lfloor L/4\rfloor\ge L/7$ for $L\ge4$, it gives
\[
 E(\theta;L)
 \ge \min\left\{\frac{L}{28},\frac{L^3}{8232M^2}\right\}
 \ge\frac1{8232}
 \min\left\{L,\frac{L^3}{dB^4}\right\}.
\]

For the rounding lower bound, suppose $E(\theta;L)<1/2$.
If $L\ge8$, then $k=\lfloor L/4\rfloor\ge2$, so the
first branch of \eqref{eq:affine-four-count} is at least
$1/2$. Hence $M^2>k^3/12\ge L^3/4116$.
For $4\le L\le7$, the endpoint inequality
\[
 L-1<\hat y(L,0)-\hat y(0,0)\le2M
\]
gives the same bound with the stated constant.
Thus, for every $L\ge4$,
\[
 B_{\min}^{\rm affine\ post}(L,d)
 \ge\left(\frac1{4116}\right)^{1/4}
       \frac{L^{3/4}}{d^{1/4}}.
\]

The positive error lower bound excludes exact regression at
finite parameters. Letting the budget grow in the upper
construction makes the error tend to zero.
No part of the lower-bound argument uses a bound on the
scalar shift $t$.
\end{proof}

\subsection{Why a small-length qualification is necessary}
\label{app:counterexamples}

For $L=2$, even bare post-LN can regress exactly. Take centered
orthonormal $u,v$ and set
\[
 A=u,\qquad C=-\frac3{10}u+\frac{\sqrt{11}}{10}v,
 \qquad w=\sqrt{\frac{60}{11d}}\,v.
\]
Then $\|A+C\|^2=\|A+2C\|^2=3/5$, and
$\sqrt d\,w^\top(A+nC)/\|A+nC\|=n$ for $n=0,1,2$.
The realization lemma supplies finite embeddings and removes all letter
dependence. Thus both the impossibility statement and positive post-LN
error bound cannot be extended to $L=2$.

The dimension-uniform unnormalized softmax norm formula also needs a
length qualification. At $L=2$, let the three embeddings be coordinate
unit vectors and take
$w=(16e_s-2e_\ell-2e_\#)/5$.
Its norm is $\sqrt{264}/5$, independent of $d$.
Here $\gamma=e^{1/\sqrt d}\in(1,e^{1/\sqrt3}]$ and
\[
 \hat y(n,m)=
 \frac{(16/5)n-(2/5)m-(2/5)\gamma}{n+m+\gamma}.
\]
At $n=0$ the error is $2/5$.
Direct substitution at $(1,0),(1,1),(2,0)$ gives errors below $1/2$;
the strongest upper restriction is $\gamma<34/19$, which holds for
$d\ge3$. A necessary cost $\Omega(d^{1/4}\sqrt{\log2})$ is therefore
false. The joint theorem above uses $L\ge4$.

\section{Proofs for pre-normalization}
\label{app:pre}

\subsection{Bare pre-normalization: both sides of the bound}
\label{app:bare-pre}

\begin{proposition}
For $L\ge2,d\ge3$ and $\Gamma_d=e^{2\sqrt d}$,
\[
 E^\star_{\rm bare\ pre}(L,d)
 =\Theta\!\left(\frac{L^2}{L+\Gamma_d}\right).
\]
The largest length attainable with strict rounding at any finite norm,
$L_{\rm cap}(d)$, satisfies $L_{\rm cap}(d)=\Theta(e^{\sqrt d})$.
\end{proposition}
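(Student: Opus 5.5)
The plan is to prove matching lower and upper bounds on the saturation scale and on the error, using the three-count obstruction of Lemma~\ref{app:scalar} for the lower bound and an explicit segment construction for the upper bound. The cap then follows by solving $L^2/(L+\Gamma_d)\asymp 1$.

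\emph{Range of $\gamma$ and lower bound.} Under bare pre-normalization the values are $z_x=\LN_0(e_x)$, which are centered with $\|z_x\|=\sqrt d$. Hence $c_\#=\|z_\#\|^2/\sqrt d=\sqrt d$, and by Cauchy--Schwarz $|c_s|\le\sqrt d$. This gives $\gamma=e^{c_\#-c_s}\in[1,e^{2\sqrt d}]=[1,\Gamma_d]$. Restricting to $m=0$, the prediction has exactly the form $f(n)=(n\alpha+\gamma r)/(n+\gamma)$ with $\alpha=w^\top z_s$ and $r=w^\top z_\#$, as remarked after Lemma~\ref{app:scalar}. That lemma gives
\[
\delta\ge\frac{L^2-1}{4L+8\gamma}\ge\frac{L^2-1}{8(L+\Gamma_d)}\ge\frac{3}{32}\,\frac{L^2}{L+\Gamma_d}.
\]
This holds for every $w$, however large, so the lower bound does not depend on the norm.

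\emph{Upper bound.} The read-out $w=0$ has error exactly $L$. Since $L^2/(L+\Gamma_d)\ge L/2$ when $L\ge\Gamma_d$, this settles that regime. It also settles every bounded $d$, because then $\Gamma_d$ is an absolute constant and $L\ge L^2/(L+\Gamma_d)\ge L/(1+\Gamma_d)$. It therefore remains to treat $L<\Gamma_d$ with $d\ge d_0$, where the goal is error $O(L^2/\Gamma_d)$. To do this I would fix centered orthonormal directions $p,q,o$ (available once $d\ge4$) and set $z_\#=\sqrt d\,p$. I would then take $z_s$ and $z_\ell$ as unit-norm-$\sqrt d$ vectors with the same score $-\sqrt d\cos\varphi$: $z_s=\sqrt d(-\cos\varphi\,p+\sin\varphi\,q)$ and $z_\ell=\sqrt d(-\cos\varphi\,p+\sin\varphi\cos\psi\,q+\sin\varphi\sin\psi\,o)$ with $\psi\neq0$. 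Any centered unit-scaled target is realized by an embedding with $\LN_0(e)$ equal to it. Because letters and spaces then carry equal attention weight, with $S=n+m$,
\[
\yhat(n,m)=\frac{\alpha n+\mu m+\gamma r}{S+\gamma},\qquad \gamma=e^{\sqrt d(1+\cos\varphi)}.
\]
Choosing $\varphi\asymp d^{-1/4}$ keeps $\gamma\ge\Gamma_d/e$. Since $z_s,z_\ell,z_\#$ are linearly independent, I can pick $w$ with $\mu=r=0$ and $\alpha=\gamma$. The signed error is then $-nS/(S+\gamma)$, whose absolute value is at most $L^2/(L+\gamma)\le eL^2/(L+\Gamma_d)$, exactly as in \eqref{app:basic-soft-family}. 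The cap follows directly: error below $1/2$ forces $(L^2-1)/(8(L+\Gamma_d))<1/2$, i.e.\ $L=O(\sqrt{\Gamma_d})=O(e^{\sqrt d})$. Conversely, the construction achieves error below $1/2$ once $eL^2<\tfrac12\Gamma_d$, i.e.\ for $L=\Omega(e^{\sqrt d})$. To make ``$L_{\mathrm{cap}}$'' a clean threshold I would define it as the supremum and use monotonicity of $E^\star$ in $L$ (the domains are nested).

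\emph{Main obstacle.} The delicate step is the letters. A naive choice such as $z_\ell\perp z_\#$ gives letters attention weight about $\sqrt{\Gamma_d}$ times that of a space. The resulting error $\sim nm\sqrt{\Gamma_d}/\Gamma_d$ cannot be repaired by the read-out. Equalizing the two scores while keeping $z_\ell$ distinguishable forces a third centered direction and a slight loss $\cos\varphi<1$ in $\gamma$. I expect the bookkeeping to be exactly here: checking that $\gamma$ loses only a constant factor, and that small dimensions are absorbed by the trivial $w=0$ bound.
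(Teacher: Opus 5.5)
Your proposal is correct and follows the paper's proof: the lower bound applies the three-count lemma with $1\le\gamma\le\Gamma_d$, the upper bound places $z_s$ nearly antipodal to $z_\#$ with a read-out that annihilates $z_\#$ and $z_\ell$, and $L_{\rm cap}$ comes from the same inversion. The only difference is the letters: the paper simply sets $z_\ell=-z_\#$, which gives letters the minimal score $-\sqrt d\le c_s$ and hence relative weight $e^{c_\ell-c_s}\le1$, which only helps. So equal scores are unnecessary, and two centered directions suffice, which makes the construction work already at $d=3$ without your case split. Note that your $w=0$ fallback bounds the error at $d=3$ but would not by itself give the cap's lower bound there.
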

\begin{proof}
Bare normalized values $z_v$ have norm $\sqrt d$, so
$c_\#=\sqrt d$, $|c_s|\le\sqrt d$, and
$1\le\gamma\le\Gamma_d$. Lemma~\ref{app:scalar} gives
$E^\star\ge(L^2-1)/(4L+8\Gamma_d)$.

For the upper bound, take orthonormal $u,v\in\mathbf1^\perp$
and, for $0<\phi<\pi$, set
\[
 z_\#=\sqrt d\,u,\quad z_\ell=-\sqrt d\,u,\quad
 z_s=\sqrt d(-\cos\phi\,u+\sin\phi\,v).
\]
All are realized by nonzero raw embeddings in those directions.
Put $\gamma=e^{\sqrt d(1+\cos\phi)}$,
$q=e^{\sqrt d(\cos\phi-1)}\le1$, and
$w=\gamma v/(\sqrt d\sin\phi)$. Then
\[
 \hat y(n,m)=\frac{\gamma n}{n+qm+\gamma},\qquad
 0\le n-\hat y(n,m)\le\frac{L^2}{L+\gamma}.
\]
Letting $\phi\downarrow0$ gives
$E^\star\le L^2/(L+\Gamma_d)$.
Only an infimum is claimed at the limiting antipodal configuration.

The lower bound rules out strict rounding once
$L\ge1+\sqrt{2+4\Gamma_d}$.
For the converse, choose $\phi>0$ so that $\gamma>\Gamma_d/2$.
Then every integer $L\le\sqrt{\Gamma_d}/2$ has error below $1/2$.
Integer rounding of this length bound changes only absolute constants
for $d\ge3$, proving the cap's order.
\end{proof}

\subsection{Learned gain and bias, including dimensions 3--5}
\label{app:learned-pre}

The main text states the learned pre-LN rate in Section~\ref{sec:ln:pre} and Table~\ref{tab:regression}. The full statement is the following.

\begin{proposition}
For $L\ge2,d\ge3,B>0$, learned pre-normalization with gain and bias satisfies $E^\star(L,d)=0$, although zero error is impossible at finite parameters, and under a norm budget $B$ on each of $e_s,e_\ell,e_\#,g,\beta,w$,
\[
 E^\star(L,d;B)=\Theta\!\left(
 \min\left\{L,\frac{L^2}{\sqrt d B^2}\right\}\right).
\]
Consequently $B_{\min}^{\rm pre}(L,d)=\Theta(L/d^{1/4})$ for $L\ge4$. The constants are independent of $L$, $d$, and $B$.
\end{proposition}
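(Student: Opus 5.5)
The plan follows the unnormalized softmax proof of Theorem~\ref{thm:nonorm}(b). The lower bounds come from Lemma~\ref{app:scalar}, once the pre-normalized values are shown to have norm $O(B\sqrt d)$. The upper bounds come from an explicit construction in which the learned gain stretches the normalized values so that both the magnification $\alpha$ and the saturation scale $\gamma$ reach order $B^2\sqrt d$.

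\emph{Lower bound.} Write the values entering attention as $z_v=g\odot\LN_0(e_v)+\beta$.
\begin{enumerate}
\item[(i)] Since $\|\LN_0(e_v)\|_\infty\le\|\LN_0(e_v)\|_2=\sqrt d$, we get $\|z_v\|\le\|g\|\sqrt d+\|\beta\|\le2B\sqrt d$.
\item[(ii)] Restrict to $m=0$. Then $\yhat(n,0)=(n\alpha+\gamma r)/(n+\gamma)$ with $\alpha=w^\top z_s$, $r=w^\top z_\#$ and $\gamma=e^{c_\#-c_s}>0$, so Lemma~\ref{app:scalar} applies for every $\gamma>0$. In particular $\alpha\le\|w\|\,\|z_s\|\le2B^2\sqrt d$.
\item[(iii)] If $\delta\ge L/4$, the bound $\min\{L,\cdot\}$ is immediate. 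Otherwise \eqref{app:alpha-bound} gives $\gamma(1-2\delta/L)\le\alpha$, hence $\gamma\le2\alpha\le4B^2\sqrt d$.
\item[(iv)] Substituting into \eqref{app:three-count} with $j=\lfloor L/2\rfloor$ gives $\delta\gtrsim L^2/(L+B^2\sqrt d)$, which is $\Theta(\min\{L,L^2/(\sqrt dB^2)\})$.
\end{enumerate}
For rounding, the lemma gives $\alpha>(L^2-L)/4$, so $B^2\ge\alpha/(2\sqrt d)=\Omega(L^2/\sqrt d)$ and $B=\Omega(L/d^{1/4})$. No $\log$ term appears because the score gap can be paid through the gain, as shown next. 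Positivity of the lemma's bound at every finite $\gamma$ rules out exact regression.

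\emph{Upper bound.} The target is a pre-LN analogue of the construction \eqref{app:basic-soft-family}: $z_s\perp z_\#$, $w\parallel z_s$ with $w^\top z_\#=0$, and $\alpha=\gamma$. This gives $\yhat(n,0)=\gamma n/(n+\gamma)$ with error at most $L^2/(L+\gamma)$.
\begin{enumerate}
\item[(i)] Split a few coordinates into blocks. Choose raw embeddings whose centered directions make $\LN_0(e_s)$ and $\LN_0(e_\#)$ have entries of size $\asymp\sqrt d$ on disjoint coordinate pairs, e.g.\ $\sqrt{d/2}(f_1-f_2)$ and $\sqrt{d/2}(f_3-f_4)$.
\item[(ii)] Let the gain carry matching signs on those pairs. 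This gives $z_s\asymp B\sqrt d\,p$ and $z_\#$ of norm up to $\asymp B\sqrt d$ along an orthogonal $q$.
\item[(iii)] Set $w=Bp$ and $\beta=0$, so $\alpha\asymp B^2\sqrt d$ and $\log\gamma=\|z_\#\|^2/\sqrt d$ can be as large as $\asymp B^2\sqrt d$.
\item[(iv)] Rotate $\LN_0(e_\#)$ continuously toward coordinates where $g$ vanishes. This tunes $\|z_\#\|$, and hence $\gamma$, to any value in $[1,e^{cB^2\sqrt d}]$. Since $e^{x}\ge x$, we can take $\gamma=\alpha\asymp B^2\sqrt d$.
\end{enumerate}
This yields error $\lesssim L^2/(L+\sqrt dB^2)$, and with the zero read-out (error at most $L$) the upper rate $\min\{L,L^2/(\sqrt dB^2)\}$ follows. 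For rounding, take $\sqrt dB^2\asymp L^2$, i.e.\ $B\asymp L/d^{1/4}$. Letting $B\to\infty$ gives $E^\star(L,d)=0$.

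\emph{Main obstacle.} The hardest step is the letter token together with small dimension. The letter value $z_\ell$ must satisfy $c_\ell\le c_s$ and $w^\top z_\ell=0$; otherwise the error acquires $m$-dependence beyond $n(n+m)/(n+m+\gamma)$. All of this must happen with one shared gain $g$ and centered vectors in $\mathbf1^\perp$, which costs a dimension. The natural choice is $\LN_0(e_\ell)$ supported where $w$ and $z_\#$ vanish, with $c_s=c_\ell=0$. Four disjoint pairs do not fit when $d\in\{3,4,5\}$, which is why the statement flags these dimensions. There I would instead use non-disjoint centered directions, for example $z_\ell$ antipodal to $z_\#$ as in the bare pre-LN construction, so that $q=e^{c_\ell-c_s}\le1$ and $\yhat=\gamma n/(n+qm+\gamma)$. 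I would then check explicitly that the absolute constants survive.
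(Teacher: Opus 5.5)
Your lower bound matches the paper's argument step for step: $\|z_s\|\le(\sqrt d+1)B$, hence $|\alpha|\le2\sqrt dB^2$; the split at $\delta=L/4$ gives $\gamma\le2\alpha$; Lemma~\ref{app:scalar} then gives both the error rate and $B=\Omega(L/d^{1/4})$. Your $d\ge6$ construction is also the paper's: $g=(G,G,H,H,0,\ldots,0)$, $\beta=0$, $w\parallel p$, $z_\ell=0$. The rotation in your step (iv) is unnecessary. The paper tunes $\gamma$ directly through $H^2=\log\gamma/\sqrt d$, so three pairs suffice and every $d\ge6$ is covered. Reserving a fourth pair, as your count suggests, would leave $d=6,7$ open.

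The gap is the case $d\in\{3,4,5\}$, which the statement includes. Your hint to borrow the bare pre-LN geometry does not work as written. Take a scalar gain with $z_\#=Ru$, $z_\ell=-Ru$, $z_s=R(-\cos\phi\,u+\sin\phi\,v)$ and $w=Wv$, where $R=\|g\|\le B$ at small $d$. Then $\alpha=WR\sin\phi$ and $\log\gamma=R^2(1+\cos\phi)/\sqrt d$ are tied to the same angle. Requiring $\alpha\gtrsim B^2$ forces $\sin\phi\gtrsim1$ and $R\gtrsim B$. Hence $\log\gamma\gtrsim B^2$ and $\gamma\gg\alpha$, so the calibration $\alpha\approx\gamma$ fails at large $B$.

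The missing idea is to make $z_s\perp z_\#$, so that one gain entry sets $\alpha$ and another sets $\gamma$ independently. The paper does this with the bias. Take $a=\sqrt{d/6}$, $D=3a$, $x_\ell=a(1,1,-2)$, $x_s=a(-2,1,1)$, $x_\#=a(1,-2,1)$, $g=(G,H,0,\ldots,0)$ and $\beta=(-aG,-aH,0,\ldots,0)$. Then $z_s=-DGf_1$, $z_\#=-DHf_2$, $z_\ell=0$, and $\|\beta\|=a\|g\|\le\|g\|$ because $d\le5$.

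Your antipodal letter also works with $\beta=0$, provided the gain kills a coordinate shared by the space and boundary directions. Take $x_s=\sqrt{d/2}(f_1-f_3)$, $x_\#=\sqrt{d/2}(f_2-f_3)$, $x_\ell=-x_\#$, $g=(G,H,0,\ldots,0)$ and $w=Gf_1$, with $G^2=\sqrt2\gamma/\sqrt d$ and $H^2=2\log\gamma/\sqrt d$. Then $c_s=0$ and $c_\ell=-c_\#=-\log\gamma$, so $\yhat(n,m)=\gamma n/(n+m/\gamma+\gamma)$ with error at most $L^2/(L+\gamma)$. The budget is the paper's: $\|w\|\le\|g\|$ and $\|g\|^2\le4\gamma/\sqrt d$. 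Since this uses only three coordinates, it covers every $d\ge3$ at once and removes the need for the dimension split.
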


\begin{proof}
Put $K=\sqrt d B^2$. Since
$\|z_s\|\le(\sqrt d+1)B$, we have $|\alpha|\le2K$.
If $\delta\le L/4$, \eqref{app:alpha-bound} gives
$\gamma\le2\alpha\le4K$; hence
$\delta\ge(L^2-1)/(4L+32K)$.
This is an absolute constant times $\min\{L,L^2/K\}$.
If $\delta>L/4$ the same conclusion holds directly.

For an upper bound it suffices to realize
\eqref{app:basic-soft-family} with budget squared at most
$4\gamma/\sqrt d$ for every $\gamma>1$.
The raw embeddings below can always be scaled to any prescribed
positive norm without changing their bare normalized directions.

For $d\ge6$, put
$p=(f_1-f_2)/\sqrt2$, $q=(f_3-f_4)/\sqrt2$,
$r=(f_5-f_6)/\sqrt2$. Use raw directions $p,q,r$ for
spaces, boundary, and letters, respectively, and choose
\[
 W^2=\frac{\sqrt2\gamma}{\sqrt d},\quad G=W/\sqrt2,
 \quad H^2=\frac{\log\gamma}{\sqrt d},\quad
 g=(G,G,H,H,0,\ldots,0),\quad \beta=0,\quad w=Wp.
\]
Then $z_s=G\sqrt d\,p$, $z_\#=H\sqrt d\,q$, $z_\ell=0$,
and the desired prediction follows. Moreover
$\|g\|^2=(\sqrt2\gamma+2\log\gamma)/\sqrt d
\le4\gamma/\sqrt d$, and $\|w\|\le\|g\|$.
The zero letter value occurs \emph{after} a well-defined normalization.

For $3\le d\le5$, let $a=\sqrt{d/6}$ and choose the bare normalized
vectors, padded by zeros, as
\[
 x_\ell=a(1,1,-2),\quad x_s=a(-2,1,1),\quad
 x_\#=a(1,-2,1).
\]
They are centered and have norm $\sqrt d$.
Put $D=3a=\sqrt{3d/2}$ and
\begin{gather*}
 G=W=\sqrt{\gamma/D},\qquad
 H^2=\frac{2\log\gamma}{3\sqrt d},\\
 g=(G,H,0,\ldots,0),\quad
 \beta=(-aG,-aH,0,\ldots,0),\quad w=-Wf_1.
\end{gather*}
The resulting values are
$z_s=-DGf_1$, $z_\#=-DHf_2$, $z_\ell=0$.
Thus $c_s=c_\ell=0$, $c_\#=\log\gamma$, and
$w^\top z_s=\gamma$.
Also $\|\beta\|=a\|g\|\le\|g\|$ and
$\|g\|^2=(\sqrt{2/3}\gamma+(2/3)\log\gamma)/\sqrt d
\le4\gamma/\sqrt d$.

If $K>4$, set $\gamma=K/4$ in the applicable construction.
Its error is at most $L^2/(L+K/4)\le4\min\{L,L^2/K\}$.
If $K\le4$, the zero read-out gives $E=L$, which is at most twice
the claimed rate for $L\ge2$.
For rounding, Lemma~\ref{app:scalar} and $|\alpha|\le2K$
give the lower bound; setting $\gamma=2L^2$ above gives the upper.
Finally let $\gamma\to\infty$ for approximation, and use the positive
three-count obstruction to exclude exact regression.
\end{proof}

\subsection{Two useful extensions}
\label{app:pre-extensions}

Bare pre-normalization with \emph{linear} attention counts exactly at
every length. For orthonormal $p,q\in\mathbf1^\perp$, choose
$z_\#=z_\ell=\sqrt d\,q$, $z_s=\sqrt{d/2}(p+q)$, and $w=2p/d$.
Then $c_s=\sqrt{d/2}$ and the three prediction coefficients are $1,0,0$.
Raw embeddings can be scaled freely, so $B_{\min}\le 2/d$; conversely, error below $1/2$ at $(0,0),(L,0)$ forces $c_sw^\top z_s>1/2$, and $|c_sw^\top z_s|\le d\|w\|$, so $B_{\min}=\Theta(1/d)$.

Bare pre-normalization with \emph{softmax} has no finite counting cap when query/key
projections are unrestricted. Choose
$z_s=\sqrt d\,p$, $z_\#=\sqrt d\,q$, $z_\ell=-\sqrt d\,q$,
$W_Q=W_K=tI$, and $w=\gamma p/\sqrt d$, where
$\gamma=e^{t^2\sqrt d}$.
The prediction is $\gamma n/(n+m/\gamma+\gamma)$, whose error is at most
$L^2/(L+\gamma)\to0$ as $t\to\infty$.
This is an expressivity result and does not assert a matrix-norm cost.

\section{Proof of Theorem~\ref{thm:classification}}
\label{app:classification}

\begin{proposition}
For every finite $L$ and fixed $d\ge3$, softmax attention with no LN,
bare or learned pre-LN, and bare or learned post-LN admits an affine
classification head that predicts $n$ uniquely on all of $\mathcal T_L$.
Each parameter vector and each class bias is bounded independently of
$L$. The constants may depend on $d$.
\end{proposition}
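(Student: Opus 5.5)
The plan is a single homogeneity trick that handles every placement at once. In each architecture we arrange the representation entering the head to be a \emph{positive scalar multiple} of a vector whose coordinates are affine in $n$ and do not depend on $m$. We then use \emph{bias-free} class scores $w_c^\top h$, so that the argmax is unchanged by that positive factor.

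\textbf{The generic head.} Suppose $h(n,m)=\sigma(n,m)\,(\gamma\,x_\#+n\,x_s)$ with $\sigma>0$ and $x_\#,x_s$ linearly independent. Let $\phi_\#,\phi_s$ be the dual functionals on $\operatorname{span}\{x_\#,x_s\}$, so that $\phi_s(x_s)=\phi_\#(x_\#)=1$ and $\phi_s(x_\#)=\phi_\#(x_s)=0$. Put
\[
 w_c=\frac{2c}{L^2}\phi_s-\frac{c^2}{\gamma L^2}\phi_\#,\qquad b_c=0 .
\]
Then $w_c^\top h=\sigma L^{-2}\,(2cn-c^2)=\sigma L^{-2}\bigl(n^2-(n-c)^2\bigr)$. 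Since $\sigma L^{-2}>0$, the argmax over $0\le c\le L$ is exactly $c=n$. Because $c\le L$, the dual functionals are fixed, and $\gamma\ge1$ is fixed, we get $\|w_c\|\le 2\|\phi_s\|+\|\phi_\#\|$, which is independent of $L$. Only the margin $\sigma L^{-2}$ shrinks with $L$, consistent with the discussion after Theorem~\ref{thm:classification}. It therefore remains to realize this form with $L$-independent embeddings (and gains and biases) in each placement.

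\textbf{Realizations.} The key step in each case is to make $m$ drop out of the representation, up to the positive factor.
\begin{itemize}
\item \emph{No LN.} Take $e_s,e_\#$ orthonormal, so $c_s=0$ and $\gamma=e^{1/\sqrt d}$, and take $e_\ell=f_3$ orthogonal to both. Then $h=S^{-1}(n e_s+m f_3+\gamma e_\#)$ with $S=n+m+\gamma$. Letters enter only along $f_3$, which the functionals $\phi_s=e_s$ and $\phi_\#=e_\#$ ignore; the factor $1/S$ is the positive $\sigma$.
\item \emph{Post-LN, bare or learned.} Use the realization lemma of Appendix~\ref{app:post-geometry} with fixed centered orthonormal $A,C$. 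On the whole domain, independently of $m$, this gives $\LN_0(h)=\sqrt d\,(A+nC)/\|A+nC\|$, which is again of the required form with $\gamma=1$. For the learned version take $g=\one$ and $\beta=0$, both of bounded norm.
\item \emph{Learned pre-LN.} Reuse the learned pre-LN constructions of Appendix~\ref{app:learned-pre} at a fixed $\gamma$ (say $\gamma=2$), which give $z_\ell=0$. They then reduce to the no-LN case with bounded $g$ and $w$.
\end{itemize}

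\textbf{Main obstacle: bare pre-LN.} Here $z_\ell$ has norm $\sqrt d$ and cannot vanish, and in $d=3$ the centered subspace is only $2$-dimensional. So $z_s,z_\ell,z_\#$ cannot be made independent, and the letter contribution cannot simply be projected away. For $d\ge4$ I would pick $z_s,z_\#,z_\ell$ mutually orthogonal in $\one^\perp$ and proceed exactly as in the no-LN case, with $\gamma$ fixed in $[1,e^{2\sqrt d}]$. For $d=3$ I would first try to place $z_\ell$ so that a letter adds a fixed vector along a direction that both dual functionals annihilate after a change of basis. If that fails, the fallback is a non-homogeneous argument: use the angular position on the segment and bound the ranges $\{\lambda(n,m):m\le L-n\}$, possibly with score gaps allowed to depend on $d$ but not on $L$. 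This case is where I expect the constants, and perhaps the need for a separate low-dimensional construction as in Appendix~\ref{app:learned-pre}, to be the real work.
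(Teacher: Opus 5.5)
Your generic quadratic head is sound. So are your realizations for no LN, for bare and learned post-LN, for learned pre-LN (via the constructions of Appendix~\ref{app:learned-pre} at a fixed $\gamma>1$, where $z_\ell=0$), and for bare pre-LN with $d\ge4$.

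The gap is bare pre-LN at $d=3$, which the statement covers and which you leave open. Your first plan cannot work there. The values $z_s,z_\ell,z_\#$, and hence every $h(n,m)$, lie in the two-dimensional plane $\one^\perp$. Two functionals dual to independent $x_s,x_\#$ in that plane therefore cannot both vanish on $z_\ell\neq0$. Put differently, suppose every class vector $w_c$ were orthogonal to $z_\ell$. Then on $\one^\perp$ all $w_c$ would act as multiples of one functional, so the bias-free scores would all be multiples of a single scalar function of $(n,m)$, and the argmax could take at most two values. A fallback that separates counts by the range of one coordinate, such as the segment position, also fails for large $L$. Bare pre-LN keeps $e^{c_\ell-c_\#}\ge e^{-2\sqrt d}$, so as $m$ grows the points of every count drift toward $z_\ell$, and the ranges of different counts overlap.

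The missing idea is to use the class biases, which an affine head allows, to move the origin to $z_\ell$. Because $h$ is a convex combination of $z_s,z_\ell,z_\#$,
\[
 h(n,m)-z_\ell=\lambda_\#\Bigl(\frac{n}{\gamma}\,(z_s-z_\ell)+(z_\#-z_\ell)\Bigr),
 \qquad
 \lambda_\#=\frac{e^{c_\#}}{n e^{c_s}+m e^{c_\ell}+e^{c_\#}}>0,
\]
so $m$ enters only through a positive factor, which is exactly the homogeneity your head needs.
\begin{itemize}
\item Any three distinct points on the sphere of radius $\sqrt d$ are non-collinear, so $x_s=z_s-z_\ell$ and $x_\#=z_\#-z_\ell$ are linearly independent for every $d\ge3$.
\item $\gamma\ge1$ holds because $c_\#=\sqrt d\ge c_s$.
\item Your head then applies verbatim with $\sigma=\lambda_\#/\gamma$ and $b_c=-w_c^\top z_\ell$, and $|b_c|\le\sqrt d\,\|w_c\|$ is bounded independently of $L$.
\end{itemize}
This is essentially the paper's construction. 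The paper takes as class vectors the unit vectors along $(c/\gamma)x_s+x_\#$ and uses strict Cauchy--Schwarz in place of your quadratic scores. It covers bare pre-LN in every dimension at once, and learned pre-LN inherits it with $g=\one$, $\beta=0$.
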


\begin{proof}
Without normalization take orthonormal embeddings and
$\gamma=e^{1/\sqrt d}$. Then
\[
 h(n,m)=\frac{ne_s+me_\ell+\gamma e_\#}{n+m+\gamma}.
\]
Put $N=\max\{L,2\}$. For $0\le c\le L$, let
\[
 w_c=N^{-2}(\gamma c e_s-c^2e_\#/2),\qquad b_c=0.
\]
With $z=\gamma/(n+m+\gamma)>0$, the logit is
$z(cn-c^2/2)/N^2$ and the correct-class gap is
\begin{equation}
 s_n-s_c=\frac{z(n-c)^2}{2N^2}>0\quad(c\ne n).
 \label{app:quadratic-margin}
\end{equation}
The class-vector norms are at most $\gamma/N+1/2<2$.
This also covers $L=0,1$; the single class at $L=0$ is trivial.

For bare pre-LN, choose any three distinct centered points
$z_s,z_\ell,z_\#$ on the sphere of radius $\sqrt d$.
These normalized values are realized by taking the raw embeddings
equal to them.
They are not collinear: a line meets that sphere in at most two points.
Let $\gamma=e^{c_\#-c_s}$ and let
\[
 \lambda_\#=
 \frac{e^{c_\#}}
 {n e^{c_s}+m e^{c_\ell}+e^{c_\#}}>0
\]
be the boundary token's attention weight.
Writing $a=z_s-z_\ell$ and $b=z_\#-z_\ell$, we have
\[
 h(n,m)-z_\ell=\lambda_\#((n/\gamma)a+b).
\]
Since $a,b$ are linearly independent, the unit directions
\[
 u_n=\frac{(n/\gamma)a+b}{\|(n/\gamma)a+b\|}
\]
are distinct. Set $w_c=u_c$ and $b_c=-u_c^\top z_\ell$.
At an input of count $n$, the logit is $r u_c^\top u_n$, where
\[
 r=\lambda_\#\|(n/\gamma)a+b\|>0.
\]
It equals $r$ only at $c=n$ and is smaller at every other class by
strict Cauchy--Schwarz. Here $\|w_c\|=1$ and $|b_c|\le\sqrt d$.
Learned pre-LN includes this construction with $g=\mathbf1$ and
$\beta=0$.

For post-LN, fix any $\omega>0$ in \eqref{app:arc} and use
\[
 w_c=\cos\theta_c\,u+\sin\theta_c\,v,\qquad b_c=0.
\]
The logit is $\sqrt d\cos(\theta_n-\theta_c)$, uniquely maximized
at $c=n$. The embedding realization is independent of $L$ for fixed
$\omega$. Learned post-LN again includes the bare construction.
The post-LN argument also applies to linear attention.
\end{proof}

These constructions require suitable embeddings and gains.
For example, bare pre-LN with identity projections and $\gamma=1$
forces $z_s=z_\#$ and cannot distinguish $(0,0)$ from $(1,0)$.
Classification is not guaranteed at that saturation scale.

\section{Spacing, margins, and precision}
\label{app:precision}

On the letter-free softmax slice,
\[
 h(n,0)=e_\#+\lambda_n(e_s-e_\#),\qquad
 \lambda_n=\frac{n}{n+\gamma},
\]
and
\[
 \lambda_{n+1}-\lambda_n
 =\frac{\gamma}{(n+\gamma)(n+1+\gamma)}.
\]
These are exact identities. Thus the coordinate gap is of order
$1/\gamma$ when $n\ll\gamma$ and of order $\gamma/n^2$ when
$n\gg\gamma$, with the appropriate limiting regimes understood.

For the following margin comparisons, take $L\ge2$.
Spacing is not itself a logit margin.
For the nearest-center classifier on $t_n=n/(n+\gamma)$,
the correct-class score gap against class $c$ is $(t_n-t_c)^2$.
At fixed $\gamma$, its smallest adjacent gap is $\Theta(L^{-4})$.
For \eqref{app:quadratic-margin}, the minimum absolute margin is
\[
 \frac{\gamma}{2L^2(L+\gamma)}=\Theta(L^{-3})
\]
at fixed $d$. Its margin relative to the common logit scale $z$ is
$1/(2L^2)$.
For the post-LN sector classifier at fixed $\omega$, the minimum
margin is
\[
 \sqrt d\bigl[1-\cos(\theta_L-\theta_{L-1})\bigr]
 =\Theta(\sqrt d\,L^{-4}).
\]
The last statements follow from the exact spacing identities,
$\arctan'(x)=1/(1+x^2)$, and $1-\cos x\sim x^2/2$.

\begin{proposition}[A bound from adjacent representations]
Suppose adjacent-count representations $h_n,h_{n+1}$ are correctly
classified, $\|w_c\|\le B$, and their respective margins against each
other's class are $\mu_n,\mu_{n+1}>0$. Then
\[
 \mu_n+\mu_{n+1}\le2B\|h_n-h_{n+1}\|.
\]
In particular, a common guaranteed margin $\mu$ is at most
$B\|h_n-h_{n+1}\|$.
\end{proposition}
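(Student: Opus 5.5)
The plan is to write the two margin conditions explicitly and add them, so that the class biases cancel and only the difference of the two class vectors acting on $h_n-h_{n+1}$ remains. The margin of $h_n$ against class $n+1$ is
\[
\mu_n=(w_n^\top h_n+b_n)-(w_{n+1}^\top h_n+b_{n+1}),
\]
and the margin of $h_{n+1}$ against class $n$ is
\[
\mu_{n+1}=(w_{n+1}^\top h_{n+1}+b_{n+1})-(w_n^\top h_{n+1}+b_n).
\]
Both are positive because each representation is correctly classified.

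Adding the two expressions, the biases $b_n$ and $b_{n+1}$ each appear once with each sign and cancel. The sum then collapses to
\[
\mu_n+\mu_{n+1}=(w_n-w_{n+1})^\top(h_n-h_{n+1}).
\]
Cauchy--Schwarz together with the triangle inequality gives $\|w_n-w_{n+1}\|\le 2B$. Hence $\mu_n+\mu_{n+1}\le 2B\|h_n-h_{n+1}\|$, which is the claim.

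For the corollary, suppose a common guaranteed margin $\mu$ holds, so that $\mu\le\mu_n$ and $\mu\le\mu_{n+1}$. Then $2\mu\le\mu_n+\mu_{n+1}$, and dividing the main inequality by $2$ gives $\mu\le B\|h_n-h_{n+1}\|$.

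I do not expect a real obstacle here. The only point needing care is the sign convention for the margins: each must be the correct class's score minus the competitor's, evaluated at the same input. With that convention the bias cancellation is exact and no bound on the $b_c$ is needed, which is consistent with the statement imposing a bound only on $\|w_c\|$.
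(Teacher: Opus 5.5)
Your proof is correct and follows the paper's argument: add the two margin conditions so that the biases cancel, leaving $(w_n-w_{n+1})^\top(h_n-h_{n+1})$, then apply Cauchy--Schwarz with $\|w_n-w_{n+1}\|\le\|w_n\|+\|w_{n+1}\|\le 2B$. The paper writes the margins as inequalities, that is, as lower bounds on the score gaps, rather than as exact equalities; your argument carries over unchanged in that form, as your treatment of the common margin $\mu$ already shows.
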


\begin{proof}
Add the two margin inequalities. The class biases cancel and the sum is
\[
 (w_n-w_{n+1})^\top(h_n-h_{n+1}).
\]
Apply Cauchy--Schwarz and
$\|w_n-w_{n+1}\|\le\|w_n\|+\|w_{n+1}\|\le2B$.
\end{proof}

\begin{proposition}[A sufficient perturbation condition]
Suppose $\|h\|\le H$, $\|w_c\|\le B$, and the exact winning margin
is at least $\mu>0$.
If representation, weight, and bias perturbations are bounded by
$\eta,\xi,\zeta$, respectively, and each computed logit has a further
arithmetic error at most $\tau$, classification is unchanged whenever
\[
 2(B\eta+H\xi+\eta\xi+\zeta+\tau)<\mu.
\]
\end{proposition}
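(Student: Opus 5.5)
The plan is a direct triangle-inequality argument applied logit by logit; no earlier result is needed beyond the definition of the classification head. Let the exact representation be $h$ with exact weights $w_c$ and biases $b_c$. The perturbed quantities are $\tilde h=h+\Delta h$, $\tilde w_c=w_c+\Delta w_c$ and $\tilde b_c=b_c+\Delta b_c$, with $\|\Delta h\|\le\eta$, $\|\Delta w_c\|\le\xi$ and $|\Delta b_c|\le\zeta$. Each computed logit is $\hat s_c=\tilde w_c^\top\tilde h+\tilde b_c+\epsilon_c$, where $|\epsilon_c|\le\tau$. The first step is to bound the deviation of each computed logit from its exact value $s_c=w_c^\top h+b_c$. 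Expanding the product gives
\[
 \hat s_c-s_c=w_c^\top\Delta h+\Delta w_c^\top h+\Delta w_c^\top\Delta h+\Delta b_c+\epsilon_c .
\]
Cauchy--Schwarz on the three inner products, together with $\|w_c\|\le B$ and $\|h\|\le H$, then yields
\[
 |\hat s_c-s_c|\le B\eta+H\xi+\eta\xi+\zeta+\tau=:\Delta .
\]
This bound holds uniformly in $c$.

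The second step compares the winning class $n$ with any competitor $c\neq n$. By hypothesis the exact margin satisfies $s_n-s_c\ge\mu$. Applying the uniform bound once to each of the two logits gives
\[
 \hat s_n-\hat s_c\ge(s_n-s_c)-|\hat s_n-s_n|-|\hat s_c-s_c|\ge\mu-2\Delta .
\]
This is strictly positive exactly under the stated condition $2\Delta<\mu$. The computed argmax is therefore still $n$, and it is unique, so no tie-breaking convention is needed.

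There is no real obstacle in this argument. The only point requiring care is that $\|h\|\le H$ must bound the exact representation, not the perturbed one. This is what makes the cross term enter as $\eta\xi$ rather than through $\|\tilde h\|$. If one instead paired $\Delta w_c$ with $\tilde h$, the bound would become $(H+\eta)\xi$, which is the same quantity. Either grouping therefore reproduces the stated constant.
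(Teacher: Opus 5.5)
Your proof is correct and follows the paper's argument: expand the perturbed logit, bound each term by Cauchy--Schwarz to get a uniform per-logit error of $B\eta+H\xi+\eta\xi+\zeta+\tau$, then double it for a difference of two logits. Your final remark about the grouping is harmless, since $(H+\eta)\xi$ gives the same total either way.
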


\begin{proof}
Expand
\[
 (w_c+\Delta w_c)^\top(h+\Delta h)+b_c+\Delta b_c.
\]
Including the arithmetic error, each logit changes by at most
$B\eta+H\xi+\eta\xi+\zeta+\tau$.
The difference of two logits changes by at most twice that amount.
\end{proof}

\subsection{Storage of numerical parameters}
\label{app:storage}

We count bits used to store numerical parameters, interpreting
the model with those stored values in exact arithmetic.
A dense affine classifier explicitly stores its $L+1$ class
parameter tuples in a common $p$-bit scalar format.
The precision $p$ may differ between the classification and
regression models. Formula-generated rows are not stored
dense heads.

At a common scalar precision, the head storage is exactly
$dp$ bits for regression and $(L+1)(d+1)p$ bits for affine
classification. Three embeddings add $3dp$ bits to either
model. Learned gain and bias add $2dp$ bits, or $dp$ for
gain only. Thus the head-storage ratio at a common precision
is $(L+1)(1+1/d)=\Theta(L)$.

\begin{proposition}[Dense parameter storage]
\label{prop:dense-storage}
Suppose an affine classifier predicts every count
$0,\ldots,L$ uniquely somewhere on $\Tset$, and its dense
head stores each scalar in a common $p$-bit numerical format.
Then its head storage satisfies
\[
 S_{\rm cls}=(L+1)(d+1)p
 \ge (L+1)\log_2(L+1).
\]
For unnormalized softmax, at every fixed $d\ge3$ and
$L\ge4$, a bounded-norm classifier can achieve correct
classification on $\Tset$ with $O_d(L\log L)$ bits for
all parameters. A regressor can achieve strict rounding
on the same domain with $O_d(\log L)$ bits for all
parameters. Consequently the dense-classifier storage cost
is $\Theta_d(L\log L)$, while regression admits
$O_d(\log L)$ stored-parameter cost.
\end{proposition}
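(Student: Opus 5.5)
The proposition has three parts, and I would prove them in order: a counting lower bound for any dense classifier, an explicit low-precision classifier, and an explicit low-precision regressor.

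\textbf{Lower bound.} Each class $c$ must be the unique argmax at some input. Hence the $L+1$ stored parameter tuples $(w_c,b_c)$ must be pairwise distinct. If two tuples coincided, their logits would agree on every input, and neither class could ever win uniquely. A tuple occupies $(d+1)p$ bits, so at most $2^{(d+1)p}$ distinct tuples exist. This gives $2^{(d+1)p}\ge L+1$, that is, $(d+1)p\ge\log_2(L+1)$. Multiplying by $L+1$ yields $S_{\rm cls}\ge(L+1)\log_2(L+1)$.

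\textbf{Classifier upper bound.} I would use the unnormalized construction from the proof of Theorem~\ref{thm:classification}. The embeddings are orthonormal coordinate vectors and $\gamma=e^{1/\sqrt d}$, both exactly representable; the irrational $\gamma$ is a computed quantity, not a stored one. In place of the scaled vectors $w_c=N^{-2}(\gamma c\,e_s-c^2e_\#/2)$, I would store the integer-coefficient vectors $w_c=c\,\gamma' e_s-(c^2/2)e_\#$, where $\gamma'$ is a stored rational with $\gamma'=\gamma$ exactly. If that identity is unavailable, I would instead fold $\gamma$ into the construction by choosing $e_\#$ with $c_\#-c_s$ such that $\gamma$ is fixed and known. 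One can check that the logit is then $z(cn-c^2/2)$, with exact gap $z(n-c)^2/2>0$. Each entry is an integer or half-integer of size $O(L^2)$, so $O(\log L)$ bits per scalar suffice, and the head costs $O_d(L\log L)$ bits. Combined with the lower bound, this gives the $\Theta_d(L\log L)$ claim. The bounded-norm requirement is met by the original scaled version. If the statement is read as demanding bounded norms and low bit cost simultaneously, I would instead store the scaled rationals $N^{-2}c$ and $N^{-2}c^2/2$ with a common denominator $N^2$. Each then takes $O(\log L)$ bits as a fixed-point number with $O(\log L)$ fractional bits, and the ordering is exact.

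\textbf{Regressor upper bound (main obstacle).} The construction in the proof of Theorem~\ref{thm:nonorm}(b) uses $e_\#=d^{1/4}\sqrt{\log\gamma}\,f_2$, which is irrational. I would instead make $\log\gamma$ rounding-robust, since all that is needed is $\gamma\ge 2L^2$ together with $w^\top e_s=\gamma$ up to a small relative error. My plan is to take $e_s=w=a f_1$ with an integer $a$, and $e_\#=b f_2$ with an integer $b$. Then $c_\#=b^2/\sqrt d$ and $\gamma=e^{b^2/\sqrt d}$, while the prediction is $a^2 n/(n+m+\gamma)$. Matching $a^2\approx\gamma$ exactly is impossible, because $\gamma$ is transcendental. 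So I would use the perturbed family $\hat y=\alpha n/(S+\gamma)$ with $S=n+m$ and $\alpha=a^2$. Its error is $\bigl|n(\alpha-\gamma-S)\bigr|/(S+\gamma)$. Choose $b$ as the smallest integer with $\gamma\ge 8L^2$; then $\gamma\le 8L^2e^{O(\sqrt{\log L}/d^{1/4})}$, so $\gamma$ is polynomial in $L$ at fixed $d$. Next choose $a=\lceil\sqrt{\gamma+L/2}\,\rceil$, so that $|\alpha-\gamma-L/2|\le 2\sqrt\gamma+1$. The error is then at most $L(L/2+2\sqrt\gamma+1)/\gamma$, which is below $1/2$ for large $L$ when $\gamma=\Theta(L^2)$. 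Small $L$ is handled by the finite check or a larger constant. The integers $a$ and $b$ have $O(\log L)$ bits, and all other entries are $0$, $1$, or $e_\ell=f_3$. Total storage is therefore $O_d(\log L)$. The delicate step is verifying the inequality uniformly over all $(n,m)$, using the calibrated bound in Appendix~\ref{app:one-knob} with $u\approx L/2$. I expect this to go through once $\gamma\ge 8L^2$.
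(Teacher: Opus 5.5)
Your lower bound matches the paper's argument, and your regressor is essentially the paper's construction ($e_s=w=af_1$, $e_\#=bf_2$, integer $a$, quantized $b$). The gap is in the classifier upper bound.

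The head vectors $w_c=N^{-2}(\gamma c\,e_s-c^2e_\#/2)$ contain $\gamma=e^{1/\sqrt d}$ as a stored coefficient, not only as a computed one. No rational $\gamma'$ can equal it, because $e^{1/\sqrt d}$ is transcendental by the Lindemann--Weierstrass theorem. More generally, with identity projections and finitely stored (rational) embeddings, $c_\#-c_s$ is a rational multiple of $1/\sqrt d$, so $\gamma$ is either $1$ or transcendental. Your fallback of ``folding $\gamma$ into the construction'' therefore needs $\gamma=1$. Your orthonormal embeddings exclude this, and you do not say how the $1/(S+\gamma)$ term would then be produced. Your bounded-norm variant, which stores $N^{-2}c$ and $N^{-2}c^2/2$, either assumes the same impossible identity or drops $\gamma$. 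If it drops $\gamma$, the logit is a positive multiple of $c\,n/\gamma-c^2/2$, whose argmax is the integer nearest $n/\gamma$, which is wrong once $n(1-1/\gamma)\ge\tfrac12$. Your integer-coefficient head has norms of order $L^2$, so it never gives bounded norms and $O_d(L\log L)$ bits at the same time, which is what the statement asserts.

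The missing ingredient is a robustness argument that tolerates approximating the irrational entries. The paper uses the exact margin of the construction. With unit embeddings, $\|h\|\le1$ and the minimum winning margin is $\mu=\gamma/(2L^2(L+\gamma))=\Theta_d(L^{-3})$. Rounding every head coordinate to a fixed-point grid with error below $\mu/(4\sqrt d)$ moves each logit by less than $\mu/4$, so the unique argmax survives. This costs $O_d(\log L)$ bits per scalar and keeps the norms below $2$.

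A repair closer to your own route also works. With $w_c=c\gamma'e_s-(c^2/2)e_\#$, the logit is a positive multiple of $c\rho n-c^2/2$, where $\rho=\gamma'/\gamma$. Its integer argmax is the integer nearest $\rho n$, so any dyadic $\gamma'$ with $|\rho-1|<1/(2L)$ suffices. Dividing all heads by $4^k$ with $2^k\ge L$ then bounds the norms while keeping the entries dyadic.

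Separately, in the regressor the threshold $\gamma\ge8L^2$ is too weak. Rounding $a$ up can make $a^2-\gamma-L/2$ nearly $2\sqrt\gamma$. With $\gamma\approx8L^2$, the error at $(L,0)$ is then about $(2\sqrt8-\tfrac12)/8\approx0.65$. Requiring $\gamma\ge64L^2$, as the paper does, caps this rounding term at $1/4$.
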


\begin{proof}
\textbf{Necessary classifier storage.}
The $L+1$ tuples $(w_c,b_c)$ must be pairwise distinct:
two identical tuples give identical logits on every input,
so neither class could win uniquely.
A tuple of $d+1$ scalars in a $p$-bit format has at most
$2^{p(d+1)}$ possible numerical values.
Thus
\[
 L+1\le2^{p(d+1)},\qquad
 p(d+1)\ge\log_2(L+1),
\]
which proves the lower bound.

\textbf{Sufficient classifier storage.}
Use coordinate unit embeddings in the unnormalized
construction of Appendix~\ref{app:classification}.
Then $\gamma=e^{1/\sqrt d}$ is fixed, $\|h\|\le1$,
and the class-vector norms are below $2$.
For $L\ge4$, \eqref{app:quadratic-margin} gives the
minimum margin
\[
 \mu=\frac{\gamma}{2L^2(L+\gamma)}
 =\Theta_d(L^{-3}).
\]
Round every head coordinate to a binary fixed-point grid
with coordinate error below $\mu/(4\sqrt d)$, and leave
the zero biases and unit embeddings exact.
Each class-vector perturbation has norm below $\mu/4$,
so each logit difference changes by less than $\mu/2$.
The maximizing class therefore remains correct and unique.
Since the coefficients are bounded, $O_d(\log L)$ bits
per scalar suffice, giving $O_d(L\log L)$ bits in total.
The quantized class vectors still have bounded norms.

\textbf{Sufficient regression storage.}
Let $f_1,f_2,f_3$ be coordinate unit vectors and put
\[
 b_0=\sqrt{\sqrt d\,\log(64L^2)}.
\]
Choose the smallest nonnegative integer $k$ such that
\[
 2^{-k}\le
 \min\left\{1,\frac{\sqrt d}{2b_0+1}\right\},
\]
and define
\[
 b=2^{-k}\lceil2^k b_0\rceil,\qquad
 \gamma=\exp(b^2/\sqrt d),\qquad
 a=\lceil\sqrt\gamma\rceil.
\]
The stored values $a,b$ are respectively an integer and
a binary rational; $\gamma$ is a derived attention weight,
not an additional stored parameter.
Since $0\le b-b_0<2^{-k}\le1$,
\[
 0\le\frac{b^2-b_0^2}{\sqrt d}
 \le\frac{(b-b_0)(2b_0+1)}{\sqrt d}\le1,
\]
and hence
\[
 64L^2\le\gamma\le64eL^2.
\]
Choose
\[
 e_s=af_1,\qquad e_\ell=f_3,\qquad
 e_\#=bf_2,\qquad w=af_1.
\]
Then $c_s=c_\ell=0$, $c_\#=b^2/\sqrt d$, and
\[
 \hat y(n,m)=\frac{a^2n}{n+m+\gamma}.
\]
Using $0\le a^2-\gamma\le2\sqrt\gamma+1$, we obtain
for every $(n,m)\in\Tset$,
\[
 |\hat y(n,m)-n|
 \le\frac{L(L+2\sqrt\gamma+1)}{\gamma}
 \le\frac1{64}+\frac14+\frac1{64L}
 <\frac12.
\]
For fixed $d$, the smallest admissible $k$ is
$O_d(\log\log L)$, while $b=O_d(\sqrt{\log L})$ and
$a=O(L)$.
All $4d$ stored scalar entries therefore lie on a common
binary fixed-point grid and require only $O_d(\log L)$
bits per scalar. At fixed dimension their total storage
is $O_d(\log L)$ bits. The vector norms are also
$O_d(L)$, consistent with the optimal norm order.
\end{proof}

This comparison permits different precisions for the two
models, but assumes explicit numerical storage.
It is not a lower bound on the length of a program that
generates the head: the quadratic class vectors in
Appendix~\ref{app:classification} have a short formula.
Stored-parameter precision also does not determine the
accuracy of numerical evaluation; arithmetic perturbations
must be controlled separately, as in the preceding
perturbation proposition.

\section{Consequences for counting range and dimension}
\label{app:consequences}

For fixed $d$, inversion of the proved strict-rounding bounds gives
$L_{\max}(B)=\Theta(B)$ for unnormalized softmax and learned pre-LN,
$\Theta(B^{2/3})$ for bare post-LN, and $\Theta(B^{4/3})$ for post-LN with a learned gain, with or without a learned bias.
The parameters may be chosen separately for each length. These bounds
do not determine the accuracy of a particular trained model at longer lengths.
The bare pre-LN cap is proved in Section~\ref{app:bare-pre}.
Linear attention without post-LN has one fixed exact construction for
all lengths once the budget accommodates it.

At fixed $B>0$ and as $L\to\infty$, the minimum dimensions are
\[
 d_{\min}(L;B)=
 \begin{cases}
 \Theta(L^4/B^4),&\text{learned pre-LN},\\
 \Theta(L^3/B^2),&\text{bare post-LN},\\
 \Theta(L^3/B^4),&\text{post-LN with learned gain (any bias)}.
 \end{cases}
\]
Both assertions follow by rearranging the dimension-uniform upper and
lower budget bounds; integer ceilings and the constraint $d\ge3$ affect
only constants in these asymptotic regimes.

For the exhibited accurate constructions, $\gamma=\Theta(L^2)$ gives
$\lambda_L=L/(L+\gamma)=\Theta(L^{-1})$, while
$\omega=L^{-3/2}$ gives $\theta_L=\arctan(L^{-1/2})=\Theta(L^{-1/2})$.
These explain the shrinking segment or arc used by the constructions;
they are not universal requirements on every possible representation.

\section{Sequential handoffs and parallel aggregation}
\label{app:necessity}

A literal running-counter implementation updates one counter after each token and passes the result to the next position. Transformer layers update all positions simultaneously from the previous layer, so a value newly computed at position $i$ cannot be used at position $i+1$ until the next layer. A chain of $L-1$ such dependent handoffs therefore needs at least $L-1$ layers. This is a limitation of that particular implementation, not a lower bound for all transformer counting algorithms. Parallel aggregation, as in Theorem~\ref{thm:nonorm}(a), avoids those sequential dependencies. This observation motivates studying a single attention sublayer without claiming that every counting method must use it.

\section{Empirical Results}
\label{empirical-appendix}

\subsection{Counting with Minimal Architecture}
\paragraph{Architecture} We use the minimal architecture defined in Section \ref{sec:setup}. Specifically, we use the softmax attention case, with no normalization for a clean, isolated study.

\paragraph{Training Setup}
We study $d\in\{4,16\}$ and $L\in\{250,500\}$.  For each length, the
dataset is the complete domain
$\mathcal{D}_L=\{(n,m): n,m\in\mathbb{Z}_{\geq 0},\ n+m\leq L\}$, with
targets $n$.  Thus, $|\mathcal{D}_{250}|=31{,}626$ and
$|\mathcal{D}_{500}|=125{,}751$. For each $L$, we randomly partition all admissible pairs into a fixed 80/20
train/test split, shared across the three model seeds.  The split is generated
deterministically with seed $L$; we do not stratify by the target count.

We train with full-batch Adam, minimizing mean squared error
$\frac{1}{|\mathcal{D}_{\mathrm{train}}|}
\sum_{(n,m)\in\mathcal{D}_{\mathrm{train}}}(\hat n-n)^2$.
We use PyTorch Adam defaults $(\beta_1,\beta_2,\epsilon)=(0.9,0.999,10^{-8})$,
zero weight decay, and no gradient clipping.  All parameters and computations
use float64.  For each setting, we train seeds $0,1,2$, initializing the three
token embeddings and scalar-readout vector independently as
$\mathcal{N}(0,I_d/d)$.

For the first 20M updates, the learning rate follows cosine annealing from
$2\times10^{-2}$ to $10^{-4}$.  The $L=250$ runs stop at 20M updates.  The
$L=500$ runs resume from the 20M checkpoint, retaining both model and Adam
optimizer states, and continue with a constant learning rate of $10^{-4}$ to
100M total updates.  We use no early stopping or test-set model selection;
metrics and resumable checkpoints are recorded every 50,000 updates.  The
three seed trajectories are run concurrently on a single NVIDIA A100 GPU.

\begin{figure}[t]
      \centering
      \includegraphics[width=0.85\linewidth]
      {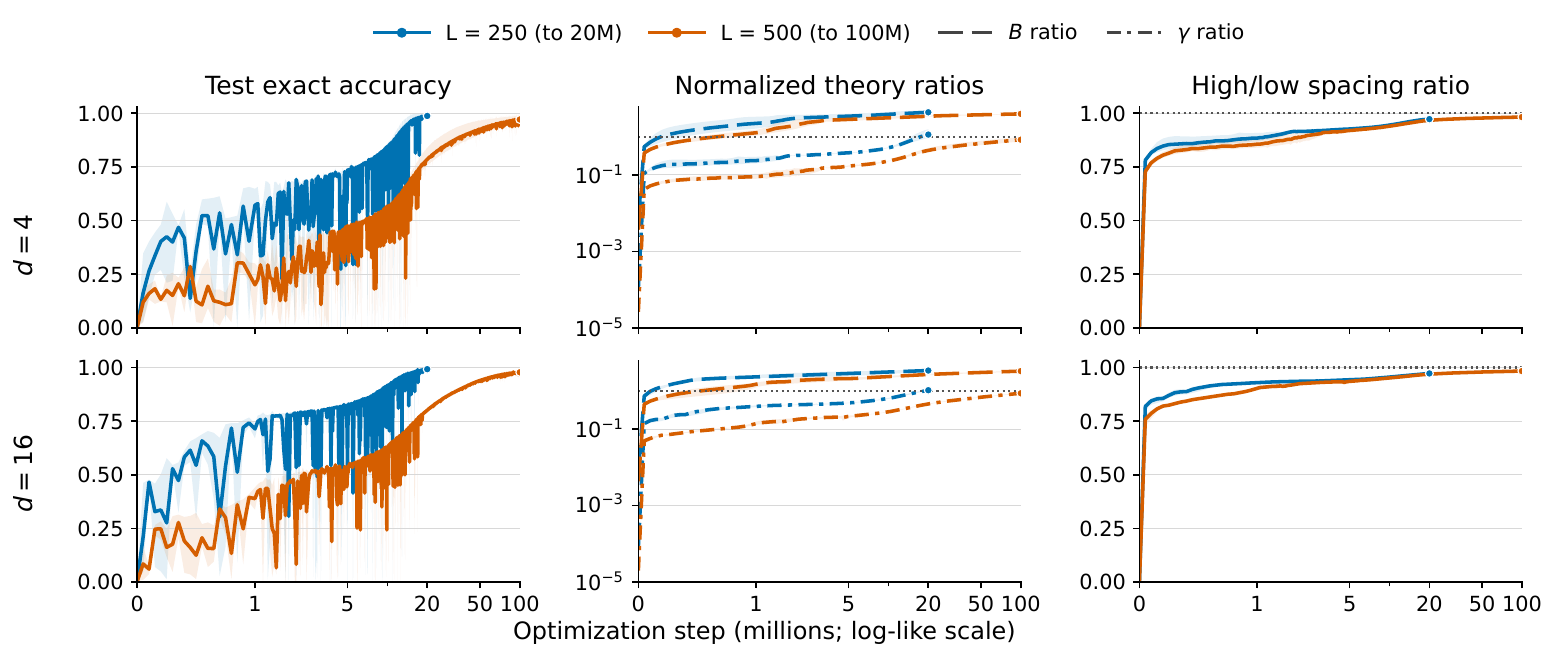}
      \caption{
        \textbf{Training and representation dynamics of the unnormalized softmax model.}
        Columns show held-out exact accuracy, theory-normalized parameter and
        gamma-score scales, and the high-/low-count adjacent-spacing ratio. Training is run for 3 seeds.
}
      \label{fig:theory-threshold-dynamics}
  \end{figure}

\paragraph{Results}
The \(L=250\) models achieve near-perfect held-out rounded accuracy, although only one of these six runs reports a full-domain worst error below $1/2$; the \(L=500\) runs are still improving at \(20\)M updates, and we therefore extend the run to $100$M steps for higher rounded accuracy. Table~\ref{tab:long-sweep-final-performance} reports the final values. We report \(B/B_{\mathrm{lower}}\) and \(\gamma/\gamma_{\mathrm{round}}\), where \(B_{\mathrm{lower}}=\tfrac12\sqrt{L^2-L}\) and \(\gamma_{\mathrm{round}}=\tfrac{L^2-1}{4}-L\). These are necessary lower bounds, not sufficient certificates of correct rounding (Lemma~\ref{app:scalar}); ratios above one alone do not establish success. On the \(m=0\) slice,
\[
h_n=h(n,0)=\frac{n e_s+\gamma e_\#}{n+\gamma},
\qquad
\Delta_n=\lVert h_{n+1}-h_n\rVert_2
=\frac{\gamma\lVert e_s-e_\#\rVert_2}
{(n+\gamma)(n+1+\gamma)}.
\]
The high-to-low gap ratio is
\(\langle\Delta_n\rangle_{\mathrm{last}\ 10\%}/
\langle\Delta_n\rangle_{\mathrm{first}\ 10\%}\). A ratio below one means that neighboring high-count representations are more crowded than neighboring low-count representations. Because $\gamma$ grows slowly, high-count representations are compressed early in training and become more evenly spaced as training progresses (Figure~\ref{fig:theory-threshold-dynamics}, right).

The models learn large parameter norms, but \(\gamma\) remains the limiting quantity, suggesting that gradient descent struggles to coordinate the parameters needed to reach the required saturation scale. We show aggregate results in Table \ref{tab:long-sweep-final-performance}. We also perform training with standard mini-batch gradient descent for comparison. As expected, we find that mini-batch gradient descent can also learn counting. Convergence takes more parameter updates in this comparison (Figure~\ref{fig:minibatch}); update counts alone do not compare examples processed or computational cost.

\begin{table}[t]
    \centering
    \caption{
        Final performance after 20M steps for $L=250$ and 100M steps for
        $L=500$. Entries report the mean and $[\min,\max]$ across three seeds.
        Exact accuracy is measured on the held-out test split. Worst error is
        evaluated over the complete admissible input domain; rounding-exact
        counting requires worst error $< 1/2$.
    }
    \label{tab:long-sweep-final-performance}
    \small
    \setlength{\tabcolsep}{4pt}
    \begin{tabular}{ccccc}
        \toprule
        Dimension $d$
        & Maximum count $L$
        & Steps
        & Test exact accuracy
        & Full-domain worst error \\
        \midrule
        4  & 250 & 20M  & $0.9878\ [0.9815,\,1.0000]$
                      & $0.650\ [0.444,\,0.757]$ \\
        16 & 250 & 20M  & $0.9921\ [0.9902,\,0.9951]$
                      & $0.668\ [0.638,\,0.687]$ \\
        \midrule
        4  & 500 & 100M & $0.9713\ [0.9598,\,0.9883]$
                      & $0.833\ [0.703,\,0.915]$ \\
        16 & 500 & 100M & $0.9778\ [0.9763,\,0.9800]$
                      & $0.793\ [0.778,\,0.803]$ \\
        \bottomrule
    \end{tabular}
\end{table}

\begin{figure}[t]
    \centering
    \includegraphics[width=0.8\linewidth]{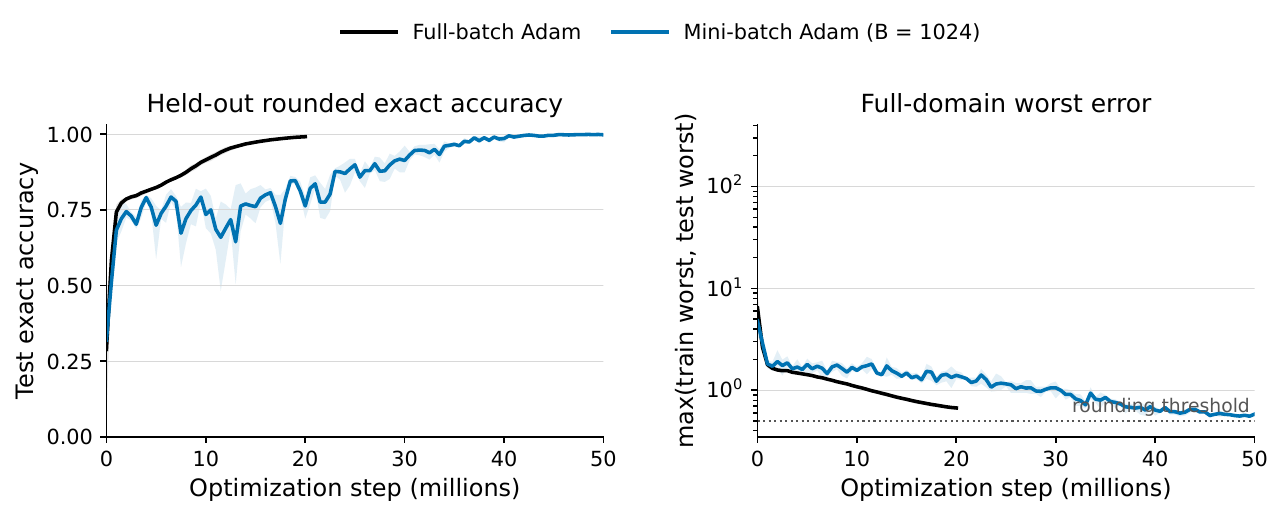}
    \caption{Training run for mini-batch Adam with batch size 1,024. Comparison with full-batch Adam, d=16 and L=250. The mini-batch run takes more updates in this comparison.}
    \label{fig:minibatch}
\end{figure}

\subsection{Post-LN experiments: protocol and details}
\label{app:postln-protocol}

\paragraph{Setup.}
One softmax head with identity projections, $d=16$, bare post-LN with $\varepsilon=0$, no residual connection. Scalar regression has no output bias; the classifier has one bias per class. Embeddings are initialized $\mathcal N(0,1/d)$ per coordinate and read-outs $0.1\,\mathcal N(0,1/d)$; the two heads share the embedding seed. For each $L$ and each space count $n$, the training set contains the letter counts $m=0$, $m=L-n$, and two more drawn at random ($3{,}998$ pairs at $L=1000$), weighted so every count contributes equally. A disjoint validation set, drawn the same way from the remaining pairs, is used only to choose the checkpoint; counts near $L$ have no remaining pairs ($n\ge997$ at $L=1000$) and are absent from it. All arithmetic is float64.

\paragraph{Optimization and full-domain check.}
L-BFGS with strong-Wolfe line search (10 iterations and at most 20 evaluations per call, history 20). Whenever the training, validation, and space-only sets are all correct, we enumerate every one of the $\binom{L+2}{2}$ inputs and require that each rounds correctly (regression) or gives the correct class a strictly positive winning gap (classification). This is a numerical check in float64, not a formal certificate. A run stops five logging intervals after first passing it, when its loss has not improved for 50 calls, or at 600\,s. Regression passes after 20--80 calls at every $L$, with worst error $0.0056$ at $L=1000$. The classifier passes after 80--120 calls at $L=100$, 740--2140 at $L=300$, and 3380 at $L=1000$ for one of three seeds, with smallest winning gap $9.5\times10^{-7}$; the other two seeds reached $99.99\%$ and $99.8\%$ of inputs at the time limit, the misses having gaps of order $-10^{-7}$.

\paragraph{What the panels show.}
Panel~(a) plots, for regression, the smallest read-out norm over seeds, and for classification the mean and range over seeds. The lower bound of Theorem~\ref{thm:post}(a) constrains $\|w\|$ alone and the construction attains the upper bound with arbitrarily small embeddings, so the read-out norm is the right quantity to compare with the band; trained embedding norms are larger and are recorded in the run files. A trained norm above the band is not a contradiction, since the band bounds the minimum budget, not every solution. Panels~(b) and~(c) use one run per $L$: the classifier seed with the largest minimum gap on the full domain, and regression seed~0. In panel~(c) each faint point is the minimum winning gap over all admissible $m$ for one count $n$; the thick curve is a rolling median of those minima over $L/25$ counts, for legibility only.

\paragraph{Logit multiplier.}
The classification loss is cross-entropy on $T\,(Wh+b)$ with $\log T$ a trainable scalar initialized at $\log 10$. Scaling $(W,b)$ by $a$ and $T$ by $1/a$ leaves the loss and the prediction unchanged, so the split between $T$ and the class norms is set by initialization and the optimizer, not by the objective; we report $T=3.8\times10^3$, $7.6\times10^4$, $3.5\times10^5$ (seed means) at $L=100,300,1000$, and $T\cdot\max_c\|w_c\|=2.5\times10^4$, $3.1\times10^5$, $8.0\times10^5$. In preliminary runs without the multiplier the classifier's loss stayed near $5$ nats at $L=1000$ and it reached $61\%$ of inputs at $L=100$ within the same wall-clock budget. The plausible reason is that with class norms of order one the softmax at unit scale spreads its mass over on the order of a hundred neighbouring counts, so the loss changes little when the arg max moves by one.

\paragraph{Budget frontier.}
Panel~(d) trains regression only. Each trainable vector $v$ is parameterized as $v\cdot\min(1,B/\|v\|)$; for the learned-gain variant the gain is capped too and the normalization bias is fixed at zero. For each variant and budget we walk the length grid $10,20,50,100,200,500,1000,2000,5000,10{,}000,20{,}000$ upward until the first failure, with up to three attempts of 60\,s per cell; when a solution that passed at a smaller budget or a shorter length exists, the first attempt starts from it, which is admissible since the caps are inequalities. Table~\ref{tab:frontier} lists the results. The dotted lines in panel~(d) are the continuous versions of these guarantees. Slopes in the main text are least-squares fits of $\log L_{\max}$ on $\log B$ over $B\le1000$. At $B=3000$ the constructions guarantee $L=5120$ without normalization and $L\approx8.7\times10^4$ with the learned gain, well above the $1000$ and $10{,}000$ attained, so at that budget the experiment underestimates known feasible capacity and we exclude the row from the fits. A possible cause is that the line search stalls at the norm cap, where the parameterization is not differentiable; we have not verified this.

\begin{table}[t]
\centering\small
\caption{Largest grid length passing the full-domain rounding check under a per-vector cap $B$, next to the largest integer $L$ each construction guarantees at $d=16$ (Appendix~\ref{app:one-knob} and Theorem~\ref{thm:post}). The next grid point above each attained value failed. Attained values are lower bounds on capacity; the $B=3000$ row is excluded from the slope fits.}
\label{tab:frontier}
\begin{tabular}{r rr rr rr}
\toprule
& \multicolumn{2}{c}{no LN} & \multicolumn{2}{c}{bare post-LN} & \multicolumn{2}{c}{post-LN, learned gain}\\
$B$ & attained & constr. & attained & constr. & attained & constr.\\
\midrule
10   & 10   & 15   & 20   & 11  & 100    & 43\\
30   & 50   & 50   & 50   & 24  & 500    & 186\\
100  & 100  & 169  & 100  & 54  & 2000   & 928\\
300  & 500  & 510  & 200  & 112 & 5000   & 4016\\
1000 & 1000 & 1705 & 500  & 251 & 10000  & 20000\\
3000 & 1000 & 5120 & 1000 & 524 & 10000  & 86535\\
\bottomrule
\end{tabular}

\end{table}

\paragraph{Further observations.}
The trained classifier's arc on the letter-free slice is $1.7$ rad, more than the quarter circle of the construction. On that slice the representation direction follows $A+nC$, so the angle swept from $n=0$ to $L$ is the angle between $A$ and $A+LC$, which approaches the angle between $A$ and $C$ only as $L\to\infty$; the construction sets that limiting angle to $\pi/2$, while the trained model chooses an obtuse one, which places the fastest-turning part of the arc mid-range (the hump in panel~b). The winning gaps in panel~(c) are nearly flat in $n$ although the angular gaps are not: the head uses longer class vectors where the arc is tight and shorter ones where it is loose, within a factor of ten, and the per-count gaps are approximated to within a small factor by $\|w_n\|\sqrt d\,\Delta_n^2/2$ computed from the learned geometry. On the regression side, the norm varies widely across seeds (at $L=300$: $1.3\times10^3$, $7.2\times10^5$, $7.6\times10^5$, all passing) because the squared error keeps falling as the arc shrinks, so the optimizer continues along the family of rounding-correct solutions until the stopping rule fires. In panel~(d), bare post-LN sits about twice above its construction at every budget, inside the range Theorem~\ref{thm:post}(a) allows; the learned-gain variant is above its construction up to $B=300$ and one grid step below it at $B=1000$, where the construction guarantees exactly $20{,}000$ but that cell failed.

\subsection{Representation Crowding in Large Language Models}
\label{representation-crowding-LLM}
Recent work reports representation crowding in LLMs \citep{bai2026count}.
To independently validate this qualitative phenomenon in a different model,
we analyze frozen residual-stream representations of
\texttt{Qwen/Qwen3-4B-Instruct-2507}. We use zero-shot prompts
rendered with the model's official chat template. Each prompt asks the model
to count a target token in a randomly permuted sequence of $L=256$
single-token items containing $n$ copies of the target and $L-n$ copies of a
distractor. The geometry results below use counts
$n\in\{1,\ldots,100\}$.

At each count and seed, there are 48 prompt configurations: the four ordered
target/distractor pairs \texttt{E/F}, \texttt{G/H}, \texttt{I/J}, and
\texttt{K/L}, crossed with twelve instruction forms. To introduce modest
linguistic variation, we prepend one of the tokens \texttt{Example},
\texttt{Trial}, \texttt{Sample}, or \texttt{Sequence}, and vary the counting
instruction among \texttt{Count}, \texttt{How many}, and \texttt{Number of}, e.g., \texttt{Sample. Count E in: E F E \ldots\string\n\#}. The
three seeds randomize item orders and the assignment of these configurations.
Thus there are $100\times48=4{,}800$ prompts per seed and 14,400 in total.

For each rendered chat prompt, let $T$ be its token length. We read the state
at token index $T-1$ (zero-indexed). We extract
\texttt{blocks.0.hook\_resid\_pre} and
\texttt{blocks.\{0,\ldots,35\}.hook\_resid\_post}, giving 37 residual
locations. The model forward pass is in bfloat16 and the saved activations are
float32. In particular, the final location is
\texttt{blocks.35.hook\_resid\_post}, the residual directly after the 36th
transformer block and \emph{before} the model's final RMSNorm; it is not the
Hugging Face \texttt{last\_hidden\_state}, which includes that final
normalization.

\subsubsection{Representation crowding}
To measure crowding without assuming a particular linear readout, for each
layer $\ell$, token pair $p$, and count $n$, we form the centroid
$\bar{h}_{\ell,p}(n)$ across surface forms. We then measure the Euclidean gap
between adjacent counts,
\begin{equation}
    \Delta_{\ell,p}(n)
    = \left\lVert
        \bar{h}_{\ell,p}(n+1)-\bar{h}_{\ell,p}(n)
      \right\rVert_2,
\end{equation}
and report the ratio of the mean gap among counts $91$--$100$ to that among
counts $1$--$10$. Each window contains ten centroids and nine within-window gaps ($n=91,\ldots,99$ and $n=1,\ldots,9$, respectively). A ratio below one means that adjacent high-count
representations are closer together than adjacent low-count representations.

Figure~\ref{fig:qwen3-crowding-rank} shows that the high/low gap ratio decreases
through the network and remains well below one in later layers. At the final
residual location, the ratio is $0.448$ on average across seeds and token pairs
(seed range $0.439$--$0.454$). Adjacent counts in the $91$--$100$ window have less than half the
separation of those in the $1$--$10$ window.

We additionally measure whether this compression is confined to a single
direction. For the ten centroids in a count window, let $\mu_i$ be the
eigenvalues of their centered Gram matrix and
$q_i=\mu_i/\sum_j\mu_j$. When $\sum_j\mu_j>0$, we report the entropy effective rank
\begin{equation}
    r_{\mathrm{eff}}
    = \exp\!\left(-\sum_{i:q_i>0}q_i\log q_i\right).
\end{equation}
If the centroids coincide, this normalization is undefined; such locations must be omitted rather than assigned a rank. Ten centered centroids have rank at most nine.
At the final layer, $r_{\mathrm{eff}}$ is $1.300$ for counts $1$--$10$ and
$6.764$ for counts $91$--$100$, averaged across seeds and token pairs. The
high-count centroid trajectory is more curved or dispersed across directions,
whereas the low-count trajectory is nearly one-dimensional. The coexistence of shrinking gaps and a curved count trajectory is
qualitatively consistent with the crowding mechanism in our minimal models,
but an effective rank of $6.764$ is not evidence for a one-dimensional or planar arc. This is not a
test of the same mechanism: Qwen3 has many attention heads, MLPs,
residual connections, positional embeddings, and normalization layers. In addition, these prompts fix $n+m=256$, whereas the theoretical spacing plots use $m=0$. With equal space and letter scores, our unnormalized model has equally spaced representations along a fixed-total-length slice, so the observed Qwen crowding does not follow directly from the softmax segment formula.

\begin{figure}[t]
    \centering
    \includegraphics[width=0.5\linewidth]{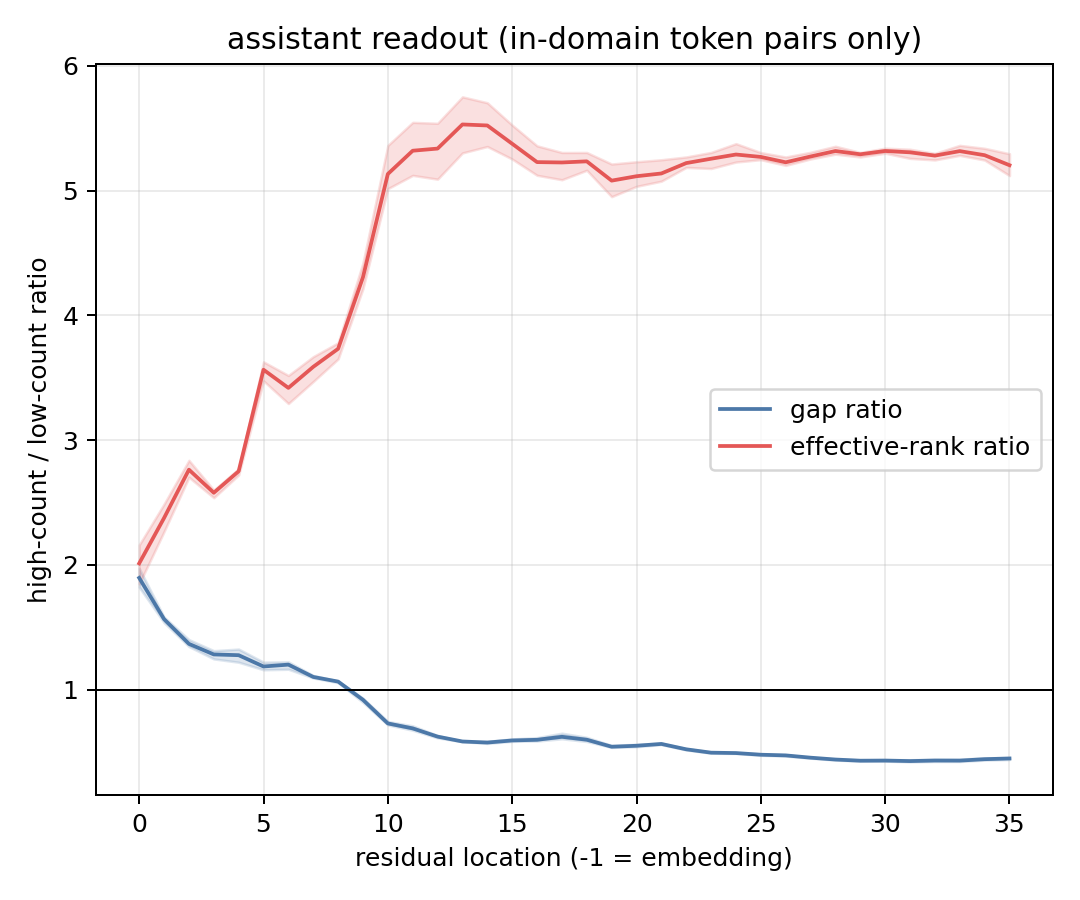}
    \caption{Layerwise geometry of the zero-shot assistant-boundary
    representation at $L=256$. Blue shows the mean adjacent-centroid gap for
    counts $91$--$100$ divided by that for counts $1$--$10$; red shows the
    corresponding effective-rank ratio. Lines are means over three seeds after
    averaging four fixed token pairs, and shaded regions show the seed range.
    The horizontal line denotes equality between the high- and low-count
    windows.}
    \label{fig:qwen3-crowding-rank}
\end{figure}

Figure~\ref{fig:qwen3-count-pca} provides a complementary visualization of the
count trajectory. The path becomes increasingly curved through the network,
and the high-count points become visibly denser in the later layers.

\begin{figure}[t]
    \centering
    \includegraphics[width=\linewidth]{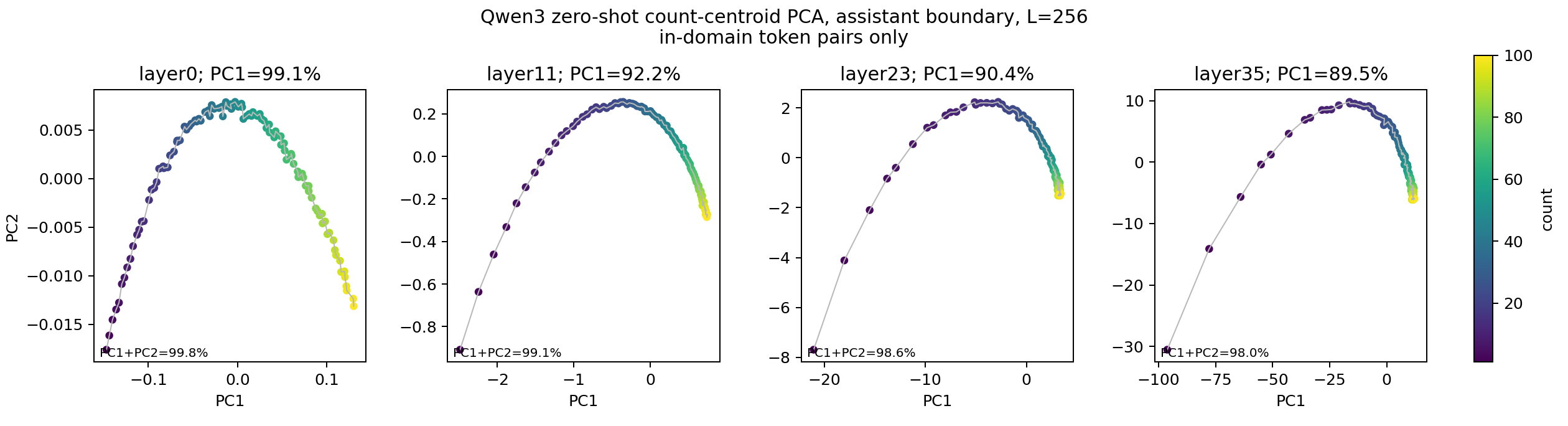}
    \caption{PCA of zero-shot assistant-boundary count centroids for
    $n=1,\ldots,100$ at four residual locations. Centroids are first averaged
    across prompt configurations and the three seeds, and color denotes count.
    Each panel has its own PCA basis and axis scale, so only within-panel
    geometry should be interpreted.}
    \label{fig:qwen3-count-pca}
\end{figure}

\subsubsection{Probing}

We next test how easily the count can be read out from the assistant-boundary
representation. At each residual location $\ell$, we fit a ridge-regularized
linear probe
\begin{align}
    \widehat{n}_{\ell,\lambda}(x)
    &=w_{\ell,\lambda}^{\top}h_{\ell}(x)+b_{\ell,\lambda},\\
    (w_{\ell,\lambda},b_{\ell,\lambda})
    &\in\operatorname*{arg\,min}_{w,b}
      \left\{\frac{1}{|\mathcal{D}_{\mathrm{fit}}|}
      \sum_{(x,n)\in\mathcal{D}_{\mathrm{fit}}}
      \bigl(w^{\top}h_{\ell}(x)+b-n\bigr)^2
      +\lambda\|w\|_2^2\right\}.
\end{align}
where the intercept is not regularized.

For each seed, we fit candidate probes at each of the 37 residual locations.
Because residual representations at different depths can have different
activation scales, we select regularization on a scale-normalized grid rather
than using a common absolute penalty. Specifically, if $X_c$ is the centered
fitting representation matrix with $N$ examples, we parameterize the penalty
as
\[
    \lambda
    = \lambda_{\mathrm{rel}}\,
      \lambda_{\max}\!\left(\frac{X_c^\top X_c}{N}\right),
\]
where $\lambda_{\max}$ denotes the largest empirical covariance eigenvalue.
We select $\lambda_{\mathrm{rel}}$ from the 34-value grid
\[
    \{0\}\cup\left\{10^{-12+16j/32}:j=0,\ldots,32\right\}.
\]
This normalization makes a given grid value correspond to a comparable
regularization scale across residual locations. Centering,
eigendecompositions, and ridge fits are performed in float64; the intercept is
fit separately and is not regularized. The absolute penalty is recomputed when
the selected probe is refit on the larger development set.

For each seed and count, the 48 prompt configurations are divided into 24
fitting, 8 validation, and 16 IID evaluation configurations. The split unit is
a complete configuration---the joint choice of an ordered
target/distractor pair and an instruction prefix---rather than merely a new
item ordering. We jointly select the residual location and
$\lambda_{\mathrm{rel}}$ by rounded exact accuracy on the validation
configurations, refit the selected probe on all $24+8=32$ development
configurations, and evaluate it once on the 16 held-out configurations.
Selection and refitting use the full count domain $n=0,\ldots,256$;
Figure~\ref{fig:qwen3-probe-by-count} displays the
$n=1,\ldots,100$ evaluation slice. We define
$\operatorname{round}(z)=\lfloor z+1/2\rfloor$, so rounded exact accuracy is
$\one\{\operatorname{round}(\widehat n)=n\}$.

For all three seeds, validation selects the output of the first transformer
block and relative ridge coefficient $10^{-7}$. Its overall IID rounded
accuracy on the full $L=256$ task is $0.666$ (seed range $0.634$--$0.687$).
Accuracy is highest in the low-count
window: it is $0.765$ for counts $1$--$10$, then broadly declines with count,
reaching $0.623$ for counts $91$--$100$ (Figure~\ref{fig:qwen3-probe-by-count}).
The $14.2$ percentage-point difference is consistent with, but does not establish a causal link to, the geometric observation
that adjacent high-count representations are more crowded.

\begin{figure}[t]
    \centering
    \includegraphics[width=0.5\linewidth]{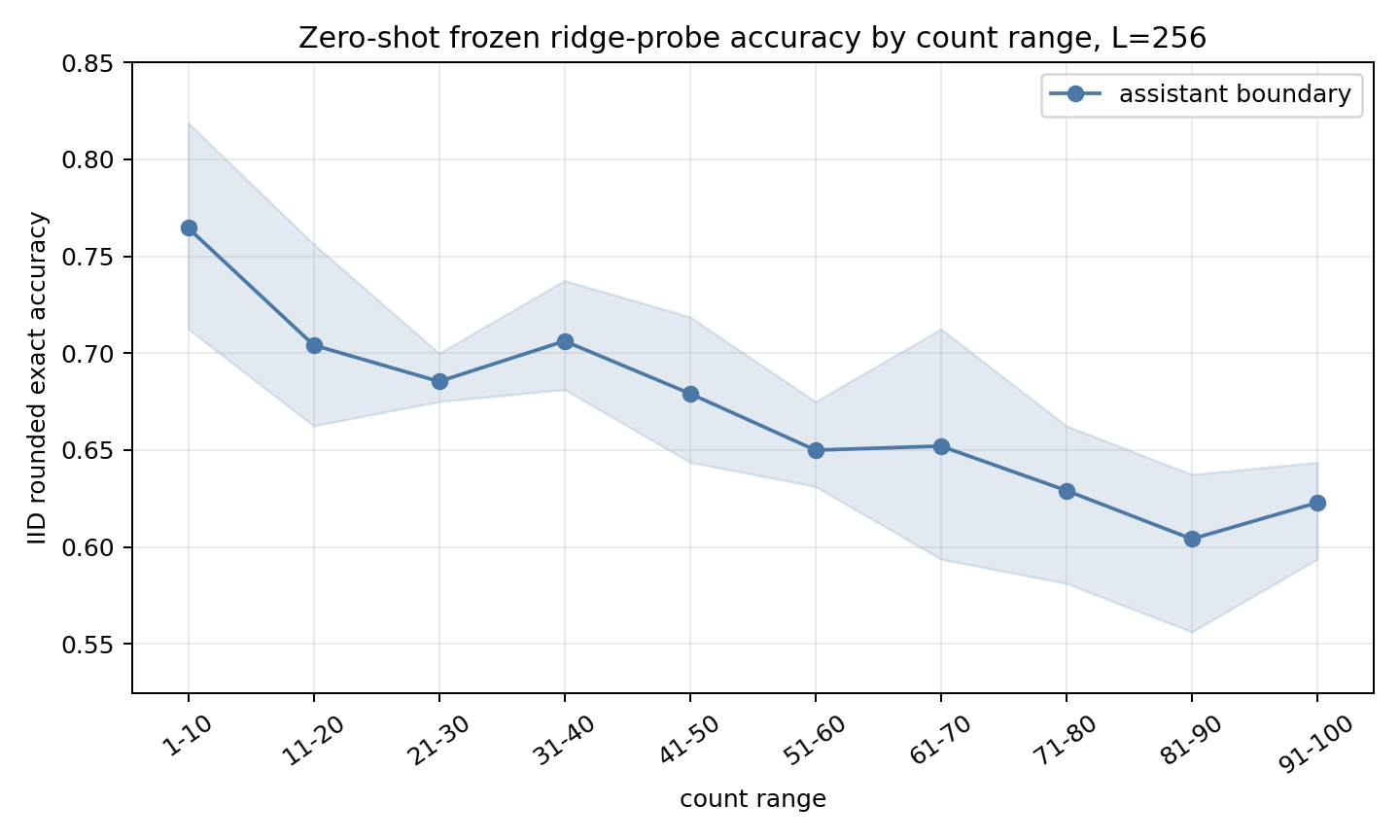}
    \caption{IID rounded exact accuracy of the validation-selected zero-shot
    ridge probe, binned into ten-count ranges. Points show the mean across three seeds and shading shows
    the seed range.}
    \label{fig:qwen3-probe-by-count}
\end{figure}

\paragraph{Regularization--norm tradeoff.}
The crowding results suggest that resolving neighboring counts may require a
large readout norm. To test this directly, we fix the representation to the
assistant boundary after the first transformer block and sweep the ridge
coefficient further while refitting the probe on all 32 development variants. As
regularization is weakened, the probe norm increases and the attainable
held-out accuracy initially improves (Figure~\ref{fig:qwen3-ridge-norm}). At
$L=256$, the smallest norms among the sampled probes reaching $50\%$ and $60\%$
accuracy average $1{,}202$ and $1{,}542$, respectively; reaching $65\%$
has a smallest sampled norm averaging $1{,}711$ among the two seeds that attain this
threshold. The highest observed accuracy along the path is $0.671$ at mean
norm $2{,}341$.

This exposes a practical tension. Regularization is useful for probes of real
LLM representations because the high-dimensional residual stream contains
many task-irrelevant directions and finite-sample variation. However, overly
strong regularization prevents the probe from acquiring the large norm needed
to amplify the small differences between crowded count representations. The
best observed solution along this sampled path is weakly, but not completely, regularized: the
unregularized probe has a still larger mean norm of $3{,}440$ but lower accuracy
of $0.633$, consistent with overfitting. These native-coordinate readout norms
provide evidence for a norm--accuracy tradeoff along the sampled ridge path, not a lower bound over all probes. The best held-out accuracy on this path is exploratory; it is separate from the validation-selected evaluation above.

\begin{figure}[t]
    \centering
    \includegraphics[width=0.62\linewidth]{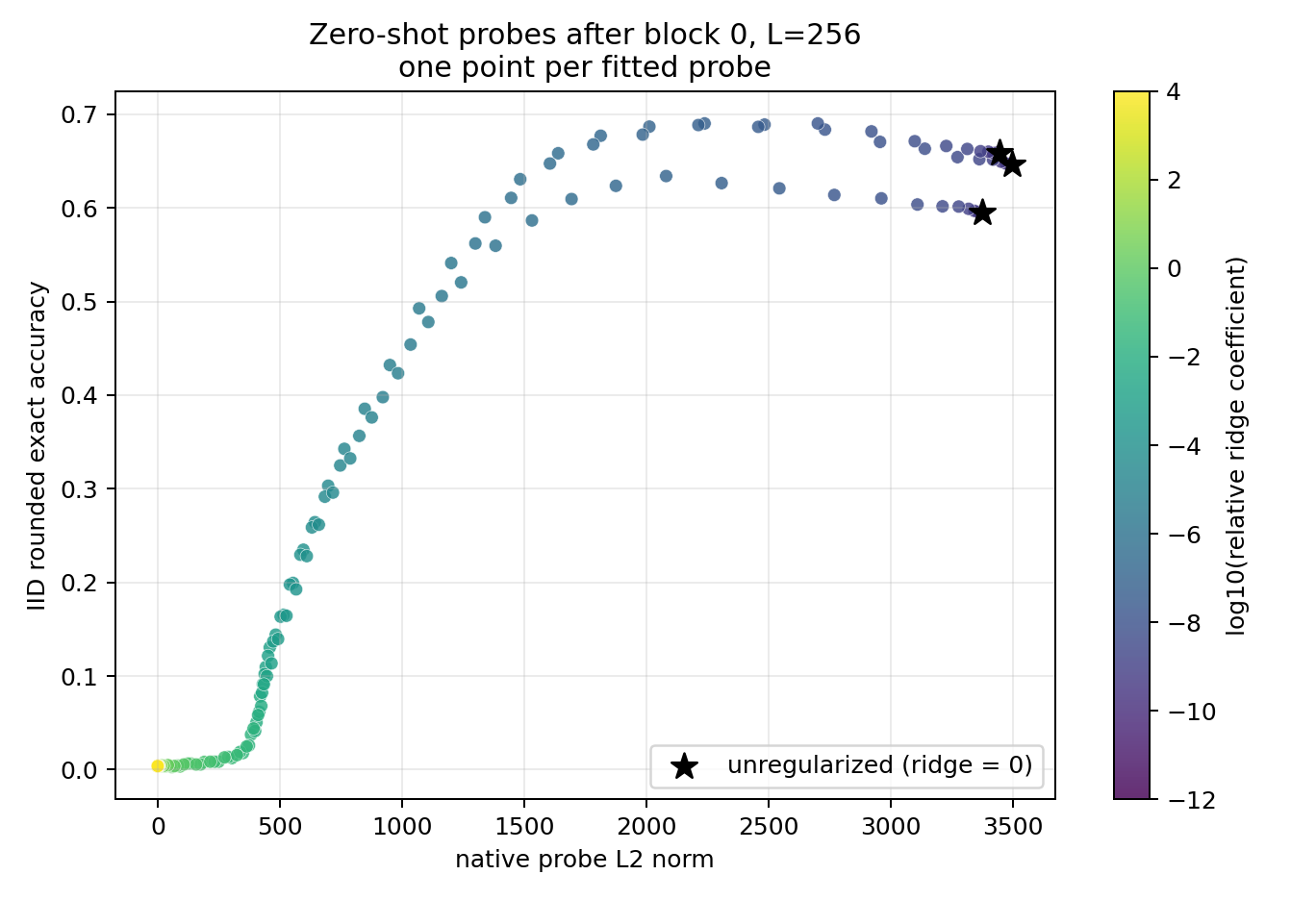}
    \caption{Regularization--norm tradeoff for zero-shot probes at the
    assistant boundary after the first transformer block. Each colored point
    is one fitted probe: horizontal position gives its native $\ell_2$ norm,
    vertical position gives held-out rounded accuracy, and color gives the
    relative ridge coefficient. Black stars denote the three unregularized
    probes.}
    \label{fig:qwen3-ridge-norm}
\end{figure}

\end{document}